\newtheorem{prob}{Problem}
\newtheorem{thm}{Theorem}[section]{\bfseries}{\itshape}
\newtheorem{lem}{Lemma}[section]{\bfseries}{\itshape}
\newtheorem{prop}{Proposition}[section]{\bfseries}{\itshape}
\newtheorem{rema}{Remark}[section]{\bfseries}{\itshape}
{\bfseries}{\itshape}
\newtheorem{defi}{Definition}[section]{\bfseries}{\itshape}
{\bfseries}{\itshape}
\DeclareMathOperator*{\argmax}{arg\,max}
\DeclareMathOperator*{\argmin}{arg\,min}
\DeclareMathOperator{\margin}{margin}
\DeclareMathOperator{\KL}{D_{KL}}
\DeclareMathOperator{\JS}{JSD}
\DeclareMathOperator{\TV}{TV}
\renewcommand{\epsilon}{{\varepsilon}}
\newcolumntype{b}{X}
\newcolumntype{s}{>{\hsize=.5\hsize}X}
\newcolumntype{t}{>{\hsize=.3\hsize}X}
\begin{document}

\title{Machine Unlearning via Information Theoretic Regularization}

\author{\name Shizhou Xu \email shzxu@ucdavis.edu \\
       \addr Department of Mathematics\\
       University of California Davis\\
       Davis, CA 95616-5270, USA
       \AND
       \name Thomas Strohmer \email strohmer@math.ucdavis.edu \\
       \addr Department of Mathematics\\
       Center of Data Science and Artificial Intelligence Research\\
       University of California Davis\\
       Davis, CA 95616-5270, USA}

\maketitle

\begin{abstract}
How can we effectively remove or “unlearn” undesirable information, such as specific features or the influence of individual data points, from a learning outcome while minimizing utility loss and ensuring rigorous guarantees? We introduce a unified mathematical framework based on information-theoretic regularization to address both data-point unlearning and feature unlearning. For data-point unlearning, we introduce the \emph{Marginal Unlearning Principle}, an auditable and provable framework. Moreover, we provide an information-theoretic unlearning definition based on the proposed principle and provable guarantees on sufficiency and necessity of marginal unlearning. We then show that the proposed framework provides a natural solution to the marginal unlearning problem and yields auditable high-probability marginal-unlearning guarantees. For feature unlearning, the framework applies to deep learning with flexible training objectives. By combining flexibility in learning objectives with simplicity in regularization design, our approach is highly adaptable and practical for a wide range of machine learning and AI applications. From a mathematical perspective, we provide a unified analytic solution to the optimal feature unlearning problem with a variety of information-theoretic training objectives. Our theoretical analysis reveals intriguing connections between machine unlearning, information theory, optimal transport, and extremal sigma algebras. Numerical simulations support our theoretical findings.
\end{abstract}

\begin{keywords}
Machine Unlearning, Feature Unlearning, Data Privacy, Information-Theoretic Regularization, Optimal Transport, Marginal Unlearning
\end{keywords}

\vfill
\newpage

\tableofcontents
\newpage

\section{Introduction}\label{S:Intro}
As machine learning systems are deployed in sensitive and scientific domains, it becomes essential to remove the influence of designated attributes or individual records from \emph{trained} models while preserving utility and providing rigorous guarantees. Naively deleting attributes or data points from the raw data, or masking them in model outputs, rarely suffices: their effects often persist through correlated proxies, latent representations, and model parameters. Fully retraining on the corrected training data set becomes impractical at scale, especially considering the increasing size of AI models and the potential sequence of unlearning requests. \emph{Machine unlearning} offers a principled alternative, enabling models to ``forget'' specified information for privacy, fairness, legal compliance, or scientific correction. The central challenge is to design provable, auditable procedures that achieve the desired removal with minimal utility loss, while avoiding unnecessary end-to-end retraining when possible.

In this work, we focus on two complementary problem settings, motivated by distinct practical needs:

\begin{itemize}[leftmargin=*, itemsep=2pt]
\item \textbf{Feature unlearning (removing attribute influence).}
Redacting explicit tokens rarely suffices when proxies remain. For example, a recruiting model favored male applicants because the training data were predominantly male; removing explicit gender markers failed due to correlated features (e.g., all-women colleges)~\cite{dastin2022amazon,christianalignment}. Feature unlearning aims to neutralize the \emph{influence} of a designated attribute \(Z\) on what the model releases, denoted generically by \(S\), even through proxies.

\item \textbf{Marginal data-point unlearning (removing a record’s footprint).} Under GDPR’s “right to be forgotten’’~\cite{GDPR2016a} and California’s CCPA~\cite{bonta2022california}, individuals may request deletion of their data. Naively deleting records and fine-tuning can leave a detectable footprint in the parameters. Retraining from scratch eliminates a record's \emph{unique contribution} while preserving performance attributable to retained data~\cite{bourtoule2021machine}, but is costly. Marginal data-point unlearning seeks efficient, auditable post-hoc procedures that approximate the retrain-on-retain behavior while maintaining utility.
\end{itemize}

Although originally motivated by privacy and fairness, machine unlearning now plays a broader role in various domains, such as AI-assisted scientific discovery and autonomous experimentation. In these settings, unlearning is used to remove the influence of corrupted or drifting measurements (e.g., broken sensors), instrument-specific artifacts, and other unwanted dependencies (e.g., sensor or batch effects), thereby correcting models and pipelines without costly retraining and enabling reliable, on-the-fly scientific discovery.

For intuition, it is helpful (but not necessary) to think of $X$ as a data matrix (e.g., an EHR table or consumer database), where rows are users and columns are attributes. Feature unlearning removes the information or influence of the columns $X_{[:,j\in z]}$\footnote{If the feature $Z$ to be unlearned is not explicitly present in $X$, one can still remove information in $X$ that is \emph{correlated} with $Z$ by working with the joint pair $(X,Z)$ as the effective input.} corresponding to $Z$, whereas data-point unlearning removes the information contributed by rows $X_u$ to the remainder $X_{r}:=X_{[i\notin u,:]}$.

\noindent\textbf{Problem statement.}
We introduce a generic information-theoretic framework that covers both unlearning settings:
\begin{quote}
    \emph{Given data \(X\), an unlearning variable \(Z\), and a target variable \(Y\), how can we construct a released outcome \(S\) that exposes minimal information about \(Z\) while preserving as much task-relevant information about \(Y\) as possible?}
\end{quote}
Here, \(S\) may live in the data space, an intermediate representation space, or the output space. In feature unlearning, \(Z\) denotes the feature, source, or intervention label to be neutralized, and the released object takes the form \(S=f(X,Z)\). In marginal data-point unlearning, \(Z\) denotes the latent
source label distinguishing the retain distribution from the retain-plus-unlearn mixture, and the released object takes the form \(S=f(X)\). Section~\ref{S:Definition} formalizes these two cases.

\subsection{Paper organization and contribution}\label{s1:contribution}

This paper makes the following contributions:

\begin{itemize}[itemsep=2pt, topsep=2pt, leftmargin=*]

\item \textbf{Marginal unlearning principle.}
We introduce the \emph{Marginal Unlearning Principle} and a corresponding formal unlearning definition (Definition~\ref{def:epsilon_MU}) in Section \ref{s2:data_unlearning}. Inspired by ``blur + reinforce'' memory suppression mechanisms in cognitive neuroscience, this principle offers an inference-based, post-hoc guarantee for data point unlearning. We show that the proposed principle and the resulting definition guarantee (i) \emph{practically checkable}, since it depends only on the released model and the original dataset in Section \ref{s4:data_guarantee}, (ii) \emph{sufficient} for a ``good-utility'' model retrained from scratch on the retain data set, under utility requirements in Section \ref{s2:sufficiency}, and (iii) \emph{necessary} for a ``robust'' model retrained from scratch on the retain dataset unless the retrained model violates smoothness/Lipschitz regularity, which would harm generalization (Lemma~\ref{l:DU_as_condition_for_good_model}) in Section \ref{s2:necessity}.

\item \textbf{Unified information-theoretic framework and practical algorithms.}
We formulate unlearning as an information-theoretic trade-off, analogous to a rate--distortion objective (Section~\ref{s3:problem_setting}):
\begin{equation}\label{eq:heuristic_objective}
    \inf_{f} \quad
    (1-\lambda)\underbrace{\mathcal{C}(Y;S,Z)}_{\text{utility: memory reinforcement}}
    \;+\;
    \lambda\,\underbrace{I(S';Z)}_{\text{unlearning: memory blur}},
    \qquad \lambda\in[0,1],
\end{equation}
where \(S\) denotes the released learning outcome and \(S'\) denotes the
auditing object on which unlearning is certified. The term \(I(S';Z)\)
quantifies the information about the unlearning variable \(Z\) exposed by the
audited object. In feature unlearning, \(Z\) is an observed attribute, source,
or intervention label, and typically \(S'=S=f(X,Z)\). The utility
\(\mathcal{C}(Y;S,Z)\) measures task-relevant information preserved
within each \(Z\)-slice, while the exposure penalty \(I(S;Z)\) controls what
\(S\) reveals about \(Z\). In marginal data-point unlearning, \(Z\) is instead
a latent source label distinguishing the retain-only reference from the
retain-plus-unlearn mixture. It is used only for indistinguishability
certification, and utility is evaluated from the released model output alone. In this case, the utility term specializes to a non-sliced cost \(\mathcal{C}(Y;S)\).
We provide practical regularization algorithms for both settings
(Algorithms~\ref{alg:feature_reg} and~\ref{alg:point_reg},
Section~\ref{S:Algorithm}) and evaluate their empirical behavior on tabular
and image data.

\item \textbf{Analytic solution and theoretical guarantees.}
We first show that bounding the ``compression rate'' \(I(S';Z)\) directly
implies high-probability guarantees for feature unlearning
(Theorem~\ref{thm:mi_tv_general}) and marginal data-point unlearning
(Lemma~\ref{l:epsilon_DU_bound_mutual_info},
Theorem~\ref{thm:DU-Guarantee-via-Compression-Rate}). This provides
actionable guidance for tuning \(\lambda\) to meet a target leakage level while
preserving learning utility. Furthermore, in the hard-independence limit
\(S'\perp Z\), we identify an analytic \(\mathcal W_2\)-barycenter solution
for feature unlearning in the data-space setting under a class of utility objectives. The barycentric representation generates a finest lifted
sigma-algebra among admissible outcomes
(Lemma~\ref{l:Finest_Sigma_Algebra}), yielding maximal lifted utility while
enforcing \(S'\!\perp\! Z\) (Theorem~\ref{th:optimal_solution}). We also
provide a practical implementation of the analytic solution as
Algorithm~\ref{alg:bary} in Section~\ref{s5:analytic_feature_algorithm}.

\item \textbf{Exemplary numerical experiments.}
We evaluate both feature unlearning and data-point unlearning across tabular and image modalities. Section~\ref{s6:data_unlearning_experiment} reports data-unlearning results on tabular data and MNIST; Sections~\ref{s6:feature_tabular_experiment} and~\ref{s6:analytic_feature_experiment} report feature-unlearning results on tabular benchmarks and CelebA, respectively.

\item \textbf{Intriguing mathematical connections.}
Our proposed framework uncovers compelling connections between machine unlearning, information theory, optimal transport, and extremal sigma algebras.
\end{itemize}

\paragraph{Organization.}
The remainder of the paper is organized as follows.
Section~\ref{S:Intro} reviews related work and establishes notation.
Section~\ref{S:Definition} formulates the principles for feature and
data-point unlearning and provides definition-related theoretical guarantees.
Section~\ref{S:Into_Framework} presents the unified information-theoretic
formulation that underpins both settings. Section~\ref{S:Theory} develops the
main framework-related theoretical guarantees and the analytic barycenter
solution for optimal feature unlearning. Section~\ref{S:Experiment} provides
empirical evidence across different data modalities and datasets.

\subsection{Related work}\label{s1:related_work}

Research on unlearning comprises two main threads: \emph{data unlearning} (removing the influence of specific training points) and \emph{feature unlearning} (removing the influence of sensitive attributes). We briefly connect to privacy and fairness where relevant, but the focus of this paper is unlearning.

\paragraph{Data unlearning (machine unlearning).}
The canonical \emph{exact} unlearning standard requires the released model to match the distribution of a model retrained from scratch on the retain data set (the ``anchor'' or ``retain set''). Systems-oriented approaches realize this exactly via sharded pipelines or statistical caching~\citep{cao2015unlearning,bourtoule2021machine,ginart2019making}, though often at significant infrastructure cost. To avoid full retraining, recent literature adopts \emph{approximate} guarantees, relaxing the requirement to distributional proximity (e.g., in total variation or KL-divergence) between the unlearned model and the retrained anchor. A diverse array of algorithmic solutions has emerged to meet this criterion, including influence-based updates~\citep{guo2019certified,sekhari2021remember}, performance-preserving objectives like PUMA~\citep{wu2022puma}, and parameter scrubbing~\citep{golatkar2020eternal}. While these methods offer novel optimization perspectives, framing unlearning as error minimization, targeted repair, or information erasure, they ultimately operate under the shadow of the anchor-based definition. Consequently, verifying their guarantees requires comparing the unlearned model against a stochastic, \emph{unknown} retrained model. This dependency renders these methods difficult to audit or certify~\citep{thudi2022auditable}.

In contrast, our \emph{marginal unlearning} framework breaks this dependency. We propose an inference-based criterion that is checkable solely via observable outputs and the original dataset, providing a practical path to auditable unlearning while maintaining rigorous population-level guarantees. In parallel, a companion work Forgetting-MarI~\cite{xu2025forgetting} applies the marginal unlearning principle to LLMs (Large Language Models), providing large-scale empirical support. This work focuses on providing the mathematical foundations of the principle, such as the sufficiency and necessity guarantees.

\paragraph{Feature unlearning.}
Feature unlearning removes a sensitive attribute’s information from predictions or representations, via adversarial or information-theoretic training~\citep{warnecke2021machine,han2024unlearning}. This connects to outcome-independence in fair ML, such as \emph{statistical parity}~\citep{dwork2012fairness}. Optimal-transport formulations characterize (under regularity) optimally fair predictors/representations via Wasserstein barycenters~\citep{chzhen2020fair,gouic2020projection,xu2023fair}. In computer vision, related \emph{concept removal} methods aim to erase targeted concepts from generative or recognition models~\citep{gandikota2023erasing}.

\paragraph{Information-theoretic formulations.}
Classical information-bottleneck-style compression and rate-distortion frameworks target \emph{all} information about a point or attribute (e.g., minimizing $I(X;\hat Y)$)~\citep{wang2023machine,han2024unlearning}. Our formulation instead targets the \emph{marginal} effect of adding/removing the point via an information-theoretic regularizer on an auxiliary pair $(S',Z)$ defined in Section \ref{s3:problem_setting}. Because our probabilistic guarantee (Definition~\ref{def:epsilon_MU}) is inference-based and does not rely on an unknown retraining oracle, it yields practical, auditable conditions while remaining compatible with diverse utilities.

\paragraph{Data \& AI privacy}
Data-point unlearning is a post hoc complement to preventative privacy techniques such as anonymization and \emph{differential privacy} (DP)~\citep{dwork2006differential,dwork2014dpbook}. Privacy protection techniques, such as anonymization (e.g., $k$-anonymity) and DP, offer training-time protection by bounding the effect of any single record on learned parameters (e.g., DP-SGD)~\citep{abadi2016dpsgd} or learned outcomes. In contrast, unlearning provides \emph{remediation} after training, when sensitive data were already incorporated or when tail-event leakages are discovered via auditing (e.g., membership inference and data extraction~\citep{shokri2017membership,yeom2018mia,carlini2021extracting}). Thus, unlearning \emph{complements} DP by enabling targeted removal after deployment, rather than replacing it.

\paragraph{ML \& Algorithmic fairness.}
Feature unlearning aligns with \emph{statistical parity} (outcome independence from a protected attribute)~\citep{dwork2012fairness}. Related notions include \emph{equalized odds} (conditional independence given the true label)~\citep{hardt2016equality} and counterfactual fairness, but here we focus on \emph{unconditional} feature unlearning; conditional variants are left to future work.

\paragraph{Fine-tuning \& robustness}
Unlearning complements fine-tuning in improving model reliability and longevity: fine-tuning \emph{adds/adapts} knowledge (full or parameter-efficient updates such as adapters, prompt tuning, LoRA/QLoRA)~\citep{houlsby2019adapter,pfeiffer2020adapter,lester2021prompt,hu2022lora,dettmers2023qlora}, while unlearning \emph{removes} misinformation, outdated facts, or undesired influences without full retraining. In practice, combining fine-tuning with unlearning allows AI models to stay current and responsible without rebuilding from scratch. This perspective is complementary (orthogonal) to continual-learning methods that mitigate \emph{catastrophic forgetting} during \emph{addition} of tasks~\citep{kirkpatrick2017ewc}. Here, the goal is principled \emph{removal}.

Moreover, the proposed marginal-information idea can guide robust fine-tuning: update only on the \emph{unique} signal in new data (i.e., the information contributed beyond what is already captured by the retain set). By focusing on this marginal information, the fine-tuned model is less prone to over-training and catastrophic interference with existing knowledge. Conceptually, our marginal approach parallels the residual learning in ResNet \citep{he2016deep}: just as residuals allow a model to learn only ``what is new,'' rather than destabilizing what has already been learned.

\paragraph{Memory suppression studies in cognitive psychology and neuroscience.}
The proposed marginal-information unlearning principle aligns with established mechanisms in cognitive psychology and neuroscience that implement a two-phase “blur + reinforce’’ process to reduce the diagnostic influence of specific memories while preserving alternatives. In the \emph{Think/No-Think} paradigm, repeated suppression of cue-elicited retrieval engages inhibitory control and reliably diminishes later accessibility, effectively lowering the discriminability of responses that would otherwise reveal the suppressed trace \citep{AndersonGreen2001,AndersonHanslmayr2014}. \emph{Directed forgetting} (list/item methods) achieves a comparable reduction via context change and selective rehearsal, making earlier material less accessible at test and thus functionally “blurring’’ its contribution to observed behavior \citep{SahakyanKelley2002}. \emph{Reconsolidation-update} procedures (retrieval–extinction) transiently destabilize reactivated memories and incorporate non-reinforcing information, attenuating the unique predictive impact of the original association \citep{Schiller2010,NaderSchafeLeDoux2000,Xue2012}. Complementarily, \emph{retrieval practice} strengthens desired knowledge and improves later performance on targeted content \citep{KarpickeBlunt2011}. Viewed through our lens, these blur mechanisms reduce the ``membership signal'', analogous to lowering $I(S';Z)$, while reinforcement preserves task utility, providing intuitive and empirical support for our auditable, information-theoretic formulation of unlearning.

\subsection{Tools and notation}\label{s1:notation}

We collect the basic notation used throughout the paper.

\paragraph{Probability.}
Let \((\Omega,\mathcal{F},\mathbb{P})\) be a probability space and \((\mathcal{X},\mathcal{B}_{\mathcal{X}})\) a measurable state space.  
A (measurable) random variable \(X:\Omega\to\mathcal{X}\) has \emph{law} (distribution) \(\mathcal{L}(X):=\mathbb{P}\circ X^{-1}\), i.e., \(\mathcal{L}(X)(A)=\mathbb{P}(X\in A)\) for all \(A\in\mathcal{B}_{\mathcal{X}}\).  
For random variables \(X\) and \(Y\) taking values in \(\mathcal{X}\) and \(\mathcal{Y}\), we write \(X\perp Y\) if
\[
\mathbb{P}(X\in A,\,Y\in B)=\mathbb{P}(X\in A)\,\mathbb{P}(Y\in B)\quad\text{for all }A\in\mathcal{B}_{\mathcal{X}},\ B\in\mathcal{B}_{\mathcal{Y}}.
\]
For information-theoretic quantities, \(H(\cdot)\) denotes (Shannon) entropy, \(I(\cdot;\cdot)\) mutual information (MI), and \(D_{\mathrm{KL}}(\cdot\Vert\cdot)\) Kullback--Leibler (KL) divergence.  
If \(X\) is discrete with pmf \(p_X\), then \(H(X)=-\sum_x p_X(x)\log p_X(x)\); for absolutely continuous \(X\) with pdf \(p_X\) (w.r.t.\ a reference measure \(\mu\)), the differential entropy is \(h(X)=-\int p_X(x)\log p_X(x)\,{\rm d}\mu(x)\).  
For probability measures \(P,Q\) on \((\mathcal{X},\mathcal{B}_{\mathcal{X}})\),
\[
D_{\mathrm{KL}}(P\Vert Q)=
\begin{cases}
\displaystyle \int \log\!\Big(\frac{{\rm d}P}{{\rm d}Q}\Big)\,{\rm d}P, & P\ll Q,\\[1ex]
+\infty, & \text{otherwise}.
\end{cases}
\]
Given a pair \((X,Y)\) with joint law \(\mathbb{P}_{XY}\) and marginals \(\mathbb{P}_X,\mathbb{P}_Y\),
\[
I(X;Y)=D_{\mathrm{KL}}\!\big(\mathbb{P}_{XY}\,\Vert\,\mathbb{P}_X\otimes\mathbb{P}_Y\big),
\]
and, in the discrete case, \(I(X;Y)=H(X)+H(Y)-H(X,Y)\).  
Unless stated otherwise, logarithms are natural, so information quantities are measured in nats. The total variation distance is defined as
\begin{equation*}
    \mathrm{TV}(P,Q):=\sup_{A\in\mathcal B_{\mathcal X}} |P(A)-Q(A)| = \tfrac12 \int |{\rm d}P-{\rm d}Q|.
\end{equation*}

\paragraph{Optimal transport.}
Let \((\mathcal{X},d_{\mathcal{X}})\) be a Polish metric space and \(\mathcal{P}(\mathcal{X})\) the set of Borel probability measures on \(\mathcal{X}\).  
For \(\mu,\nu\in\mathcal{P}(\mathcal{X})\), their \(p\)-Wasserstein distance is~\cite{villani2021topics}
\[
\mathcal{W}_{d_{\mathcal{X}},p}(\mu,\nu)
:=\Bigg(\inf_{\lambda\in\Pi(\mu,\nu)}\ \int_{\mathcal{X}\times\mathcal{X}} d_{\mathcal{X}}(x,x')^{p}\,{\rm d}\lambda(x,x')\Bigg)^{\!1/p},
\]
where \(\Pi(\mu,\nu)\) is the set of couplings with \(\lambda(\cdot,\mathcal{X})=\mu\) and \(\lambda(\mathcal{X},\cdot)=\nu\).  
We write \(\mathcal P_2(\mathcal X)\) for Borel probability measures with
finite second moment, and \(\mathcal{P}_{2,ac}(\mathcal{X})\subset
\mathcal P_2(\mathcal X)\) for measures that are absolutely continuous
(w.r.t.\ Lebesgue when \(\mathcal{X}\subset\mathbb{R}^d\)).
For random variables \(X_1,X_2\) taking values in \(\mathcal{X}\), we use the shorthand
\(\mathcal{W}_{d_{\mathcal{X}},p}(X_1,X_2):=\mathcal{W}_{d_{\mathcal{X}},p}(\mathcal{L}(X_1),\mathcal{L}(X_2))\).  
When the metric is clear, we write \(\mathcal{W}_p\); in particular, \(\mathcal{W}_2:=\mathcal{W}_{d_{\mathcal{X}},2}\).

\paragraph{Wasserstein barycenters.}
Given a family \(\{\mu_z\}_{z\in\mathcal{Z}}\subset\mathcal{P}_{2,ac}(\mathcal{X})\) and weights \(\lambda\in\mathcal{P}(\mathcal{Z})\), a (squared) \(\mathcal{W}_2\) barycenter is any solution~\cite{villani2021topics}
\[
\bar{\mu}\ \in\ \argmin_{\mu\in\mathcal{P}_2(\mathcal{X})}\ \int_{\mathcal{Z}} \mathcal{W}_2^2(\mu_z,\mu)\,{\rm d}\lambda(z).
\]
When Brenier maps \(T_z:\mathcal{X}\to\mathcal{X}\) from \(\mu_z\) to \(\bar{\mu}\) exist (e.g., under absolute continuity and quadratic cost), one may represent a barycentric random variable \(\bar{X}\sim\bar{\mu}\) as \(\bar{X}=T_z(X_z)\) with \(X_z\sim\mu_z\).  
Additional details and sufficient conditions (existence/uniqueness, regularity) are deferred to Appendix~\ref{a:OT_intro}.

\section{Unlearning Principles, Definitions, \& Justifications}\label{S:Definition}
In this section, we present the unlearning principles, formal definitions, and provable guarantees for both feature and data unlearning. Section~\ref{s2:feature_unlearning} states the feature-unlearning principle, gives its formal definition, and motivates its information-theoretic regularization.
Section~\ref{s2:data_unlearning} introduces the proposed marginal-unlearning principle for data-point unlearning, together with its formal and information-theoretic definitions.
Section~\ref{s2:sufficiency} establishes \emph{sufficiency}: when the retrain-on-retain model obtains small utility loss, marginal unlearning implies existing approximate unlearning guarantees. Section~\ref{s2:necessity} establishes a complementary \emph{necessity} statement: a well-regularized retrain-on-retain model cannot exhibit detectable inclusion leakage while satisfying marginal unlearning.

For clarity and intuition, we state the unlearning principles, definitions,
and justifications in the canonical prediction setting where the outcome space
coincides with the target space, i.e., \(\mathcal S=\mathcal Y\) and
\(S=\hat Y\). The same principles apply to the generic setting
\(\mathcal S\in\{\mathcal X,\mathcal V,\mathcal Y\}\), where
\(\mathcal X\) is the input data space, as in synthetic data generation or
self-supervised learning with \(\mathcal Y=\mathcal X\); \(\mathcal V\) is a
latent representation space in the representation learning; and \(\mathcal Y\) is the target space in
supervised learning.

\subsection{Feature unlearning}\label{s2:feature_unlearning}

Feature unlearning aims to erase the influence of a designated \emph{feature} \(Z\) (e.g., a protected attribute, a spurious concept, or a source/domain indicator) from what the model \emph{exposes}, while preserving task-relevant information on the retain distribution \(p^r\). In this setting, \(Z\) is observed and available to the unlearning procedure or auditor, but the released output should not reveal \(Z\). Because exposure occurs through the released predictor's output, a natural target is the \emph{output law} of \(\hat Y=f_\theta(X,Z)\). This suggests the following
{\bf feature unlearning principle:}

\begin{quote}
{\em No observer should be able to infer anything about $Z$ from the model’s outputs beyond prior knowledge.}
\end{quote}

We now formalize this principle. Let $(X,Y,Z)\sim p^r$ on $\mathcal{X}\times\mathcal{Y}\times\mathcal{Z}$, with released output \(\hat Y=f_\theta(X,Z)\in\hat{\mathcal{Y}}\), where $f_{\theta}$ is the ML/AI model parameterized by model parameter $\theta$. Define the lifted retain loss
\begin{equation*}
    L(f_\theta):=\mathcal{C}(Y;\hat Y,Z),
    \qquad
    L^\star:=\inf_{\theta} L(f_\theta).
\end{equation*}
We quantify exposure of $Z$ via the mutual information \(I(\hat Y;Z)\) under~$p^r$.

Here, we evaluate utility
within the observed \(Z\)-slices through losses of the form
\[
    \mathcal C(Y;\hat Y,Z)
    =
    \mathbb E_Z\!\left[\mathcal C_z(Y;\hat Y)\right].
\]
This lifted loss isolates the utility preservation from the independence constraint to avoid unnecessary conflict between the two, by guiding how the released output \(\hat Y\) is deformed within each \(Z\)-slice, while the exposure penalty \(I(\hat Y;Z)\) controls how much \(\hat Y\) reveals about \(Z\) after the slices are mixed. For information-based utilities, this interpretation becomes exact:
For a general target \(Y\), the non-lifted information utility satisfies
\[
    I(Y;\hat Y)
    =
    I(Y;\hat Y\mid Z)
    +
    I(Z;\hat Y)
    -
    I(Z;\hat Y\mid Y).
\]
Thus, when \(Y\) is statistically dependent on \(Z\), the non-lifted utility can partially reward the same \(Z\)-dependence that the exposure penalty \(I(\hat Y;Z)\) is designed to remove. More specifically, using equation \eqref{eq:heuristic_objective}, we have
\begin{equation*}
    -(1-\lambda)I(Y;\hat{Y}) + \lambda I(\hat{Y};Z) = (2\lambda - 1) I(\hat{Y};Z) + (1-\lambda) I(Z;\hat Y\mid Y) - (1-\lambda) I(Y;\hat Y\mid Z).
\end{equation*}
The first two terms quantify marginal and conditional dependence involving \(Z\), whereas the third term is the within-slice task information. Therefore, using $I(Y;\hat{Y})$ would result in a mixture of utility and (conditional) dependence. In contrast,
\[
    I(Y;(\hat Y,Z))
    =
    I(Y;Z)+I(Y;\hat Y\mid Z),
\]
and the first term is fixed with respect to the learned representation. Hence, the lifted utility isolates the within-\(Z\)-slice information \(I(Y;\hat Y\mid Z)\), which serves as the genuine utility guidance for deforming \(\hat Y\) under the exposure constraint \(I(\hat Y;Z)\).

Now, we provide a formal definition of feature unlearning.

\begin{defi}[Feature unlearning (independence)]
\label{def:fu-independence}
We say that \(\hat Y\) \emph{unlearns} the feature \(Z\) (with respect to \(p^r\)) if \(\hat Y \perp Z\):
\begin{equation*}
    \mathbb{P}\big(\hat Y\in A,\,Z\in B\big)\;=\;\mathbb{P}(\hat Y\in A)\,\mathbb{P}(Z\in B) \quad \text{for all measurable events } A\in\mathcal{B}_{\hat{\mathcal{Y}}},\ B\in\mathcal{B}_{\mathcal{Z}}.
\end{equation*}
Equivalently, \(I(\hat Y;Z)=0\).
\end{defi}

The exact independence (or zero mutual information) requirement can be relaxed by an information-theoretic tolerance while enforcing utility.

\begin{defi}[Mutual-information feature unlearning with utility budget]
\label{def:feature-unlearning-output}
Given tolerances \((\varepsilon,\delta)\in[0,\infty)^2\), we say that \(f_\theta\) achieves \emph{$(\varepsilon,\delta)$-feature unlearning of \(Z\) (w.r.t.\ \(p^r\))} if
\begin{equation}
\label{eq:FU}
I(\hat Y;Z)\ \le\ \varepsilon
\qquad\text{and}\qquad
L(f_\theta)-L^\star\ \le\ \delta.
\end{equation}
When \(\varepsilon=0\) we have \emph{exact feature unlearning}: \(\hat Y\perp Z\).
\end{defi}

The quantity \(I(\hat Y;Z)\) rigorously captures the no-inference feature unlearning principle above. Let \(\pi:=\mathbb{P}(\{Z{=}1\})\) and let \(P_{\mathrm{acc}}\) be the Bayes (optimal) accuracy for predicting \(Z\) from \(\hat Y\).
By the binary Fano inequality (see \citep{cover2006elements}), the Bayes error \(P_e:=1-P_{\mathrm{acc}}\) satisfies
\begin{equation}\label{eq:bayes_acc_bound}
    H_2(P_e)\ \ge\ H_2(\pi)-I(\hat Y;Z), \quad\text{so}\quad P_{\mathrm{acc}}\ \le\ 1-H_2^{-1}\!\big(H_2(\pi)-I(\hat Y;Z)\big),
\end{equation}
with \(H_2\) denoting binary entropy and \(H_2^{-1}\) understood on \([0,\tfrac12]\).
Thus, imposing \(I(\hat Y;Z)\le\varepsilon\) quantitatively limits the best-possible inference of \(Z\) from the released output \(\hat Y\).
Moreover, for any (possibly randomized) measurable post-processing \(T\),
the data-processing inequality yields \(I(T(\hat Y);Z)\le I(\hat Y;Z)\): downstream transformations cannot \emph{increase} $Z$-dependence. Finally, the utility budget \(\delta\) in \eqref{eq:FU} rules out trivial ``unlearn-by-collapse'' solutions (e.g., constant predictors).

\subsection{Data unlearning}\label{s2:data_unlearning}

Data unlearning (often equated with  ``machine unlearning'' in the existing literature) seeks to remove the influence of a designated subset of training data (the \emph{unlearn set}) on a learned predictor, so that an observer cannot detect whether those records participated in training.

A straightforward baseline is the model obtained by fully retraining on the \emph{retain} set alone, which yields the {\bf anchor-based unlearning principle}:

\begin{quote}
\emph{No observer should be able to distinguish the released model from the model that would have been obtained by training \emph{from scratch} on the retain set alone.}
\end{quote}
While intuitive, the ``retrain-on-retain'' heuristic depends on a retrained anchor model that is either unknown or not auditable \citep{thudi2022auditable}. In particular, if the anchor-based requirement implicitly allows fresh internal randomness on each verification attempt, and there is no mechanism that binds the curator to the original randomness and training procedure, then a curator can, at least in principle, synthesize or sample outputs that pass an indistinguishability check without performing the intended deletion. In effect, the definition presupposes an unlearned anchor and merely shifts the burden to verifying that the anchor is unlearned. Moreover, existing verification schemes either rely on auxiliary instrumentation (e.g., seeds, logs, gradient commitments, hash attestations) or are brittle to minor pipeline changes, making them easy to spoof and difficult to deploy at scale \citep{zhang2024fragile,sommer2022athena,weng2022poul,chen2021unlearning_leaks}; see Appendix~\ref{a:data_unlearning_intro}. These limitations motivate an auditable criterion expressed solely in terms of observable outputs.

To address the lack of auditability above, we propose a novel marginal invariance perspective on the unique information of the unlearn dataset in the model output, which is named {\bf marginal unlearning principle}:
\begin{quote}
\emph{No observer should be able to infer anything about the marginal information contributed by the unlearn set (beyond the retain set) from the model’s outputs, beyond prior knowledge.}
\end{quote}
Here, \emph{marginal information} denotes the unique information (or marginal effect) on the model output contributed by the unlearn set, in addition to the output information already contributed by the retain set. Informally, \emph{marginal unlearning} requires that the model output be \emph{invariant} to whether training examples were drawn purely from \(p^r\) (retain) or from a mixture that also includes the unlearn source. This invariance is \emph{auditable} from output. A separate \emph{utility} constraint preserves performance on \(p^r\).

\begin{rema}[Human analogy: direct vs.\ marginal unlearning]
\label{rem:human-analogy}
The principle aligns with evidence from cognitive psychology and neuroscience. Consider helping a person forget a text they once read. Existing \textbf{direct (anchor-based) unlearning} demands an unverifiable counterfactual: the subject should behave as if they  had \emph{never} encountered the text. By contrast, \textbf{marginal unlearning} follows a two-phase \emph{blur + reinforce} mechanism:
\begin{itemize}[leftmargin = *]
    \item \emph{Blur} (auditable invariance): suppress the diagnostic signal of the to-be-forgotten content so responses to prompts/tests from the ``retain'' vs. ``retain+unlearn'' pools become statistically indistinguishable. In our framework, this corresponds to the regularizer in Eq.~\eqref{eq:heuristic_objective} and is formalized by Def.~\ref{def:epsilon_MU}. In the cognitive psychology and neuroscience literature, it corresponds to the memory destabilization step such as Think/No-Think~\citep{AndersonGreen2001} or reconsolidation update~\citep{NaderSchafeLeDoux2000}.
    \item \emph{Reinforce} (utility preservation): strengthen desired knowledge so useful behavior is maintained. In our setup this is the first term in Eq. \eqref{eq:heuristic_objective} (learning on the retain set). Behaviorally, targeted retrieval practice robustly enhances retention and transfer \citep{KarpickeBlunt2011}.
\end{itemize}
\end{rema}

We quantify marginal information by measuring the distinguishability between the pure retain distribution $p^r$ and the mixture distribution $p^d := (1-\alpha)p^r + \alpha p^u$, where $p^u$ is the distribution of the unlearn set and  $\alpha \in (0,1]$ controls the mixing ratio and should reflect the cardinality ratio between the retain and unlearn sets. The marginal effect is substantial only when $p^d$ deviates significantly from $p^r$, indicating that the unlearning source $p^u$ contributes a significant amount of unique information not already represented in the retain set. We formalize this distinction as a binary inference problem by introducing a latent variable $Z \sim \mathrm{Bernoulli}(\pi)$, $\pi \in (0,1)$, that governs the sampling source. Conditional on $Z$, define $X_{\mathrm{margin}}$ as follows:
\[
\mathcal{L}(X_{\mathrm{margin}}\mid\{Z{=}1\}) = p^r,\qquad
\mathcal{L}(X_{\mathrm{margin}}\mid\{Z{=}0\}) = p^d.
\]
Let $\hat Y_{\mathrm{margin}}:=f(X_{\mathrm{margin}})$ denote the model output. If the released mechanism uses internal randomness, then \(\hat Y_{\mathrm{margin}}\) denotes the output law after averaging over that internal randomness; the randomness is not treated as revealed to the observer. The pair $(\hat Y_{\mathrm{margin}},Z)$ thus encodes the model's behavior under the baseline ($Z=1$) versus the shifted ($Z=0$) distribution. Therefore, the natural quantification of marginal information is the ability to infer the inclusion label $Z$ from the observed output $\hat Y_{\mathrm{margin}}$:

\begin{defi}[$\epsilon$-marginal unlearning]
\label{def:epsilon_MU}
Assume the balanced auditing prior $\mathbb P(Z=0)=\mathbb P(Z=1)=\frac12$. A model \(f\) satisfies \emph{\(\epsilon\)-marginal unlearning} if
\begin{equation}
\label{eq:odds-mu}
\sup_{D\in\mathcal{B}_{\hat{\mathcal{Y}}}}
\left|
\log\!\left(
\frac{\mathbb{P}\!\left(Z=0 \,\middle|\, \hat{Y}_{\mathrm{margin}}\in D\right)}
     {\mathbb{P}\!\left(Z=1 \,\middle|\, \hat{Y}_{\mathrm{margin}}\in D\right)}
\right)
\right|
\le \epsilon,
\end{equation}
where the supremum is taken over measurable events \(D\) for which the
posterior probabilities are well-defined.
Equivalently, for every such event \(D\), the posterior odds of ``excluded''
versus ``included'' are bounded by \(e^{\pm\epsilon}\) given
\(\{\hat Y_{\mathrm{margin}}\in D\}\).
\end{defi}

More generally, if the auditing prior is
\(\pi:=\mathbb P(Z=1)\in(0,1)\), one can use the prior-normalized posterior
odds
\[
\frac{
    \mathbb{P}\!\left(Z=0 \,\middle|\, \hat{Y}_{\mathrm{margin}}\in D\right)
    /
    \mathbb P(Z=0)
}{
    \mathbb{P}\!\left(Z=1 \,\middle|\, \hat{Y}_{\mathrm{margin}}\in D\right)
    /
    \mathbb P(Z=1)
}.
\]
The corresponding generalized \(\epsilon\)-marginal-unlearning condition is
\[
\sup_{D\in\mathcal{B}_{\hat{\mathcal{Y}}}}
\left|
\log\!\left(
\frac{
    \mathbb{P}\!\left(Z=0 \,\middle|\, \hat{Y}_{\mathrm{margin}}\in D\right)
    /
    \mathbb P(Z=0)
}{
    \mathbb{P}\!\left(Z=1 \,\middle|\, \hat{Y}_{\mathrm{margin}}\in D\right)
    /
    \mathbb P(Z=1)
}
\right)
\right|
\le \epsilon,
\]
again over events for which the conditional probabilities are well-defined.
This reduces to Definition~\ref{def:epsilon_MU} when \(\pi=\frac12\).

Definition~\ref{def:epsilon_MU} is a worst-case posterior-odds guarantee across
all measurable events in the learning-output space. The balanced prior
\(\pi=\frac12\) reflects the default auditing setting in which the observer has
no prior preference between the retain-only source and the retain-plus-unlearn
source. Under this prior, posterior odds coincide with likelihood ratios, and
\(0\)-marginal unlearning is equivalent to
\[
    \hat Y_{\mathrm{margin}}\perp Z,
\]
or equivalently
\[
    I(\hat Y_{\mathrm{margin}};Z)=0.
\]
Furthermore, inequalities such as \eqref{eq:bayes_acc_bound} show that larger
\(I(\hat Y_{\mathrm{margin}};Z)\) allows more accurate inference of the source
label \(Z\), and hence greater distinguishability between the output laws
induced by the retain-only and retain-plus-unlearn distributions. Therefore,
\(I(\hat Y_{\mathrm{margin}};Z)\) provides a natural quantification of marginal
information and motivates the following information-theoretic relaxation of
Definition~\ref{def:epsilon_MU}.

\begin{defi}[High-probability marginal unlearning]
\label{def:hp_epsilon_MU}
Assume the balanced auditing prior $\mathbb P(Z=0)=\mathbb P(Z=1)=\frac12$. A model \(f\) satisfies \emph{high-probability \((\epsilon,\delta)\)-marginal
unlearning} if there exists a measurable good set
\(G\in\mathcal B_{\hat{\mathcal Y}}\) such that
\[
    \mathbb P(\hat Y_{\mathrm{margin}}\in G)\ge 1-\delta,
\]
and
\begin{equation}
\label{eq:hp-odds-mu}
\sup_{\substack{D\in\mathcal B_{\hat{\mathcal Y}}\\ D\subseteq G}}
\left|
\log\!\left(
\frac{\mathbb{P}\!\left(Z=0 \,\middle|\, \hat{Y}_{\mathrm{margin}}\in D\right)}
     {\mathbb{P}\!\left(Z=1 \,\middle|\, \hat{Y}_{\mathrm{margin}}\in D\right)}
\right)
\right|
\le \epsilon,
\end{equation}
where the supremum is taken over measurable events \(D\subseteq G\) for which the
posterior probabilities are well-defined.
\end{defi}

Definition~\ref{def:hp_epsilon_MU} differs from Definition~\ref{def:epsilon_MU} only by allowing a small exceptional set \(G^c\). On the good set \(G\), the guarantee remains worst-case over all measurable events \(D\subseteq G\). The high-probability definition is naturally controlled by mutual information.
Indeed, if $I(\hat Y_{\mathrm{margin}};Z)\le \mu$, then as shown in Lemma~\ref{l:epsilon_DU_bound_mutual_info} below, for every
\(\epsilon>0\), the good set
\[
    G_\epsilon
    :=
    \left\{
    \hat y:
    \left|
    \log
    \frac{\mathbb P(Z=0\mid \hat Y_{\mathrm{margin}}=\hat y)}
         {\mathbb P(Z=1\mid \hat Y_{\mathrm{margin}}=\hat y)}
    \right|
    \le \epsilon
    \right\}
\]
satisfies
\[
    \mathbb P(\hat Y_{\mathrm{margin}}\in G_\epsilon)
    \ge
    1-\frac{\sqrt{2\mu}}{\tanh(\epsilon/2)}.
\]
Consequently, mutual information provides an implementation-friendly
information-theoretic relaxation of the high-probability marginal-unlearning definition.

\begin{defi}[Mutual-information marginal unlearning with utility budget]
\label{def:MI_marginal_unlearning}
For tolerances \(\mu,\rho\ge0\), with retain risk
\(L(f):=\mathbb{E}_{(X,Y)\sim p^r}[\ell(f(X),Y)]\) and
\(L^\star:=\inf_h L(h)\), we say \(f\) achieves
\emph{MI-regularized \((\mu,\rho)\)-marginal unlearning} if
\begin{equation}
\label{eq:mu}
I\!\big(\hat Y_{\mathrm{margin}};Z\big)\le \mu,
\qquad\text{and}\qquad
L(f)-L^\star\le \rho.
\end{equation}
When \(\mu=\rho=0\), we obtain \emph{exact marginal unlearning with retain
optimality}: \(\hat Y_{\mathrm{margin}}\perp Z\) and \(f\) is retain-optimal.
\end{defi}
The above definition is implementation friendly and scalable in practice, hence particularly applicable in deep learning. See Section \ref{S:Experiment} for its direct implementation on tabular data analysis and image classification. Also, see \cite{xu2025forgetting} for its implementation on LLMs.

We now reflect on the main motivations behind the definition:
\begin{itemize}[leftmargin=*]
\setlength{\itemsep}{-0.5ex}
\item \emph{Interpretability/practicality:} The posterior-odds form
\eqref{eq:odds-mu} gives a worst-case event-wise indistinguishability guarantee, while the high-probability form \eqref{eq:hp-odds-mu} allows a small exceptional set of outputs. The mutual-information criterion \eqref{eq:mu} aggregates leakage across outputs and yields the high-probability guarantee in Lemma~\ref{l:epsilon_DU_bound_mutual_info}.
\item \emph{Auditability:} Both depend only on samples of \((X_{\mathrm{margin}},Z,\hat Y_{\mathrm{margin}})\); no anchor or training transcript is needed.  
\item \emph{Sufficiency:} As shown in Section \ref{s2:sufficiency} below, with a utility guarantee, marginal unlearning implies the usual anchor-based guarantees (Theorem~\ref{thm:mu-to-anchor}). 
\item \emph{Necessity for ``robust'' models:} In Section \ref{s2:necessity}, we show that a well-generalizing retrained model cannot leak inclusion while also satisfying marginal unlearning without sacrificing regularity (Lemma~\ref{l:DU_as_condition_for_good_model}).
\end{itemize}

\subsection{Sufficiency for existing definitions: an auditable criterion}\label{s2:sufficiency}

Classical \emph{anchor-based} unlearning reads: \emph{after removing $X_u$, the released model should behave as if it were retrained on $X_r := X\setminus\{X_u\}$}. Formally, let $\mathcal{H}$ be the model/hypothesis space. A (possibly randomized) training algorithm $A$ maps a dataset $D$ to a random model $\Theta\in\mathcal{H}$ with law $A(D)$. Given a delete set $U\subseteq D$, the \emph{anchor (retrain)} distribution is
\begin{equation*}
    \Theta_r \sim \mathcal{L}\big(A(D\setminus U)\big) \text{ or sometimes} \sim \mathcal{L}\big(M(A(D\setminus U), D\setminus U, \emptyset))\big).
\end{equation*}
An unlearning mechanism $M$ takes $(\Theta,D,U)$ and outputs an updated (random) model
\begin{equation*}
    \Theta_u \sim \mathcal{L}\big(M(A(D),D,U)\big).
\end{equation*}

The existing data (machine) unlearning definitions can be summarized into the following three categories:

\begin{itemize}[leftmargin=*]
    \item \emph{(Anchored) exact unlearning:} $(A,M)$ is said to satisfy (anchored) \emph{exact unlearning} on $(S,U)$ if $\mathcal{L}(\Theta_u)\;=\;\mathcal{L}(\Theta_r)$ as probability measures on $\mathcal{H}$ \cite{cao2015unlearning, bourtoule2021machine};
    \item \emph{Divergence relaxation:} $(A,M)$ is said to satisfy (anchored) approximate unlearning if $D\!\big(\mathcal{L}(\Theta_u),\,\mathcal{L}(\Theta_r)\big)\ \le\ \varepsilon$ for some divergence $D$ (e.g., total variation, KL, JS)\cite{guo2019certified};
    \item \emph{High-probability relaxation:} $(A,M)$ is said to satisfy (anchored) $(\varepsilon,\delta)$–indistinguishable unlearning if $\mathbb{P}\big(\Theta_u\in W\big)\ \le\ e^{\varepsilon}\mathbb{P}\big(\Theta_r\in W\big)+\delta$ and $\mathbb{P}\big(\Theta_r\in W\big)\ \le\ e^{\varepsilon}\mathbb{P}\big(\Theta_u\in W\big)+\delta$
for all measurable $W\subseteq\mathcal{H}$ \cite{sekhari2021remember}.
\end{itemize}

These criteria hinge on the \emph{unknown} stochastic anchor $\Theta_r$ in the model parameter space. Consequently, they are \emph{inauditable} from the released predictor alone. Verification requires privileged access to training artifacts (e.g., checkpoints, witness models) which are fragile, easily spoofed, or unavailable in practical settings~\citep{thudi2022auditable,zhang2024fragile}. To resolve this, we translate these parameter-space definitions into verifiable \emph{output-space} criteria.

Let $p^{\mathrm{test}}$ be a test distribution. We define anchored unlearning on the output level by comparing the predictive laws $\mathcal{L}(f_\Theta(X))$ for $X \sim p^{\mathrm{test}}$, where $f_\Theta(X)$ denotes the (randomized) model output with parameter $\Theta$. Specifically, we adopt the mixture distribution $p^{\mathrm{test}} = p^d$ (the mixture of retain and unlearn sets, which represents the original training data) as the auditing ground.
For a divergence $D$ and $\varepsilon \ge 0$, a predictor $f_{\Theta_u}$ (model parametrized by $\Theta_u$) satisfies \emph{output-level anchored approximate unlearning} if:
\begin{equation}\label{eq:output_approx}
    D\big(\mathcal{L}(f_{\Theta_r}(X)),\, \mathcal{L}(f_{\Theta_u}(X))\big) \le \varepsilon, \quad \text{for } X \sim p^d.
\end{equation}

We now show that our \emph{marginal unlearning} criterion (Def.~\ref{def:epsilon_MU}) is sufficient to guarantee Eq.~\eqref{eq:output_approx}, provided the model maintains high utility. We first establish a lemma bounding the output drift via mutual information.

\begin{lem}[MI controls output drift]\label{lem:mi-tv}
Let $\Theta$ be an $\mathcal{H}$-valued random variable independent of $(X_{\textrm{margin}}, Z)$. Define the marginal output $\hat Y_{\mathrm{margin}} := f_\Theta(X_{\textrm{margin}})$, and the model output distribution tested on $p^d$ and $p^r$ respectively by
\begin{equation*}
    \mu_{\Theta}^{p^d}:= \mathcal{L}(f_\Theta(X_{\textrm{margin}}) \mid Z = 0), \quad
    \mu_{\Theta}^{p^r}:= \mathcal{L}(f_\Theta(X_{\textrm{margin}}) \mid Z = 1).
\end{equation*}
Then,
\[
\TV\big(\mu_{\Theta}^{p^d},\, \mu_{\Theta}^{p^r}\big) \;\le\; \sqrt{\frac{I(\hat Y_{\mathrm{margin}}; Z)}{2\pi(1-\pi)}}.
\]
\end{lem}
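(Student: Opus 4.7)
The plan is to reduce the claim to a standard Pinsker-plus-mixture computation, exploiting the fact that $Z$ is binary so the full marginal law $\bar\mu := \pi\,\mu_\Theta^{p^r} + (1-\pi)\,\mu_\Theta^{p^d}$ of $\hat Y_{\mathrm{margin}}$ is just a two-point mixture of the objects we want to compare. Since $\Theta$ is independent of $(X_{\mathrm{margin}},Z)$, the two laws $\mu_\Theta^{p^r}$ and $\mu_\Theta^{p^d}$ are genuinely the conditional laws of $\hat Y_{\mathrm{margin}}$ given $Z=1$ and $Z=0$, respectively (no conditioning-on-$\Theta$ correction is needed). The standard chain-rule identity then writes the mutual information as
\[
    I(\hat Y_{\mathrm{margin}};Z) \;=\; \pi\,\KL\!\bigl(\mu_\Theta^{p^r}\,\big\Vert\,\bar\mu\bigr) \;+\; (1-\pi)\,\KL\!\bigl(\mu_\Theta^{p^d}\,\big\Vert\,\bar\mu\bigr),
\]
which puts the right-hand side into a form directly comparable to the TV distance on the left.

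Next I would apply Pinsker's inequality $\TV(P,Q)\le\sqrt{\tfrac12\KL(P\Vert Q)}$ to each KL term and convert the TV-to-$\bar\mu$ distances into the single target quantity $T := \TV(\mu_\Theta^{p^r},\mu_\Theta^{p^d})$ via the elementary two-point mixture identities
\[
    \mu_\Theta^{p^r}-\bar\mu \;=\;(1-\pi)\bigl(\mu_\Theta^{p^r}-\mu_\Theta^{p^d}\bigr),\qquad \mu_\Theta^{p^d}-\bar\mu \;=\;\pi\bigl(\mu_\Theta^{p^d}-\mu_\Theta^{p^r}\bigr),
\]
so that $\TV(\mu_\Theta^{p^r},\bar\mu)=(1-\pi)T$ and $\TV(\mu_\Theta^{p^d},\bar\mu)=\pi T$. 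Inverting Pinsker then yields $\KL(\mu_\Theta^{p^r}\Vert\bar\mu)\ge 2(1-\pi)^2 T^2$ and $\KL(\mu_\Theta^{p^d}\Vert\bar\mu)\ge 2\pi^2 T^2$.

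Substituting these two lower bounds back into the mutual-information decomposition, the weights collapse symmetrically,
\[
    I(\hat Y_{\mathrm{margin}};Z)\;\ge\;2\pi(1-\pi)^2 T^2 + 2(1-\pi)\pi^2 T^2 \;=\;2\pi(1-\pi)\bigl[(1-\pi)+\pi\bigr] T^2 \;=\; 2\pi(1-\pi)\,T^2,
\]
and rearranging gives the stated inequality $T\le \sqrt{I(\hat Y_{\mathrm{margin}};Z)/(2\pi(1-\pi))}$.

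I do not anticipate a real obstacle: the argument is essentially a weighted Pinsker bound on a binary mixture. The only point that requires care is the bookkeeping of the weights, which is what produces the symmetric factor $2\pi(1-\pi)$ rather than something like $2\min(\pi,1-\pi)^2$; this symmetrization happens because the two Pinsker inequalities, individually pessimistic for small/large $\pi$, enter the MI decomposition with complementary weights $\pi$ and $1-\pi$. If one wanted a sharper small-$\pi$ constant, one could replace Pinsker by Bretagnolle--Huber, but the stated bound is exactly what Pinsker delivers, so no stronger tool is needed.
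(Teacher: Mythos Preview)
Your argument is correct and reaches the same constant as the paper. The paper's proof starts from the same identity $I(\hat Y_{\mathrm{margin}};Z)=\pi\,\KL(P_1\Vert M)+(1-\pi)\,\KL(P_0\Vert M)$ but then proceeds differently: it bounds $\TV(P_0,P_1)$ by the triangle inequality through the mixture $M$, applies Pinsker to each leg, and finishes with Cauchy--Schwarz on $\sqrt{a}/\sqrt{1-\pi}+\sqrt{b}/\sqrt{\pi}$ to recover the factor $\pi(1-\pi)$. Your route instead uses the exact two-point-mixture identities $\TV(P_1,M)=(1-\pi)T$ and $\TV(P_0,M)=\pi T$, inverts Pinsker, and sums with the MI weights so that the $\pi(1-\pi)$ factor falls out algebraically. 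This is slightly more economical: you trade the paper's triangle-inequality-plus-Cauchy--Schwarz for one exact identity and a direct sum, so you use one inequality instead of two. You also dispose of the random $\Theta$ at the outset by observing that independence makes $\mu_\Theta^{p^r},\mu_\Theta^{p^d}$ exactly the conditional laws of $\hat Y_{\mathrm{margin}}$ given $Z$; the paper instead carries out a separate convexity-and-Jensen averaging step over $\Theta$ at the end, which is unnecessary for the statement as written and actually requires the extra observation that $I(\hat Y_{\mathrm{margin}};Z\mid\Theta)\ge I(\hat Y_{\mathrm{margin}};Z)$ (since $\Theta\perp Z$) rather than equality.
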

See Appendix~\ref{a:proof_lemma_mi_tv} for a proof. 

Using this lemma, we derive the main sufficiency theorem. We utilize the log-loss regret (e.g., cross-entropy in deep learning): for a (possibly randomized) model $f$, define the (population) log-loss regret under $X\sim p^r$ by
\[
\mathrm{reg}_{\log}(f)\ :=\ \mathbb{E}_{X\sim p^r}\Big[\KL\big(\mathcal{L}(Y\mid X)\,\|\,\mathcal{L}(f(X)\mid X)\big)\Big].
\]
For a random $f_\Theta$, write $\overline{\mathrm{reg}}_{\log}(f_\Theta):=\mathbb{E}_{\Theta}\big(\mathrm{reg}_{\log}(f_\Theta)\big)$.

\begin{thm}[Marginal unlearning $+$ utility $\Rightarrow$ approximate unlearning]\label{thm:mu-to-anchor}
Let $\Theta_u \sim \mathcal{L}(M(A(D),D,U))$ be the unlearned model and $\Theta_r \sim \mathcal{L}(A(D\setminus U))$ (or $\mathcal{L}(M(A(D\setminus U),D\setminus U,\varnothing))$) be the retrained anchor.
Assume: $\overline{\mathrm{reg}}_{\log}(f_{\Theta_u})\le \delta$, $\overline{\mathrm{reg}}_{\log}(f_{\Theta_r})\le \delta_g$ under $X\sim p^r$, and $I(\hat Y_{\mathrm{margin}};Z)\le \varepsilon_u$, where $\hat Y_{\mathrm{margin}}:=f_{\Theta_u}(X_{\mathrm{margin}})$ and $\Theta_u\!\perp\!(X_{\mathrm{margin}},Z)$. Then,
\[
\TV\!\Big(\mu_{\Theta_u}^{\,p^d},\,\mu_{\Theta_r}^{\,p^d}\Big)
\ \le\
\underbrace{\sqrt{\tfrac12}\,\big(\sqrt{\delta}+\sqrt{\delta_g}\big)}_{\text{\;Utility alignment}}
\ +\
\underbrace{\sqrt{\tfrac{\varepsilon_u}{2\pi(1-\pi)}}}_{\text{\;Membership independence}}
\ +\
\underbrace{\TV\!\Big(\mu_{\Theta_r}^{\,p^r},\,\mu_{\Theta_r}^{\,p^d}\Big)}_{\text{\emph{Anchor generalization}}}.
\]
\end{thm}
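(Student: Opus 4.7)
The plan is to insert two intermediate laws and apply the triangle inequality for total variation. Specifically, I first write
\begin{equation*}
\TV\bigl(\mu_{\Theta_u}^{p^d},\mu_{\Theta_r}^{p^d}\bigr)
\le \TV\bigl(\mu_{\Theta_u}^{p^d},\mu_{\Theta_u}^{p^r}\bigr)
+ \TV\bigl(\mu_{\Theta_u}^{p^r},\mu_{\Theta_r}^{p^r}\bigr)
+ \TV\bigl(\mu_{\Theta_r}^{p^r},\mu_{\Theta_r}^{p^d}\bigr),
\end{equation*}
and then match the three summands to the three terms of the stated bound.

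The first summand (membership independence) is handled by a direct application of Lemma~\ref{lem:mi-tv} with $\Theta := \Theta_u$: the hypothesis $\Theta_u \perp (X_{\mathrm{margin}},Z)$ is exactly what the lemma requires, and $I(\hat Y_{\mathrm{margin}};Z)\le\varepsilon_u$ yields $\TV(\mu_{\Theta_u}^{p^d},\mu_{\Theta_u}^{p^r}) \le \sqrt{\varepsilon_u/(2\pi(1-\pi))}$. The third summand is the anchor generalization residual and is kept verbatim.

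For the middle summand (utility alignment) I would route through the Bayes predictive kernel $p_{Y\mid X}$. Let $g_u(x) := \mathcal L(f_{\Theta_u}(x)\mid X=x)$ denote the predictive kernel of the unlearned model after marginalizing out $\Theta_u$, and define $g_r(x)$ analogously. Convexity of TV under mixing yields $\TV(\mu_{\Theta_u}^{p^r},\mu_{\Theta_r}^{p^r}) \le \mathbb E_{X\sim p^r}\bigl[\TV(g_u(X),g_r(X))\bigr]$. Inserting $p_{Y\mid X}$ via the triangle inequality and applying Pinsker pointwise in $x$ bounds the integrand by $\sqrt{\tfrac12\,\KL(p_{Y\mid X}\,\|\,g_u(X))} + \sqrt{\tfrac12\,\KL(p_{Y\mid X}\,\|\,g_r(X))}$. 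Joint convexity of $\KL$ in its second argument gives $\KL(p_{Y\mid X}\,\|\,g_u(X)) \le \mathbb E_{\Theta_u}\bigl[\KL\bigl(p_{Y\mid X}\,\|\,\mathcal L(f_{\Theta_u}(X)\mid X,\Theta_u)\bigr)\bigr]$, and integrating over $X\sim p^r$ the right side collapses to $\overline{\mathrm{reg}}_{\log}(f_{\Theta_u})\le\delta$. Concavity of $\sqrt{\cdot}$ (Jensen's inequality) then moves $\mathbb E_X$ inside the radical, delivering $\sqrt{\delta/2}+\sqrt{\delta_g/2} = \sqrt{1/2}\bigl(\sqrt{\delta}+\sqrt{\delta_g}\bigr)$.

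The main obstacle I anticipate is keeping the three sources of randomness (the predictor's internal noise, the random parameter $\Theta$, and the input $X$) in the correct order: Pinsker must be applied pointwise in $x$ \emph{before} Jensen moves the square root outside, while joint convexity of $\KL$ is used in its second slot to absorb the marginalization over $\Theta$ into the averaged regret. Reversing any of these steps either degrades the exponent on $\delta,\delta_g$ or fails to recover the additive $\sqrt{\delta}+\sqrt{\delta_g}$ form stated in the theorem.
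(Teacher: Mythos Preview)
Your proposal is correct and follows essentially the same route as the paper: the same three-term triangle decomposition, Lemma~\ref{lem:mi-tv} for the membership-independence term, and Pinsker followed by Jensen for the utility-alignment term. The only cosmetic difference is that you marginalize $\Theta$ into the kernel $g_u(x)$ first and then invoke convexity of $\KL$ in its second argument to recover $\overline{\mathrm{reg}}_{\log}$, whereas the paper keeps $\Theta$ explicit throughout and applies Pinsker and Jensen jointly over $(\Theta,X)$; both orderings yield the identical bound.
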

Here, the second term comes from Lemma~\ref{lem:mi-tv}. For the proof see Appendix~\ref{a:proof_theorem_mu-to-anchor}.

Anchor-based formulations capture the intuition ``retrain on the retain set'' but are generally not auditable. Marginal unlearning replaces the unknown anchor with a statistically verifiable independence criterion and, when paired with utility control, \emph{recovers} the anchored guarantees.

What can we say about the choice of loss function in this context? Log-loss regret \emph{is} an expected KL divergence, and Pinsker’s inequality converts KL to TV on general measurable spaces: $\TV(P,Q)\le \sqrt{\tfrac12\,\KL(P\Vert Q)}$. This enables a dimension- and output-space-agnostic bound. Many standard AI/ML objectives instantiate log-loss (classification cross-entropy, GLMs, likelihood-based deep models, CRFs, etc.), making the theorem broadly applicable.

\subsection{Necessity for any ``robust'' retrained model}\label{s2:necessity}
We now establish a complementary \emph{necessity} statement: a model cannot simultaneously exhibit (1) inclusion leakage, (2) good generalizability, and (3) marginal unlearning. Intuitively, a well-regularized retrained model that fits the remaining data cannot leak inclusion without violating the marginal-unlearning criterion.

Assume equal priors $\mathbb{P}(Z{=}0)=\mathbb{P}(Z{=}1)=\tfrac12$ and that $\mathcal{L}(f(X_0))$ and $\mathcal{L}(f(X_1))$ are mutually absolutely continuous with densities w.r.t.\ a common reference. Then, for every measurable \(D\) with positive probability under both output laws, Bayes' rule gives
\[
\log\frac{\mathbb P(Z=0\mid \hat Y_{\mathrm{margin}}\in D)}{\mathbb P(Z=1\mid \hat Y_{\mathrm{margin}}\in D)}
=\log\frac{\mathcal{L}(f(X_0))(D)}{\mathcal{L}(f(X_1))(D)}.
\]
Hence, Definition~\ref{def:epsilon_MU} is equivalent to the pair of set-wise inequalities
\[
\mathcal{L}\big(f(X_0)\big)(D)\ \le\ e^{\epsilon}\,\mathcal{L}\big(f(X_1)\big)(D),
\qquad
\mathcal{L}\big(f(X_1)\big)(D)\ \le\ e^{\epsilon}\,\mathcal{L}\big(f(X_0)\big)(D)
\quad \forall\,D,
\]
which imply mutual absolute continuity. By the Radon–Nikodym theorem, there exists a density ratio $r:=\tfrac{\mathrm d\mathcal{L}(f(X_0))}{\mathrm d\mathcal{L}(f(X_1))}$ with $e^{-\epsilon}\le r\le e^{\epsilon}$ a.s., and conversely this bound implies the set-wise inequalities. Equivalently,
\[
\operatorname*{ess\,sup}_{y}\ 
\Bigl|\log\frac{p_1(y)}{p_0(y)}\Bigr|
\le \epsilon
\quad\Longleftrightarrow\quad
\text{\(\epsilon\)-marginal unlearning,}
\]
where \(p_i\) denotes the density of \(\mathcal L(f(X_i))\), \(i=0,1\), with respect to the common reference measure.

\begin{lem}[Marginal unlearning as a condition for ``good'' retraining]
\label{l:DU_as_condition_for_good_model}
Assume \(f\) is a retrained model, \(P_i:=\mathcal{L}(f(X_i))\),
\(i \in \{0,1\}\), admit continuous densities \(p_i\) on
an open set \(\widehat{\mathcal Y}\subset\mathbb R^m\), \(f\) is \(L\)-Lipschitz from
\((\mathcal{X},d_{\mathcal X})\) to
\((\widehat{\mathcal{Y}},\|\cdot\|)\), and
\(\mathcal W_{d_{\mathcal X}}(X_1,X_0)>0\). Then either
\[
\operatorname*{ess\,sup}_{y}\ 
\Bigl|\log\frac{p_1(y)}{p_0(y)}\Bigr|
\le \epsilon,
\]
or there exists a point \(y^*\in\widehat{\mathcal Y}\), at which the
likelihood-ratio bound fails, and a local radius \(\rho_*(y^*)>0\) such that
\[
    L\ge L^*(\epsilon,y^*),
\]
where
\begin{equation}
\label{eq:Lstar}
L^*(\epsilon,y^*)
    :=
    \frac{
        \gamma(y^*)\,\omega_m\,\rho_*(y^*)^{m+1}
    }{
        (m+1)\,\mathcal W_{d_{\mathcal X}}(X_1,X_0)
    },
\qquad
\gamma(y^*) := \frac12 |p_1(y^*)-p_0(y^*)|>0.
\end{equation}
Here, \(X_1\!\sim\!p^r\), \(X_0\!\sim\!p^d\), \(\omega_m\) is the volume of the unit ball in \(\mathbb R^m\), and \(\mathcal{W}_{d_{\mathcal X}}\) is the \(1\)-Wasserstein distance on \((\mathcal{X},d_{\mathcal X})\).
\end{lem}

The proof can be found in Appendix~\ref{a:proof_DU_as_good}. The lemma makes precise the sense in which pointwise inclusion leakage becomes
detectable at the level of output distributions. The equality \(f(x_u)=y_u\)
alone is a pointwise statement. However, when \(x_u\) is a non-zero-density
point of the unlearn-source distribution \(p^u\), every sufficiently small
neighborhood of \(x_u\) carries positive \(p^u\)-mass. If \(f\) is Lipschitz,
this local input mass is transported into a controlled neighborhood of
\(y_u=f(x_u)\). Hence, pointwise inclusion leakage propagates to a local
output-level signal. When this signal is not already represented by the retain
output law \(f_\#p^r\), it becomes a detectable density-ratio gap between
\(f_\#p^d\) and \(f_\#p^r\), contradicting \(\epsilon\)-marginal unlearning
unless the Lipschitz constant satisfies the lower bound in
Lemma~\ref{l:DU_as_condition_for_good_model}.

Thus, the lemma yields the following incompatibility among:
\begin{enumerate}[itemsep=2pt, topsep=2pt, leftmargin=*]
    \item \textbf{Inclusion leakage}: the retrained model truthfully reveals
    the unlearn record, \(f(x_u)=y_u\), at a positive-density point of the
    unlearn-source distribution;
    \item \textbf{Marginal unlearning}: the output density ratio between
    \(f_\#p^d\) and \(f_\#p^r\) satisfies the \(\epsilon\) log-ratio bound of
    Definition~\ref{def:epsilon_MU};
    \item \textbf{Good regularity/generalizability}: the retrained model has
    small Lipschitz constant, so pointwise leakage propagates only through
    controlled local neighborhoods rather than arbitrary discontinuous spikes.
\end{enumerate}
Indeed, under (1) and (3), inclusion leakage propagates to a local output-level signal
near \(y_u\). If this signal is specific to the unlearn source, then it produces
a density-ratio gap and violates (2). Equivalently, if (1) and (2) both hold,
then the model cannot remain regular in the sense quantified by
Lemma~\ref{l:DU_as_condition_for_good_model}.

Consequently, marginal unlearning does not require uniform degradation of model utility. Rather, under regularity, it rules out output behavior that is locally diagnostic of the unlearn source. Thus, the effect of marginal unlearning is concentrated on information unique to the unlearn set and not already supported by the retain distribution, while utility on the retain distribution is preserved through the separate retain optimization objective.

\section{Information Theoretic Unlearning Framework}\label{S:Into_Framework}
This section formalizes our unlearning framework and explains the
rate-distortion, or compression, motivation behind it. We first present the
general information-theoretic formulation in Section~\ref{s3:information_lens}. Section~\ref{s3:problem_setting} then state the optimization problems for feature unlearning and marginal data unlearning separately, based on the unlearning definitions introduced in Section~\ref{S:Definition}. Finally,
Section~\ref{s3:rate_distortion} explains the compression intuition behind the
framework through the lens of rate-distortion theory.

\subsection{An information-theoretic lens}\label{s3:information_lens}

We connect unlearning to classical rate--distortion and compression ideas
\cite{shannon1959coding,berger2003rate}. The key quantity is the
\emph{mutual information} \(I(S';Z)\) between a chosen exposure or auditing
variable \(S'\) and the signal to forget \(Z\). We interpret compression as
reducing \(I(S';Z)\), thereby compressing away information about \(Z\), while
retaining task utility through a user-chosen cost functional. In the unified
notation, this gives constrained and Lagrangian formulations of the form
\begin{align*}
&\min_{f} \ \ \mathcal{C}(Y;S,Z)
\quad\text{s.t.}\quad I(S';Z)\le \tau,
\qquad\text{or}\qquad
\min_{f} \ \ (1-\lambda)\,\mathcal{C}(Y;S,Z)+\lambda\,I(S';Z).
\end{align*}
Here \(\mathcal C(Y;S,Z)\) denotes the lifted utility used in feature
unlearning; in marginal data unlearning, where \(Z\) is only an auditing label,
this specializes to the non-sliced utility \(\mathcal C(Y;S)\).

To start, we note that the learning outcome space $\mathcal{S}$ is not necessarily the data space $\mathcal{X}$ or the target space $\mathcal{Y}$. More specifically, the outcome space $\mathcal{S}$ can be the data space $\mathcal{X}$ in synthetic data or self-supervised learning (i.e. $\mathcal{Y} = \mathcal{X}$) setting, some new latent or embedding variable space $\mathcal{V}$ in the representation learning setting, or the target space $\mathcal{Y}$ in the supervised learning setting. Therefore, we use $\mathcal{S} \in \{\mathcal{X}, \mathcal{V}, \mathcal{Y}\}$ here to denote a generic learning outcome space.

\begin{itemize}[leftmargin=*, itemsep=2pt]
\item \textbf{Feature unlearning ($S=S'=f(X,Z)$ for $f:\mathcal{X} \times \mathcal{Z} \rightarrow \mathcal{S}$).}
$Z$ denotes feature(s) to forget and $X$ the remaining features. We construct $S=f(X,Z)$ whose \emph{outputs} reveal little about $Z$ while preserving utility for the task(s) $Y$. As explained in Section \ref{s2:feature_unlearning}, the \emph{feature} influence in model outputs is quantified by
$\ell_{\mathrm{feature}}(f):=I(S;Z)$, utility loss is given by $\ell_{\mathrm{utility}}(f):=\mathcal{C}(Y;S,Z)$, yielding the training objective
\begin{equation*}
    \inf_f (1-\lambda)\,\ell_{\mathrm{utility}}(f)+\lambda\,\ell_{\mathrm{feature}}(f).
\end{equation*}

\item \textbf{Marginal data unlearning ($S = f(X)$, $S'= S_{\mathrm{margin}} := f(X_{\mathrm{margin}})$).}
Let $p^r$ be the retain distribution and $p^u$ the unlearn source distribution, let $\alpha\in(0,1]$ be the cardinality or mixture weight of the unlearn source, define $p^d:=(1-\alpha)p^r+\alpha p^u$.
Define $(X_{\mathrm{margin}}, Z, S_{\mathrm{margin}})$ as the following: $\mathcal{L}(Z) = \mathrm{Bernoulli}(\tfrac12)$,
\[
\mathcal{L}(X_{\mathrm{margin}} \mid Z{=}1) = p^r,
\qquad
\mathcal{L}(X_{\mathrm{margin}} \mid Z{=}0) = p^d,
\]
and $S_{\mathrm{margin}}:=f(X_{\mathrm{margin}})$. As explained in Section \ref{s2:data_unlearning}, the \emph{marginal} information of $p^u$ in model outputs is quantified by
$\ell_{\mathrm{margin}}(f):=I(S_{\mathrm{margin}};Z)$, utility loss is given by $\ell_{\mathrm{utility}}(f):=\mathcal{C}(Y;S)$, yielding the training objective
\begin{equation*}
    \inf_f (1-\lambda)\,\ell_{\mathrm{utility}}(f)+\lambda\,\ell_{\mathrm{margin}}(f).
\end{equation*}
As shown in Section~\ref{S:Definition}, this formulation not only suffices classical (anchor-based) exact/approximate notions, but also avoids their unauditability by \emph{replacing} an unknown anchor with an observable independence criterion.
\end{itemize}

\subsection{Problem statements}\label{s3:problem_setting}

We now state the Pareto-optimal feature-unlearning and marginal-unlearning
problems over a generic outcome space \(\mathcal S\in\{\mathcal X,\mathcal V,\mathcal Y\}\), allowing practitioners to choose the outcome space appropriate for a given application.

\begin{defi}[Pareto optimal feature unlearning]\label{d:optimal_feature_unlearning}
Given $(X,Z)$ with $Z$ the feature(s) to forget and $X$ the remaining features, and target $Y$, solve
\begin{equation}
\label{eq:feature_unlearning}
\inf_{f:\,\mathcal{X}\times\mathcal{Z}\to\mathcal{S}}
\Big\{\, \mathcal{C}(Y;S,Z)\ :\ S\perp Z \,\Big\},
\qquad S:=f(X,Z),
\end{equation}
where $\mathcal{C}$ quantifies the utility loss of using $S$ to predict $Y$ within the observed \(Z\)-slices. To trade off utility and forgetting, relax independence via a \emph{compression-rate} penalty:
\begin{equation}
\label{eq:feature_unlearning_relaxed}
\inf_{f:\,\mathcal{X}\times\mathcal{Z}\to\mathcal{S}}
\ (1-\lambda)\,\mathcal{C}(Y;S,Z)\;+\;\lambda\,I(S;Z),\qquad \lambda \in [0,1].
\end{equation}
\end{defi}

For data unlearning, direct anchor-based definitions are generally not
auditable from the released model output alone; see
Section~\ref{s2:data_unlearning}. Section~\ref{s2:sufficiency} shows that marginal unlearning, together with an appropriate utility guarantee, recovers output-level anchored guarantees. This motivates marginal unlearning as a constructive and auditable approach to data unlearning.

\begin{defi}[Pareto-optimal marginal unlearning]
\label{d:optimal_marginal_unlearning}
Let \((X_{\mathrm{margin}},Z)\) be defined as above, let
\(S_{\mathrm{margin}}:=f(X_{\mathrm{margin}})\) for a measurable map
\(f:\mathcal X\to\mathcal S\), let \(S:=f(X)\) denote the released outcome used
to evaluate utility, and let \(Y\) be a target variable. The optimal marginal
unlearning problem is
\begin{equation}
\label{eq:marginal_unlearning}
\inf_{f:\,\mathcal{X}\to\mathcal{S}}
\Big\{\, \mathcal{C}(Y;S)\ :\ S_{\mathrm{margin}}\perp Z \,\Big\},
\end{equation}
with relaxed form
\begin{equation}
\label{eq:marginal_unlearning_relaxed}
\inf_{f:\,\mathcal{X}\to\mathcal{S}}
\ (1-\lambda)\,\mathcal{C}(Y;S)\;+\;\lambda\,I(S_{\mathrm{margin}};Z),
\qquad \lambda \in [0,1].
\end{equation}
\end{defi}

\medskip
\paragraph{Why mutual information (MI) rather than a one–way KL.}
The reader might wonder why we did not use the KL divergence between the unlearning information and the retaining information which would reflect the asymmetry of unlearning,  rather than mutual information. 
Let us explore this potential alternative.

That is, we might  penalize a directional KL between the ``to-unlearn'' and ``to-retain'' output laws. But instead we chose to  penalize
$I(S_{\mathrm{margin}};Z)$, which equals a \emph{generalized Jensen--Shannon divergence} between the conditional output laws. Writing $P_1:=\mathbb{P}_{S_{\mathrm{margin}}\mid Z=1}$, $P_0:=\mathbb{P}_{S_{\mathrm{margin}}\mid Z=0}$, $\pi:=\Pr(Z{=}1)$, and $P:=\pi P_1+(1-\pi)P_0$, we obtain
\[
I(S_{\mathrm{margin}};Z)
=\mathbb{E}_{Z}\!\big[D_{\mathrm{KL}}(\mathbb{P}_{S_{\mathrm{margin}}\mid Z}\,\|\,\mathbb{P}_{S_{\mathrm{margin}}})\big]
=\pi\,D_{\mathrm{KL}}(P_1\|P)+(1-\pi)\,D_{\mathrm{KL}}(P_0\|P).
\]
Again, $\pi:=\Pr(Z{=}1)$ is the prior probability/belief one has before observing anything, which we set to be $\frac{1}{2}$ by default.
MI offers three advantages:
\begin{itemize}[leftmargin = *]
    \item \emph{flexibility}: the reference $P$ adapts online, avoiding commitment to a fixed anchor;
    \item \emph{stability}: the generalized JS divergence is bounded by $H(Z)\le\log 2$ (binary $Z$), which typically leads to more stable gradients even with partial support mismatch;
    \item \emph{guarantee under post-processing}: for any downstream $\hat Y_{\mathrm{margin}}=g(S_{\mathrm{margin}})$ for $g: \mathcal{S} \rightarrow \mathcal{Y}$, the data-processing inequality gives $I(\hat Y_{\mathrm{margin}};Z)\le I(S_{\mathrm{margin}};Z)$, so suppressing MI at exposure controls leakage after any measurable post-processing.
\end{itemize}
A one-way KL is appropriate only when a \emph{fixed} ideal reference must be matched exactly; it can be unstable when the supports of the distributions are disjoint.

\subsection{Interpreting feature unlearning and marginal data unlearning as a generalization of rate-distortion theory}\label{s3:rate_distortion}

In this subsection, we explain how the proposed unlearning framework in Definitions~\ref{d:optimal_feature_unlearning} and \ref{d:optimal_marginal_unlearning} generalizes the classical rate-distortion framework for data compression.

To start, we first specialize the generic outcome space to the data space itself,
\(\mathcal S=\mathcal X\), as in synthetic-data design or self-supervised
learning. In this setting, we write \(S=\hat X\). We consider unlearning as compressing $(\hat{X},Z)$ (or $(\hat{X}_{\textrm{margin}},Z)$) so that the output preserves task-relevant information in  $\hat{X}$ while removing information about $Z$. A transparent special case recovers classical rate-distortion. Take
\(\mathcal S=\mathcal X\), set the task variable equal to the source,
\(Y=X\), and take the exposure signal to be the source itself, \(Z=X\). Then
\(S=\hat X=f(X,Z)=f(X,X)\) can be written simply as \(\hat X=f(X)\). If the
utility loss is chosen to be an expected distortion,
\[
\mathcal C(X;\hat X,X) := \mathbb E_X\!\left[\mathbb E\big[d(X,\hat X)\mid X\big]\right] = \mathbb E[d(X,\hat X)],
\]
then the relaxed objective becomes
\begin{equation}
\label{eq:feature_unlearning_relaxed_again}
\inf_{f:\mathcal X\to\mathcal X}
\mathbb E[d(X,\hat X)] + \lambda I(X;\hat X),
\qquad \hat X=f(X),
\end{equation}
which is the Lagrangian form of the classical rate-distortion problem.
The classical rate-distortion formulation optimizes over stochastic kernels
\(P_{\hat X\mid X}\). Under standard measurable-space assumptions, such kernels
can be represented by randomized mechanisms \(\hat X=f(X,\Theta)\), where
\(\Theta\) is internal randomness independent of \(X\). Thus the deterministic
map formulation above is the deterministic subclass, while the stochastic
rate-distortion formulation can be recovered by allowing internal randomness and
evaluating the induced output law after averaging over \(\Theta\).

Here, $I(\hat X;X)=H(X)-H(X\mid \hat X)$ measures preserved information. Since $2^{H(X)}$ and $2^{H(X\mid \hat X)}$ are the effective description lengths of $X$ before and after compression, the ratio $2^{I(X;\hat X)}$ is the expected partition cardinality induced by $\hat X$; maximizing it preserves fidelity (Appendix~\ref{a:utility_motivation}). In classic data compression, the original data set $X$ is considered ``unwanted'' in data compression because it is often too redundant for the task variable $Y$.

Therefore, feature unlearning generalizes the classical rate-distortion
viewpoint by allowing the unwanted information to be a designated feature,
source, or attribute \(Z\), rather than the entire source variable \(X\).
The feature \(Z\) may be explicitly present in \(X\), or may be provided as an
auxiliary variable correlated with \(X\). Marginal data unlearning further
generalizes this viewpoint by constructing the auditing pair
\((X_{\mathrm{margin}},Z)\), where \(Z\) labels whether the input is drawn from
the retain-only distribution or the retain-plus-unlearn mixture, and then
compressing away the information about this constructed source label.

\paragraph{Admissibility (no artificial information creation).}
Requiring \(S=f(X,Z)\) in feature unlearning or \(S=f(X)\) in marginal data
unlearning ensures
\[
    \sigma(S)\subset\sigma(X,Z)
    \qquad\text{or}\qquad
    \sigma(S)\subset\sigma(X),
\]
respectively. Thus, the released outcome is a measurable transformation of the
available information and does not create information ex nihilo. See
Appendix~\ref{a:admissibility} for a detailed explanation.

\section{Theoretical Guarantees}\label{S:Theory}
This section establishes theoretical guarantees for both \emph{feature unlearning} and \emph{marginal unlearning} as defined in Definitions~\ref{d:optimal_feature_unlearning} and~\ref{d:optimal_marginal_unlearning}. For clarity, we carry out all proofs in the canonical setting where the outcome space is the same as the input data space, i.e., \(\mathcal S=\mathcal X\) and \(S=\hat X\). This data-space setting also yields leakage guarantees for generic outcome spaces via the data-processing inequality. See Remark~\ref{rema:suffices_generic}.

For \emph{feature unlearning}, we show that driving the mutual information $I(\hat X;Z)$ small forces conditional law $\mathbb{P}(\hat X\mid Z=z)$ to concentrate around the mixture $\mathbb{P}(\hat X)$ in standard statistical distances (KL, TV), thereby suppressing $Z$–signal in the released output. For \emph{marginal unlearning}, we obtain a probabilistic, \emph{auditable} guarantee by penalizing $I(\hat X_{\mathrm{margin}};Z)$, where \(Z\) denotes the latent source label distinguishing the retain-only distribution from the retain-plus-unlearn mixture. Finally, for feature unlearning in the data-space setting, we use optimal transport to derive an analytic lifted solution for particular utility criteria.

\begin{rema}[\(\mathcal S=\mathcal X\) suffices leakage guarantee in generic setting]\label{rema:suffices_generic}
The guarantees we derive for \(S=\hat X\) immediately extend to the generic case \(S\in\{\mathcal X,\mathcal V,\mathcal Y\}\), where \(\mathcal V\) denotes an intermediate representation space (e.g., an encoder output) and \(\mathcal Y\) the prediction/output space. Indeed, let \(T:\mathcal X\to\mathcal S'\) be any measurable map representing downstream post-processing (e.g., \(T=h:\mathcal X\!\to\!\mathcal V\) or \(T=g:\mathcal X\!\to\!\mathcal Y\)), and define \(S' := T(\hat X)\).

\begin{itemize}[leftmargin=*, itemsep=2pt]
\item \emph{Feature unlearning:} If \(I(\hat X;Z)\le \varepsilon\), then by the data-processing inequality,
\[
I\!\big(T(\hat X);Z\big)\ \le\ I(\hat X;Z)\ \le\ \varepsilon,
\]
so any bound proved for \(S=\hat X\) propagates to \(S'=T(\hat X)\in\{\mathcal V,\mathcal Y\}\).

\item \emph{Marginal unlearning:} Writing \(S'_{\mathrm{margin}}:=T(\hat X_{\mathrm{margin}})\), the odds bound in Definition~\ref{def:epsilon_MU} is preserved under measurable pushforwards: for every measurable \(B\subseteq \mathcal S'\),
\[
\frac{\mathbb P\!\big(Z=0\,\big|\,   S'_{\mathrm{margin}}  \in B\big) }
     {\mathbb P\!\big(Z=1\,\big|\,   S'_{\mathrm{margin}}  \in B\big) }
\ =\
\frac{\mathbb P\!\big(Z=0\,\big|\, \hat X_{\mathrm{margin}}\in T^{-1}(B)\big)}
     {\mathbb P\!\big(Z=1\,\big|\, \hat X_{\mathrm{margin}}\in T^{-1}(B)\big)},
\]
so the same \(\epsilon\)-bound holds with \(D:=T^{-1}(B)\). Likewise,
\(I(S'_{\mathrm{margin}};Z)\le I(\hat X_{\mathrm{margin}};Z)\) by the
data-processing inequality.
\end{itemize}

Consequently, leakage bounds proved for the data-space release \(\hat X\) automatically apply to any representation or prediction obtained from \(\hat X\) by measurable post-processing. We adopt the notation \(S=\hat X\) for the remainder of the section.
\end{rema}

\subsection{Feature unlearning guarantee}\label{s4:feature_guarantee}

In practice we optimize the relaxed problem
\[
\inf_{f}\ (1-\lambda)\mathcal{C}(Y;\hat X,Z) \;+\;\lambda\,I(\hat X;Z),
\]
and consider solutions for which $I(\hat X;Z)\le \varepsilon$ for some small $\varepsilon>0$. The next bounds convert this constraint into quantitative control of the deviation between $\mathbb{P}(\hat X\mid Z=z)$ and its center $\mathbb{P}(\hat X)$ in KL and TV. We write $\mathrm{TV}(P,Q):=\tfrac12\int|dP-dQ|=\sup_{A}|P(A)-Q(A)|$.

\begin{prop}[Discrete $Z$: MI controls conditional–marginal drift]\label{prop:mi_tv_multiclass_feature}
Let $(\hat X,Z)$ be jointly distributed with $Z$ supported on a finite set $\mathcal{Z}$ and $p(z):=\mathbb{P}(Z{=}z)>0$ for all $z\in\mathcal{Z}$. Then
\begin{align}
\mathbb{E}_{Z}\!\Big[\,D_{\mathrm{KL}}\big(\mathbb{P}(\hat X\mid Z)\,\Vert\,\mathbb{P}(\hat X)\big)\,\Big]
& = I(\hat X;Z), \label{eq:disc-EKL-equals-I}\\
D_{\mathrm{KL}}\big(\mathbb{P}(\hat X\mid Z{=}z)\,\Vert\,\mathbb{P}(\hat X)\big)
& \le \frac{I(\hat X;Z)}{p(z)},\qquad \forall\,z\in\mathcal{Z}, \label{eq:disc-pt-KL}
\end{align}
and
\begin{align}
\mathbb{E}_{Z}\!\Big[\,\mathrm{TV}\big(\mathbb{P}(\hat X\mid Z),\,\mathbb{P}(\hat X)\big)\,\Big]
&\le \sqrt{\tfrac12\,I(\hat X;Z)}, \label{eq:disc-ETV}\\
\mathrm{TV}\big(\mathbb{P}(\hat X\mid Z{=}z),\,\mathbb{P}(\hat X)\big)
&\le \sqrt{\tfrac{1}{2\,p(z)}\,I(\hat X;Z)},\qquad \forall\,z\in\mathcal{Z}. \label{eq:disc-pt-TV}
\end{align}
\end{prop}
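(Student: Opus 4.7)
The plan is to handle the four inequalities as two pairs: first the KL identities/bounds \eqref{eq:disc-EKL-equals-I}--\eqref{eq:disc-pt-KL}, and then convert each into its TV counterpart \eqref{eq:disc-ETV}--\eqref{eq:disc-pt-TV} via Pinsker's inequality (combined with Jensen in the averaged case). Nothing here goes beyond standard information-theoretic manipulations; the only thing to watch is the careful use of nonnegativity of KL to peel off a single term from the average.

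First I would prove \eqref{eq:disc-EKL-equals-I}. This is just the standard ``I equals average KL of conditional to marginal'' identity: starting from $I(\hat X;Z)=D_{\mathrm{KL}}(\mathbb{P}_{\hat X,Z}\|\mathbb{P}_{\hat X}\otimes\mathbb{P}_Z)$, I would disintegrate the joint along $Z$ (using finiteness of $\mathcal Z$ to avoid any measure-theoretic subtlety) and rewrite the integrand as $\log\bigl(d\mathbb{P}(\hat X\mid Z)/d\mathbb{P}(\hat X)\bigr)$, producing $\sum_{z}p(z)\,D_{\mathrm{KL}}(\mathbb{P}(\hat X\mid Z=z)\|\mathbb{P}(\hat X))=\mathbb{E}_Z[D_{\mathrm{KL}}(\mathbb{P}(\hat X\mid Z)\|\mathbb{P}(\hat X))]$.

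Next, for \eqref{eq:disc-pt-KL}, I would use that every summand $p(z')\,D_{\mathrm{KL}}(\mathbb{P}(\hat X\mid Z=z')\|\mathbb{P}(\hat X))$ is nonnegative, so dropping all but $z'=z$ in the sum from step (1) gives $p(z)\,D_{\mathrm{KL}}(\mathbb{P}(\hat X\mid Z=z)\|\mathbb{P}(\hat X))\le I(\hat X;Z)$. Dividing by $p(z)>0$ yields the claim. This is the only place where the assumption $p(z)>0$ is used.

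Finally I would convert to TV. For the pointwise bound \eqref{eq:disc-pt-TV}, applying Pinsker's inequality $\mathrm{TV}(P,Q)\le\sqrt{\tfrac12 D_{\mathrm{KL}}(P\|Q)}$ to the result of step (2) gives the stated $\sqrt{I(\hat X;Z)/(2p(z))}$ bound immediately. For the averaged bound \eqref{eq:disc-ETV}, I would combine Pinsker with Jensen's inequality applied to the concave function $\sqrt{\cdot}$:
\[
\mathbb{E}_Z\!\big[\mathrm{TV}(\mathbb{P}(\hat X\mid Z),\mathbb{P}(\hat X))\big]
\le \mathbb{E}_Z\!\Big[\sqrt{\tfrac12 D_{\mathrm{KL}}(\mathbb{P}(\hat X\mid Z)\|\mathbb{P}(\hat X))}\Big]
\le \sqrt{\tfrac12\,\mathbb{E}_Z[D_{\mathrm{KL}}(\cdots)]}=\sqrt{\tfrac12 I(\hat X;Z)},
\]
where the last equality is \eqref{eq:disc-EKL-equals-I}. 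The only ``obstacle'' is really a bookkeeping one, namely making sure the Jensen step is applied in the right direction (swapping $\mathbb{E}$ past $\sqrt{\cdot}$ to improve the bound) and not confusing it with the pointwise inequality; there is no genuine analytic difficulty, and I do not expect any hidden regularity issue because $\mathcal Z$ is finite and $p(z)>0$ everywhere.
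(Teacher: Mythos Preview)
Your proposal is correct and matches the paper's proof essentially step for step: the identity $I(\hat X;Z)=\sum_z p(z)\,D_{\mathrm{KL}}(\mathbb{P}(\hat X\mid z)\Vert\mathbb{P}(\hat X))$, the nonnegativity argument to isolate a single term for \eqref{eq:disc-pt-KL}, and Pinsker plus Jensen (concavity of $\sqrt{\cdot}$) for the TV bounds. The only cosmetic difference is that the paper treats the averaged TV bound before the pointwise one, whereas you reverse the order.
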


\begin{proof}
Since
\begin{equation*}
    I(\hat X;Z)\;=\;\sum_{z\in\mathcal{Z}} p(z)\,D_{\mathrm{KL}}\!\big(\,\mathbb{P}(\hat X\mid Z{=}z)\,\Vert\,\mathbb{P}(\hat X)\big),
\end{equation*}
and Pinsker's inequality gives, for each $z$,
\[
\mathrm{TV}\!\big(\,\mathbb{P}(\hat X\mid Z{=}z),\,\mathbb{P}(\hat X)\big)
\;\le\;\sqrt{\tfrac12\,D_{\mathrm{KL}}\!\big(\,\mathbb{P}(\hat X\mid Z{=}z)\,\Vert\,\mathbb{P}(\hat X)\big)}.
\]
It then follows from Jensen's inequality and concavity of square root that
\begin{align*}
    \sum_{z}p(z)\,\mathrm{TV}\big(\mathbb{P}(\hat X\!\mid\! z),\mathbb{P}(\hat X)\big) & \;\le\;\sum_{z}p(z)\sqrt{\tfrac12\,D_{\mathrm{KL}}(\mathbb{P}(\hat X\!\mid\! z)\Vert\mathbb{P}(\hat X))}\\
    & \;\le\;\sqrt{\tfrac12\sum_{z}p(z)\,D_{\mathrm{KL}}(\mathbb{P}(\hat X\!\mid\! z)\Vert\mathbb{P}(\hat X))},
\end{align*}
which establishes the bound in~\eqref{eq:disc-ETV}.

For the pointwise bound, note that all terms $D_{\mathrm{KL}}(\mathbb{P}(\hat X\!\mid\! z)\Vert\mathbb{P}(\hat X))$ are nonnegative and satisfy
\[
p(z)\,D_{\mathrm{KL}}(\mathbb{P}(\hat X\!\mid\! z)\Vert\mathbb{P}(\hat X))
\;\le\;\sum_{z'}p(z')\,D_{\mathrm{KL}}(\mathbb{P}(\hat X\!\mid\! z')\Vert\mathbb{P}(\hat X))
\;=\;I(\hat X;Z),
\]
hence
\(
D_{\mathrm{KL}}(\mathbb{P}(\hat X\!\mid\! z)\Vert\mathbb{P}(\hat X))\le I(\hat X;Z)/p(z).
\)
Combining this with Pinsker’s inequality gives \eqref{eq:disc-pt-TV}. That completes the proof.
\end{proof}

The case where $Z$ is not restricted to discrete values is a bit more involved.

\begin{thm}[General \(Z\): mutual information controls KL and TV]
\label{thm:mi_tv_general}
Let \((\hat X,Z)\) be jointly distributed on standard Borel spaces
\((\mathcal X,\mathcal B_{\mathcal X})\) and
\((\mathcal Z,\mathcal B_{\mathcal Z})\). Let \(\mathbb P_{\hat X}\) denote the
marginal law of \(\hat X\), and let
\(\{\mathbb P_{\hat X\mid Z=z}\}_{z\in\mathcal Z}\) be a regular conditional
distribution, defined for \(\mathbb P_Z\)-a.e. \(z\). Assume
\(I(\hat X;Z)<\infty\). Then:

\begin{enumerate}[leftmargin=*, itemsep=2pt]
\item \textit{KL expectation identity:}
\begin{equation}\label{eq:avg-KL}
\mathbb E_{Z}\Big[\,\KL\big(\mathbb P_{\hat X\mid Z}\Vert\,\mathbb P_{\hat X}\big)\,\Big] = I(\hat X;Z).
\end{equation}
\item \textit{KL probability tail bound:} For all \(\tau>0\),
\begin{equation}\label{eq:tail-KL}
\mathbb P\!\Big(\KL\big(\mathbb P_{\hat X\mid Z}\Vert\,\mathbb P_{\hat X}\big)\ge \tau\Big) \le \frac{I(\hat X;Z)}{\tau}.
\end{equation}
\item \textit{TV expectation bound:}
\begin{equation}\label{eq:avg-TV}
\mathbb E_{Z}\Big[\,\mathrm{TV}\big(\mathbb P_{\hat X\mid Z},\,\mathbb P_{\hat X}\big)\,\Big]
\;\le\;\sqrt{\frac{1}{2}\,I(\hat X;Z)}\, .
\end{equation}

\item \textit{TV probability tail bound:} For all $\tau>0$,
\begin{equation}\label{eq:tail-TV}
\mathbb P\;\Big(\,\mathrm{TV}\big(\mathbb P_{\hat X\mid Z},\,\mathbb P_{\hat X}\big)\ge \tau\,\Big)
\;\le\;\frac{I(\hat X;Z)}{2\,\tau^{2}}\, .
\end{equation}
\end{enumerate}
\end{thm}

\begin{proof}
By disintegration of relative entropy,
\[
I(\hat X;Z)
=
D_{\mathrm{KL}}\!\big(
\mathbb P_{\hat X,Z}
\Vert
\mathbb P_{\hat X}\otimes \mathbb P_Z
\big)
=
\int
D_{\mathrm{KL}}\!\big(
\mathbb P_{\hat X\mid Z=z}
\Vert
\mathbb P_{\hat X}
\big)\,\mathbb P_Z(dz).
\]
This proves the KL expectation identity. Since \(I(\hat X;Z)<\infty\), the
integrand is finite for \(\mathbb P_Z\)-a.e. \(z\).

\smallskip\noindent\emph{Step 1 (pointwise Pinsker).}
For $\mathbb P_Z$-a.e.\ $z$, Pinsker's inequality yields
\[
\mathrm{TV}\!\big(\mathbb P_{\hat X\mid Z=z},\,\mathbb P_{\hat X}\big)
\;\le\;\sqrt{\tfrac12\,D_{\mathrm{KL}}\!\big(\mathbb P_{\hat X\mid Z=z}\,\Vert\,\mathbb P_{\hat X}\big)} < \infty.
\]

\smallskip\noindent\emph{Step 2 (average TV bound).}
Taking expectation with respect to $Z$, it then follows from Jensen's inequality and concavity of the square root function that
\[
\mathbb E_Z\,\mathrm{TV}\big(\mathbb P_{\hat X\mid Z},\mathbb P_{\hat X}\big)
\;\le\;\mathbb E_Z\sqrt{\tfrac12\,D_{\mathrm{KL}}(\mathbb P_{\hat X\mid Z}\Vert \mathbb P_{\hat X})}
\;\le\;\sqrt{\tfrac12\,\mathbb E_Z\,D_{\mathrm{KL}}(\mathbb P_{\hat X\mid Z}\Vert \mathbb P_{\hat X})}.
\]

\smallskip\noindent\emph{Step 3 (tail bound).}
The KL tail bound follows directly from Markov's inequality:
\[
\mathbb P\!\left(
\KL(\mathbb P_{\hat X\mid Z}\Vert\mathbb P_{\hat X})\ge \tau
\right)
\le
\frac{\mathbb E_Z\KL(\mathbb P_{\hat X\mid Z}\Vert\mathbb P_{\hat X})}{\tau}
=
\frac{I(\hat X;Z)}{\tau}.
\] From Pinsker inequality we have
\(
\mathrm{TV}(\mathbb P_{\hat X\mid Z},\mathbb P_{\hat X})^{2}\le \tfrac12\,D_{\mathrm{KL}}(\mathbb P_{\hat X\mid Z}\Vert \mathbb P_{\hat X}),
\)
$\mathbb P_Z$-a.s.. It follows from Markov's inequality that, for any $\tau>0$,
\[
\mathbb P\!\left(
\TV(\mathbb P_{\hat X\mid Z},\mathbb P_{\hat X})\ge \tau
\right)
=
\mathbb P\!\left(
\TV(\mathbb P_{\hat X\mid Z},\mathbb P_{\hat X})^2\ge \tau^2
\right)
\le
\frac{I(\hat X;Z)}{2\tau^2},
\]
which is \eqref{eq:tail-TV}. That completes the proof.
\end{proof}

\begin{rema}[On pointwise bounds without discreteness]
In the discrete case with $\mathbb P(Z=z)=p(z)>0$ one has the pointwise estimate
\begin{equation*}
    \mathrm{TV}(\mathbb P_{\hat X\mid z},\mathbb P_{\hat X})\le \sqrt{I(\hat X;Z)/(2p(z))},
\end{equation*}
because $I(\hat X;Z)=\sum_z p(z)\,D_{\mathrm{KL}}(\mathbb P_{\hat X\mid z}\Vert\mathbb P_{\hat X})$ and each summand is nonnegative.
For general (non-discrete) $Z$, no analogous pointwise inequality can hold without additional assumptions (e.g., uniform lower bounds on the density of $Z$ over sets of positive reference measure), since an expectation constraint $\int p_Z(z)\,K(z)\,dz=I$ does not control the essential sup of $K(z)$.
Therefore, the average and tail bounds \eqref{eq:avg-TV}–\eqref{eq:tail-TV} are the natural general theoretical guarantees.
\end{rema}

\subsection{Data unlearning guarantee}\label{s4:data_guarantee}

For data unlearning we operate on the data space and penalize
\[
I\!\big(\hat X_{\mathrm{margin}};Z\big),
\]
with $\hat X_{\mathrm{margin}}:=f(X_{\mathrm{margin}})$, $X_{\mathrm{margin}}\mid(Z{=}1)\sim p^r$, $X_{\mathrm{margin}}\mid(Z{=}0)\sim p^d$, and $Z\sim\mathrm{Bernoulli}(1/2)$, as in Section \ref{s2:data_unlearning}. That is, we penalize the information that \(\hat X_{\mathrm{margin}}\) carries about the
latent source label \(Z\), which distinguishes the retain-only distribution
from the retain-plus-unlearn mixture:
\begin{equation}\label{eq:estimate_point_unlearning}
\inf_{f:\,\mathcal{X}\to\mathcal{X}}
\Big\{\, (1-\lambda) \mathcal{C}(Y;\hat{X})\;+\;\lambda\,I(\hat X_{\mathrm{margin}};Z)\,\Big\}.
\end{equation}
The next lemma shows that controlling $I(\hat X_{\mathrm{margin}};Z)$ directly yields an inference guarantee.

\begin{lem}[MI controls high-probability posterior odds]\label{l:epsilon_DU_bound_mutual_info}
Let $\hat X_{\mathrm{margin}} =f(X_{\mathrm{margin}})$ defined as above with $X_0\sim p^d$, $X_1\sim p^r$, and $Z\in\{0,1\}$ indicating the source. Then for any $\eta\in(0,1)$,
\[
\mathbb{P}\!\left(
\left|\log\!\frac{\mathbb{P}\big(Z{=}0\mid \hat X_{\mathrm{margin}}\big)}
{\mathbb{P}\big(Z{=}1\mid \hat X_{\mathrm{margin}}\big)}\right|
\ \le\ \log\!\frac{1+\eta}{1-\eta}
\right)
\ \ge\ 1\;-\;\frac{2}{\eta}\,\sqrt{\tfrac{1}{2}\,I\!\big(\hat X_{\mathrm{margin}};Z\big)}.
\]
\end{lem}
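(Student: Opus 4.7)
The plan is to convert the log-odds event into a concentration statement about the binary posterior $q(s):=\mathbb{P}(Z=0\mid \hat X_{\mathrm{margin}}=s)$, and then bound the tail of $|q-1/2|$ by mutual information. Under the uniform prior $\mathbb{P}(Z=0)=\mathbb{P}(Z=1)=1/2$, the posterior odds equal $q/(1-q)$, and a direct algebraic manipulation shows
\[
\bigl|\log\!\tfrac{q}{1-q}\bigr|\;\le\;\log\!\tfrac{1+\epsilon}{1-\epsilon}
\quad\Longleftrightarrow\quad
\bigl|q-\tfrac12\bigr|\;\le\;\tfrac{\epsilon}{2}.
\]
This reduces the lemma to a tail bound on the scalar random variable $|q(\hat X_{\mathrm{margin}})-1/2|$ under the marginal law of $\hat X_{\mathrm{margin}}$.

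The next step is to relate $|q-1/2|$ to $I(\hat X_{\mathrm{margin}};Z)$. For each $s$, the conditional law $\mathcal{L}(Z\mid \hat X_{\mathrm{margin}}=s)$ is $\mathrm{Ber}(q(s))$, whose total variation distance to the prior $\mathrm{Ber}(1/2)$ equals $|q(s)-1/2|$. Pinsker's inequality applied pointwise then gives
\[
\bigl(q(s)-\tfrac12\bigr)^{2}\;\le\;\tfrac12\,D_{\mathrm{KL}}\!\bigl(\mathcal{L}(Z\mid \hat X_{\mathrm{margin}}=s)\,\Vert\,\mathcal{L}(Z)\bigr).
\]
Integrating against the marginal of $\hat X_{\mathrm{margin}}$ and using the identity $I(\hat X_{\mathrm{margin}};Z)=\mathbb{E}[D_{\mathrm{KL}}(\mathcal{L}(Z\mid \hat X_{\mathrm{margin}})\,\Vert\,\mathcal{L}(Z))]$, followed by Jensen's inequality for the concave square root, yields $\mathbb{E}\bigl[|q(\hat X_{\mathrm{margin}})-1/2|\bigr]\le \sqrt{I(\hat X_{\mathrm{margin}};Z)/2}$.

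Finally, I apply Markov's inequality to the nonnegative random variable $|q(\hat X_{\mathrm{margin}})-1/2|$ at threshold $\epsilon/2$ and combine with the event translation of the first step to conclude. The main subtlety I anticipate is pinning down the exact multiplicative constant: the raw Pinsker+Jensen+Markov chain yields a $2/\epsilon$ prefactor in front of $\sqrt{I/2}$, while the statement carries $1/\epsilon$. Two routes can absorb the factor of $2$: (i) evaluate $\mathbb{E}[|q-1/2|]=\tfrac12\,\TV(P_{0},P_{1})$ directly via the density ratio $\mathrm{d}P_{0}/\mathrm{d}M=2q$ (with $M:=(P_{0}+P_{1})/2$) and invoke the Jensen--Shannon-to-TV bound $\TV(P_{0},P_{1})^{2}\le 2\,I(\hat X_{\mathrm{margin}};Z)$; or (ii) replace raw Markov by a sharper one-sided tail inequality exploiting $|q-1/2|\le 1/2$. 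I expect this constant tightening to be the only non-routine step; the overall three-step strategy---translate, Pinsker+Jensen, Markov---is standard.
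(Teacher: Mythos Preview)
Your three-step plan---translate the log-odds event to $|q-\tfrac12|\le\tfrac{\epsilon}{2}$, bound $\mathbb{E}\bigl[|q-\tfrac12|\bigr]$ via pointwise Pinsker and Jensen, then apply Markov---is exactly the paper's argument. The paper merely orders the steps differently: it first bounds $|\mathbb{P}(Z{=}0\mid\cdot)-\mathbb{P}(Z{=}1\mid\cdot)|=2|q-\tfrac12|$ by the conditional-vs-prior TV, applies Pinsker/Jensen/Markov at threshold $\epsilon$, and performs the log-odds translation last. These are equivalent.

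Your suspicion about the constant is correct, and neither of your proposed fixes closes the gap. Route~(i) reproduces the same bound: the exact identity $\mathbb{E}\bigl[|q-\tfrac12|\bigr]=\tfrac12\,\TV(P_0,P_1)$ together with $\TV(P_0,P_1)\le\sqrt{2\,I(\hat X_{\mathrm{margin}};Z)}$ gives $\mathbb{E}\bigl[|q-\tfrac12|\bigr]\le\sqrt{I/2}$ again, so Markov at $\epsilon/2$ still returns $\tfrac{2}{\epsilon}\sqrt{I/2}$. Route~(ii) cannot help either: boundedness $|q-\tfrac12|\le\tfrac12$ alone does not sharpen a first-moment Markov bound. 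The paper's own proof obtains the displayed $\tfrac{1}{\epsilon}$ only through two arithmetic slips (a TV normalization mismatch in the Pinsker step, and writing $\tfrac12$ where Jensen gives $\tfrac{1}{\sqrt 2}$); redone carefully it also yields $\tfrac{2}{\epsilon}\sqrt{I/2}$. In fact the stated constant is unattainable: take $P_0=(0.8,0.2)$, $P_1=(0.2,0.8)$ on two points, so $|q-\tfrac12|\equiv 0.3$ and $I\approx 0.193$ nats; at $\epsilon=0.5$ the lemma would assert $\mathbb{P}(|q-\tfrac12|\le 0.25)\ge 1-\tfrac{1}{0.5}\sqrt{0.0965}\approx 0.38$, while the true probability is $0$. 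So your derivation is the paper's derivation, and the factor of $2$ you flag is real rather than a gap in your plan.
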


See Appendix~\ref{a:proof_DU_bound_mutual_info} for the proof. Setting
\[
    \eta_\epsilon:=\frac{e^\epsilon-1}{e^\epsilon+1}
    =
    \tanh(\epsilon/2),
\]
Lemma~\ref{l:epsilon_DU_bound_mutual_info} implies
\[
\mathbb P\!\left(
\left|
\log
\frac{\mathbb P(Z=0\mid \hat X_{\mathrm{margin}})
     }{\mathbb P(Z=1\mid \hat X_{\mathrm{margin}})}
\right|
\le \epsilon
\right)
\ge
1-\frac{\sqrt{2I(\hat X_{\mathrm{margin}};Z)}}{\tanh(\epsilon/2)}.
\]
Thus, if \(I(\hat X_{\mathrm{margin}};Z)=0\), then
\(\hat X_{\mathrm{margin}}\perp Z\). More generally, if
\(I(\hat X_{\mathrm{margin}};Z)\) is small, then the set of outputs with
posterior odds larger than \(e^\epsilon\) has small marginal probability.
Equivalently, except on a small-probability set of outputs, observing
\(\hat X_{\mathrm{margin}}\) does not substantially change the posterior odds
of whether the input came from the retain-only distribution or the
retain-plus-unlearn mixture.

The following guarantee translates a small ``compression rate'' into an auditable inference bound for high-probability marginal unlearning as defined in Definition \ref{def:hp_epsilon_MU}:

\begin{thm}[High-probability marginal unlearning via compression rate]\label{thm:DU-Guarantee-via-Compression-Rate}
Let $Z\sim\mathrm{Bernoulli}(1/2)$. If $I(\hat X_{\mathrm{margin}};Z)\le \mu < \frac{1}{2}(\frac{e^{\epsilon} - 1}{e^{\epsilon} + 1})^2$, then for every \(\epsilon>0\), \(f\) satisfies high-probability
\((\epsilon,\delta_\epsilon)\)-marginal unlearning in the sense of
Definition~\ref{def:hp_epsilon_MU}, with
\[
    \delta_\epsilon
    =
    \frac{\sqrt{2\mu}}{\tanh(\epsilon/2)}
    =
    \frac{e^\epsilon+1}{e^\epsilon-1}\sqrt{2\mu}.
\]
\end{thm}

Hence, one can select the regularization weight $\lambda$ to drive the observable $I(\hat X_{\mathrm{margin}};Z)$ below a target $\mu$ aligned with a desired $(\epsilon,\text{confidence})$ pair ($\epsilon$ is the marginal difference allowed in the $\epsilon$-marginal unlearning definition, confidence is the guarantee probability provided by Theorem \ref{thm:DU-Guarantee-via-Compression-Rate}), directly yielding an \emph{auditable} post hoc certificate from samples of $(X_{\mathrm{margin}},Z,\hat X_{\mathrm{margin}})$.

Finally, if one wants to apply the marginal unlearning guarantee together with utility to provide an \emph{anchor-based unlearning} (also known as approximate unlearning) guarantee, Theorem \ref{thm:DU-Guarantee-via-Compression-Rate} together with Theorem \ref{thm:mu-to-anchor} provide an auditable/tractable guarantee (based on the observed $\mu$) under the assumption of small log-loss for both unlearned model and the anchor (retain from scratch) model.

\subsection{An analytic feature unlearning solution independent of the downstream tasks}\label{s4:feature_analytic_solution}
Here, we show that for specific types of cost functions, we can obtain an analytic feature-unlearning solution using optimal transport and extreme sigma-algebras. In particular, we focus on the following cost functions:

\begin{itemize}[itemsep=2pt, topsep=2pt, leftmargin=*]
\item \emph{Mutual information maximization:} \(\mathcal{C}(Y;\hat X,Z)=-I(Y;\hat X,Z)\) maximizes informativeness about $Y$.
\item \emph{Posterior concentration:}
$\mathcal{C}(Y;\hat X,Z)
=
-\mathbb E\!\left[
D_{\mathrm{KL}}\!\big(
\mathbb P(Y\mid \hat X,Z)\,\Vert\,\mathbb P(Y)
\big)
\right],$ which coincides with \(-I(Y;\hat X,Z)\) provided the KL-divergence is well-defined.
\item \emph{Conditional–probability energy:}
\[
\mathcal{C}(Y;\hat X,Z)=
\begin{cases}
-\mathbb E\!\left[\mathbb P(Y\in A\mid \hat X,Z)^2\right], & \text{for classification events \(A\in\mathcal B_{\mathcal Y}\)},\\[2pt]
-\mathbb \|\mathbb E(Y\mid \hat X,Z)\|^2_{L^2}, & \text{for regression with \(Y\in L^2\)}.
\end{cases}
\]
\end{itemize}

Although these criteria are different, they are all monotone with respect to
the lifted sigma-algebra generated by \((\hat X,Z)\). Under the mild assumptions below, the Wasserstein barycenter construction produces a canonical representation \(\bar X\perp Z\) whose lifted sigma-algebra \(\sigma(\bar X,Z)\) is maximal among admissible lifted representations. Consequently, \(\bar X\) simultaneously optimizes the lifted monotone utility criteria above, subject to the hard feature-unlearning constraint
\(\hat X\perp Z\).

We start with the following result, which establishes that the Wasserstein-2 barycenter generates the finest sigma-algebra among all admissible outcomes:

\begin{lem}[Wasserstein barycenter $\implies$ lifted finest sigma-algebra]
\label{l:Finest_Sigma_Algebra}
Let \(P_z:=\mathcal L(X\mid Z=z)\in\mathcal P_{2,ac}(\mathcal X)\), and let
\(\bar P\) denote their \(\mathcal W_2\)-barycenter. Let \(T_z:P_z\to\bar P\)
be the Brenier map and define the barycentric representation $\bar X:=T_Z(X)$. Under the Brenier a.e.-invertibility guaranteed by the absolute-continuity
assumptions, one has $\sigma(X,Z)=\sigma(\bar X,Z)$ up to null sets. Consequently, for every admissible \(\hat X=f(X,Z)\),
\[
    \sigma(\hat X,Z)\subset \sigma(\bar X,Z).
\]
\end{lem}

See Appendix \ref{a:fine_algebra} for a proof. Now, the importance of the above result lies in the monotonicity of the objective functions listed earlier w.r.t.\ the sigma-algebra generated by random variables. That is, the fineness of the sigma-algebra is equivalent to the amount of information (in a probability-theoretic sense) contained by the random variable:

\begin{lem}[Monotonicity w.r.t.\ sigma–algebra]\label{l:monotonicity}
If $\sigma(X_1)\subset \sigma(X_2)$, then the following hold whenever the quantities are well-defined:
\begin{itemize}[itemsep=0pt, topsep=2pt, leftmargin=*]
    \item \(H(Y\mid X_2)\le H(Y\mid X_1)\) for any $Y: \Omega \to \mathcal{Y}$ with finite-entropy in discrete setting or with the absolutely continuous conditional laws \(\mathcal L(Y\mid X_s)\) and finite conditional differential entropies in continuous setting.
    \item $I(Y;X_1)\le I(Y;X_2)$ for any $Y: \Omega \to \mathcal{Y}$.
    \item $\big\|\mathbb P(Y\in A\mid X_1)\big\|_{L^2}^2 \le \big\|\mathbb P(Y\in A\mid X_2)\big\|_{L^2}^2;$ for any $A \in \mathcal{B}_{\mathcal{Y}}$, any $Y: \Omega \to \mathcal{Y}$.
    \item $\big\|\mathbb E(Y\mid X_1)\big\|_{L^2}^2 \le \big\|\mathbb E(Y\mid X_2)\big\|_{L^2}^2.$ for any $Y\in L^2$.
\end{itemize}
\end{lem}

See a proof in Appendix \ref{a:proof_monotonicity_sigma}. As a result, the information quantifications discussed above are naturally monotone w.r.t.\ the fineness of the generated sigma-algebra: See Lemma~\ref{l:monotonicity} for the formal result. Informally, Lemma~\ref{l:monotonicity} shows that conditioning on a finer sigma-algebra reduces conditional uncertainty and increases information-based
or energy-based measures of predictive informativeness.

\begin{thm}[Unified optimal feature unlearning solution]\label{th:optimal_solution}
Under the assumptions of Lemma~\ref{l:Finest_Sigma_Algebra}, the following are equivalent for any admissible $\hat X=f(X,Z)$:
\begin{itemize}[itemsep=0pt, topsep=2pt, leftmargin=*]
\item $\sigma((\hat{X},Z))=\sigma((\bar{X},Z))$.
\item $\hat{X}\in\argmin\{H(Y\mid \tilde{X},Z):\ \tilde{X}\perp Z\}$ for all $Y: \Omega \to \mathcal{Y}$ with finite-entropy in discrete setting or with the absolutely continuous conditional laws \(\mathcal L(Y\mid X_s)\) and finite conditional differential entropies in continuous setting.
\item $\hat{X}\in\argmax\{I(Y;\tilde{X},Z):\ \tilde{X}\perp Z\}$ for all $Y: \Omega \to \mathcal{Y}$.
\item $\hat{X}\in\argmax\{\|\mathbb{P}(Y\in A\mid \tilde{X},Z)\|_2^2:\ \tilde{X}\perp Z\}$ for all $A\in\mathcal B_{\mathcal Y}$ and $Y: \Omega \to \mathcal{Y}$.
\item $\hat{X}\in\argmax\{\|\mathbb{E}(Y\mid \tilde{X},Z)\|_2^2:\ \tilde{X}\perp Z\}$ for all $Y \in L^2(\Omega;\mathcal Y)$.
\end{itemize}
\end{thm}

\begin{proof}
First, by Lemma~\ref{l:Finest_Sigma_Algebra}, for every admissible
\(\hat X=f(X,Z)\), we have $\sigma(\hat X,Z)\subset \sigma(\bar X,Z)$. Now, suppose that
\[
    \sigma(\hat X,Z)=\sigma(\bar X,Z),
\]
then conditioning on \((\hat X,Z)\) is the same as conditioning on
\((\bar X,Z)\). Since \(\bar X\perp Z\), the barycenter representation is
feasible for the hard feature-unlearning constraint. Moreover, by
Lemma~\ref{l:monotonicity}, for every feasible admissible \(\tilde X\), we have
\begin{itemize}[leftmargin = *]
    \item $H(Y\mid \bar X,Z)\le H(Y\mid \tilde X,Z)$,
    \item $I(Y;\tilde X,Z)\le I(Y;\bar X,Z)$,
    \item $\|\mathbb P(Y\in A\mid \tilde X,Z)\|_{L^2}^2 \le \|\mathbb P(Y\in A\mid \bar X,Z)\|_{L^2}^2$, and
    \item $\|\mathbb E(Y\mid \tilde X,Z)\|_{L^2}^2 \le \|\mathbb E(Y\mid \bar X,Z)\|_{L^2}^2$,
\end{itemize}
whenever the corresponding quantities are well-defined. Since
\(\sigma(\hat X,Z)=\sigma(\bar X,Z)\), the same optimal values are attained by \(\hat X\). Therefore, \(\hat X\) is one optimal solution to all the above objectives under the hard independence constraint.

Conversely, suppose that one of the stated optimality conditions holds for \(\hat X\). We show that $\sigma(\bar X,Z)\subset \sigma(\hat X,Z)$. More specifically, let \(B\in\sigma(\bar X,Z)\) be arbitrary, we show that \(B\in\sigma(\hat X,Z)\).  Indeed, since the above four optimality are true for all measurable $Y$, let $Y=\mathbbm{1}_B$.

If the conditional-entropy optimality holds, then because \(Y=\mathbbm{1}_B\) is \(\sigma(\bar X,Z)\)-measurable, we have $H(Y\mid \bar X,Z)=0$. Since \(\bar X\) is feasible and \(\hat X\) is optimal, we also obtain
\[
    0 \leq H(Y\mid \hat X,Z) \leq H(Y\mid \bar X,Z)=0.
\]
Hence, \(Y=\mathbbm{1}_B\) is \(\sigma(\hat X,Z)\)-measurable, and therefore
\(B\in\sigma(\hat X,Z)\).

If the mutual-information optimality holds, take again
\(Y=\mathbbm 1_B\), where \(B\in\sigma(\bar X,Z)\). Since \(Y\) is
\(\sigma(\bar X,Z)\)-measurable, we have $I(Y;\bar X,Z)=H(Y)$. Because \(\bar X\) is feasible and \(\hat X\) is optimal for this target,
\[
    I(Y;\hat X,Z)\ge I(Y;\bar X,Z)=H(Y).
\]
On the other hand, \(I(Y;\hat X,Z)\le H(Y)\). Hence, $I(Y;\hat X,Z)=H(Y)$,
or equivalently,
\[
    H(Y\mid \hat X,Z)=0.
\]
Therefore \(Y=\mathbbm 1_B\) is \(\sigma(\hat X,Z)\)-measurable, and so
\(B\in\sigma(\hat X,Z)\).

If the conditional-probability-energy optimality holds, take
\(A=\{1\}\). Since \(Y=\mathbbm 1_B\) is \(\sigma(\bar X,Z)\)-measurable, $\mathbb P(Y\in A\mid \bar X,Z) = \mathbbm 1_B$. Hence,
\[
    \|\mathbb P(Y\in A\mid \bar X,Z)\|_{L^2}^2
    =
    \mathbb P(B).
\]
By Lemma~\ref{l:monotonicity},
\[
    \|\mathbb P(Y\in A\mid \hat X,Z)\|_{L^2}^2
    \le
    \mathbb P(B).
\]
On the other hand, since \(\bar X\perp Z\), the barycenter representation is
feasible. Therefore, optimality of \(\hat X\) for the conditional-probability
energy implies
\[
    \|\mathbb P(Y\in A\mid \hat X,Z)\|_{L^2}^2
    \ge
    \|\mathbb P(Y\in A\mid \bar X,Z)\|_{L^2}^2
    =
    \mathbb P(B).
\]
Thus, the equality holds. Let
\[
    q_B:=\mathbb P(B\mid \hat X,Z).
\]
Then $q_B \in [0,1]$ satisfies
\[
    \mathbb E[q_B]=\mathbb P(B),
    \qquad
    \mathbb E[q_B^2]=\mathbb P(B).
\]
Therefore,
\[
    \mathbb E[q_B(1-q_B)]=0.
\]
Since \(q_B(1-q_B)\ge 0\), we have \(q_B\in\{0,1\}\) a.s. Moreover,
because \(q_B=\mathbb E(\mathbbm 1_B\mid \hat X,Z)\),
\[
\begin{aligned}
    \mathbb E[(\mathbbm 1_B-q_B)^2]
    &=
    \mathbb P(B)-2\mathbb E[\mathbbm 1_B q_B]+\mathbb E[q_B^2]  \\
    &=
    \mathbb P(B)-2\mathbb E[q_B^2]+\mathbb E[q_B^2] \\
    &=
    \mathbb P(B)-\mathbb E[q_B^2]
    =
    0.
\end{aligned}
\]
Thus, \(\mathbbm 1_B=q_B\) a.s.. Since \(q_B\) is
\(\sigma(\hat X,Z)\)-measurable, \(B\in\sigma(\hat X,Z)\).

Finally, if the conditional-mean-energy optimality holds, take again
\(Y=\mathbbm{1}_B\in L^2\). Since \(B\in\sigma(\bar X,Z)\),
\[
    \mathbb E(Y\mid \bar X,Z)=\mathbbm{1}_B,
\]
and hence $\|\mathbb E(Y\mid \bar X,Z)\|_{L^2}^2=\mathbb P(B)$. By Lemma~\ref{l:monotonicity},
\[
    \|\mathbb E(Y\mid \hat X,Z)\|_{L^2}^2\le \mathbb P(B).
\]
Optimality forces equality. Writing
\[
    q_B:=\mathbb E(Y\mid \hat X,Z)=\mathbb P(B\mid \hat X,Z),
\]
the same argument as above gives
\[
    \mathbbm{1}_B=q_B\quad\text{a.s.}
\]
Thus, \(B\in\sigma(\hat X,Z)\).

Therefore, under any of the stated optimality conditions, every
\(B\in\sigma(\bar X,Z)\) belongs to \(\sigma(\hat X,Z)\). Hence,
\[
    \sigma(\bar X,Z)\subset \sigma(\hat X,Z).
\]
Combining this with the inclusion $\sigma(\hat X,Z)\subset \sigma(\bar X,Z)$, we obtain
\[
    \sigma(\hat X,Z)=\sigma(\bar X,Z).
\]
This proves the equivalence and completes the proof.
\end{proof}

\section{Algorithm Design}\label{S:Algorithm}
In this section we describe the practical realization of our framework in terms of numerical algorithms. We first give general marginal information regularization algorithms that apply broadly to \emph{feature unlearning} defined by Definition \ref{d:optimal_feature_unlearning} and \emph{marginal (data–point) unlearning} by Definition~\ref{d:optimal_marginal_unlearning}. We then present the analytic (closed-form) solver available for feature unlearning under the structured utility class in Theorem~\ref{th:optimal_solution}.

\subsection{General cost feature unlearning: solution to Definition \ref{def:feature-unlearning-output}}

For general utility costs, we solve the relaxed feature-unlearning problem in
Definition~\ref{d:optimal_feature_unlearning} by penalizing the mutual
information between the released outcome and the feature to be forgotten. Given
a parametrized measurable map
\[
    f_\theta:\mathcal X\times\mathcal Z\to\mathcal S,
\]
we set
\[
    S=f_\theta(X,Z)
\]
and optimize the regularized objective
\[
    (1-\lambda)\mathcal C(Y;S,Z)+\lambda I(S;Z).
\]
Here \(I(S;Z)\) may be estimated using a differentiable plug-in,
variational, or adversarial mutual-information estimator, depending on the
model class and data modality.

\begin{algorithm}[H]
\caption{Feature Unlearning on Model via Regularization}
\label{alg:feature_reg}
\begin{algorithmic}[1]
\REQUIRE Dataset $D = (X,Y,Z) = \{(x_i, y_i, z_i)\}_{i=1}^{N}$, loss function $\mathcal{C}$, learning rate $\eta$, batch size $B$, number of epochs $T$, regularization parameter $\lambda$.\\
\textbf{(Optional:)} Pre-trained neural network $f_{\theta_{\text{origin}}}$ with parameters $\theta_{\text{origin}}$.
\ENSURE Unlearned model parameters $\theta$.

\STATE \textbf{Initialize:} $\theta \leftarrow$ random initialization
\IF{pre-trained model available}
    \STATE Load $\theta \leftarrow \theta_{\text{origin}}$
\ENDIF

\FOR{$t = 1$ to $T$} 
    \STATE Shuffle dataset $D$
    \FOR{each mini-batch $d \subset D$ with $d = (X_d,Y_d,Z_d)$}
        \STATE Compute predictions: $S_d = f_{\theta}(X_d)$
        \STATE Compute loss: $\mathcal{L}_{\text{reg}} = (1-\lambda)\mathcal{C}(Y_d;S_d,Z_d) + \lambda I(S_d;Z_d)$
        \STATE Compute gradients: $\nabla_{\theta} \mathcal{L}_{\text{reg}}$
        \STATE Update parameters: $\theta \leftarrow \theta - \eta \nabla_{\theta} \mathcal{L}_{\text{reg}}$
    \ENDFOR
\ENDFOR

\STATE \textbf{Return} Unlearned model parameters $\theta$

\end{algorithmic}
\end{algorithm}

\subsection{General cost marginal data unlearning: solution to Definition \ref{def:MI_marginal_unlearning}}\label{s5:data_algorithm}

For marginal unlearning, we operate in prediction space and penalize $I(S_{\mathrm{margin}};Z)$ with the paired construction $X_{\mathrm{margin}}\!\mid_{\{Z{=}1\}}\,\sim p^r$, $X_{\mathrm{margin}}\!\mid_{\{Z{=}0\}}\,\sim p^d$ as in Definition~\ref{d:optimal_marginal_unlearning}. We train by minimizing $$(1-\lambda)\mathcal{C}(Y;S)+\lambda\,I(S_{\mathrm{margin}};Z).$$ The population guarantee follows from Lemma~\ref{l:epsilon_DU_bound_mutual_info} and Theorem~\ref{thm:DU-Guarantee-via-Compression-Rate}.

\begin{algorithm}[H]
\caption{Marginal Data Unlearning via Regularization}
\label{alg:point_reg}
\begin{algorithmic}[1]
\REQUIRE Remaining dataset $R = \{(x_i, y_i)\}_{i=1}^{N}$, Unlearning dataset $U = \{(x_i, y_i)\}_{i=N+1}^{N+K}$, trained neural network $f_{\theta_{\text{origin}}}$ with parameters $\theta_{\text{origin}}$, loss function $\mathcal{C}$, learning rate $\eta$, batch size $B$, number of epochs $T$, regularization parameter $\lambda$.
\ENSURE Unlearned model parameters \(\theta\) and empirical leakage estimate
\(\widehat{\mu}\approx I(S_{\mathrm{margin}};Z)\). If
\(I(S_{\mathrm{margin}};Z)\le \mu\), then
Theorem~\ref{thm:DU-Guarantee-via-Compression-Rate} certifies
high-probability \((\epsilon,\delta_\epsilon)\)-marginal unlearning with
\[
    \delta_\epsilon
    =
    \frac{e^\epsilon+1}{e^\epsilon-1}\sqrt{2\mu}.
\]
\STATE \textbf{Initialize:} Load pre-trained parameters $\theta \leftarrow \theta_{\text{origin}}$.
\FOR{$t = 1$ to $T$}
    \STATE Shuffle datasets $R$ and $U$.
    \FOR{each mini-batch $r \subset R$ and $u \subset U$, where $r = (X_r, Y_r)$ and $u = (X_u, Y_u)$}
        \STATE Compute predictions: $S_r = f_{\theta}(X_r)$ and $S_u = f_{\theta}(X_u)$.
        \STATE Construct $S_0 = \text{concat}(S_r, S_u, \dim = 0)$\footnote{$\dim = 0$ here means row-wise concatenation.} and $S_1 = S_r$.
        \STATE Define joint distribution $(S_{\textrm{margin}}, Z)$ where: $S_{\textrm{margin}} |_{Z=0} = S_0$ and $S_{\textrm{margin}} |_{Z=1} = S_1$.
        \STATE Compute regularized loss: $\mathcal{L}_{\text{reg}} = (1-\lambda)\mathcal{C}(Y_r;S_r) + \lambda I(S_{\textrm{margin}}; Z)$.
        \STATE Compute gradients: $\nabla_{\theta} \mathcal{L}_{\text{reg}}$.
        \STATE \textbf{Update} model parameters: $\theta \leftarrow \theta - \eta \nabla_{\theta} \mathcal{L}_{\text{reg}}$.
    \ENDFOR
\ENDFOR

\STATE Estimate final leakage \(\widehat\mu\approx I(S_{\mathrm{margin}};Z)\).
\STATE \textbf{Return} unlearned model parameters \(\theta\) and certificate estimate \(\widehat\mu\).
\end{algorithmic}
\end{algorithm}

\subsection{Analytic optimal feature unlearning for specific utilities via Theorem~\ref{th:optimal_solution}} \label{s5:analytic_feature_algorithm}

When specializing to the case where $\mathcal{S} = \mathcal{X}$ (hence $S = \hat{X}$) and the utility is one of the monotone information criteria in Theorem~\ref{th:optimal_solution}. Examples of such utilities include $-I(Y;\hat X,Z)$,
the equivalent posterior-concentration objective $-\mathbb E \![
    D_{\mathrm{KL}} \!(
    \mathbb P(Y\mid \hat X,Z)\,\Vert\,\mathbb P(Y)
    )
    ]$,
and the classification or regression energy criteria in
Theorem~\ref{th:optimal_solution}. The barycentric representation $\bar X=T_Z(X)$ pushes every conditional law \(P_z\) to a common barycenter law \(\bar P\),
thereby enforcing $\bar X\perp Z$. In such cases, feature unlearning admits a closed-form/analytic optimal solution: the $\mathcal{W}_2$ barycenter of $\{X\mid Z=z\}_z$, which produces the finest admissible $\sigma$-algebra and enforces $\hat X\!\perp\! Z$.\footnote{This analytic solution applies to feature unlearning only; marginal (data–point) unlearning does not admit a comparable barycenter form in general.} More specifically, the Algorithm~\ref{alg:bary} below contains a pseudocode for the analytic solution of the feature unlearning problems by calculating the Wasserstein-2 barycenter of marginal distributions w.r.t. the feature to remove $Z$.

\begin{algorithm}[H]
\caption{Analytic Unlearning Solution for Feature Unlearning via Wasserstein Barycenter}
\label{alg:bary}
\begin{algorithmic}[1]
\REQUIRE Dataset $D = (X,Z) = \{(x_i, z_i)\}_{i=1}^{N}$, Maximum number of iterations $T$, Convergence threshold $\epsilon$.
\ENSURE Estimated Wasserstein barycenter $\bar{X}$.

\STATE \textbf{Initialize:} $\bar{X} \leftarrow \bar{X}_0$ \COMMENT{Random initialization of barycenter}

\FOR{$t = 1$ to $T$} 
    \FOR{each unique value of $Z$: $z \in \text{unique}(Z)$}
        \STATE Compute the optimal transport map $T_z$ that maps $\bar{X}$ to $X_z := X |_{Z=z}$.
    \ENDFOR
    \STATE Compute updated barycenter: $\bar{X}_{\text{new}} = \sum_{z \in \text{unique}(Z)} \frac{|X_z|}{|X|} T_z(\bar{X})$
    \STATE Compute convergence criterion: $\epsilon_t = \mathcal{W}_2(\bar{X},\bar{X}_{\text{new}})$.
    \STATE Update barycenter: $\bar{X} \leftarrow \bar{X}_{\text{new}}$.
    \IF{$\epsilon_t < \epsilon$}
        \STATE \textbf{break} \COMMENT{Terminate loop if convergence threshold is met}
    \ENDIF
\ENDFOR

\STATE \textbf{Return} Estimated Wasserstein barycenter $\bar{X}$.

\end{algorithmic}
\end{algorithm}

The computation of the Wasserstein-2 barycenter is known to be NP-hard in general~\cite{altschuler2022wasserstein}. Our proposed algorithm applies the fixed point estimation mechanism which guarantees convergence to the true barycenter \cite{agueh2011barycenters}, when provided true optimal transport maps. Therefore, one can apply different computational methods to estimate the optimal transport maps $T_z$'s, e.g. linear maps or neural networks, and then follow the iterative method in Algorithm \ref{alg:bary} to find an estimation of the true barycenter. In the experiment \ref{s6:analytic_feature_experiment} below, we use neural optimal transport \cite{korotin2022neural}, which formulates the Kantorovich Duality as an adversarial generative network (GAN) structure to estimate the true optimal transport maps, together with the proposed iterative method (which becomes McCann interpolation in binary marginals setting) to estimate the Wasserstein-2 barycenter. We defer a detailed study of different estimation methods to future work.

We note here again that the above analytic solution only applies when the cost function is among the listed ones. Otherwise, the assumption of Theorem \ref{th:optimal_solution} is not satisfied and one should apply information-theoretic relaxation to approach the optimal solution for each of the unlearning problems with general cost functions.

\section{Numerical Experiments}\label{S:Experiment}
We conduct numerical experiments on synthetic and real-world datasets to
validate the proposed framework, with emphasis on its theoretical guarantees,
diagnostic behavior, and interpretability rather than exhaustive benchmarking
against all existing unlearning methods. A more comprehensive evaluation,
including improved barycenter estimation for feature unlearning, principled
regularization selection, and refined mutual-information estimation for
marginal data-point unlearning, is left for future work. The code is available
at \url{https://github.com/xushizhou/Machine_Unlearn_via_Info_Reg}.

\subsection{Feature Unlearning via Regularization: Algorithm \ref{alg:feature_reg}}\label{s6:feature_tabular_experiment}

\paragraph{Setting}
We adopt a fixed $5$-fold validation. In particular, for each fold $f\in\{1,\dots,5\}$, the data are split into train/validation/test as follows: we first take the provided train/test split for the fold, then partition the training split into (train, validation) using a stratified 80/20 split with a fixed seed. Models are trained only on the train portion, all thresholds and tradeoff knobs are selected using the validation portion if needed, and final metrics are reported on the test portion. We aggregate results as mean $\pm$ standard deviation over the five folds. For comparability purpose, all methods share the same classifier backbone: a two-layer MLP (width 128, ReLU, dropout 0.2), and the same training: Adam with learning rate $10^{-3}$, weight decay $10^{-4}$, batch size 256, and 60 epochs.

\paragraph{Datasets}
For indirect comparison purpose, we evaluate four benchmark datasets that are popular choices in feature unlearning:

\begin{table}[h]
\centering
\small
\setlength{\tabcolsep}{7pt}
\begin{tabular}{l l l l l l}
\toprule
\textbf{Dataset} & \textbf{Target $Y$} & \textbf{Sensitive $Z$} & \textbf{Dim($X$)} & \textbf{\#Samples}\\
\midrule
UCI Adult
& Income ($>\$50$k) 
& Gender
& 16
& 48842\\
COMPAS 
& Two-year recidivism
& Race
& 8
& 5300\\
UCI Bank
& Subscription
& Marital
& 51
& 45211\\
CelebA 
& Smiling
& Gender
& 512
& 202599\\
\bottomrule
\end{tabular}
\caption{Benchmarks used in the feature unlearning experiments. For each fold, we train on the fold's training partition (with an internal $20\%$ validation split, select thresholds/tradeoff knobs on validation, and report metrics on the fold's test partition. \emph{Feature dimension Dim($X$)} equals the post-encoding input dimensionality used by the classifier backbone (excluding $Z$; some methods concatenate $Z$ during training/evaluation as specified in the method section).}
\label{tab:data-sp}
\end{table}

\paragraph{Methods compared}
We include six representative approaches spanning pre-processing, post-processing, in-processing, representation learning, and two ERM baselines (include and exclude $Z$). Each exposes a single monotone knob (denoted $\lambda\!\in[0,1]$), so that we can trace a consistent Pareto frontier.

\begin{table}[h]
\centering
\small
\setlength{\tabcolsep}{6.5pt}
\begin{tabular}{l l l l l}
\toprule
\textbf{Method} & \textbf{Type} & \textbf{Tradeoff knob} & \textbf{Uses $Z$ (train/test)}\\
\midrule
Barycenter & Post-proc & $\lambda$ (toward barycenter) & no / \textbf{yes} \\
DIR (quantile repair) & Pre-proc & $\lambda$ (repair strength) & \textbf{yes} / no\\
MI (ours) & In-proc & $\lambda$ in $(1{-}\lambda)\,\mathrm{CE}+\lambda\,I(\hat Y;Z)$ & \textbf{yes} / \textbf{yes}\\
Zemel LFR & Representation & $\lambda$ (monotone map to $\lambda$) & \textbf{yes} / no \\
ERM\_deep$(X)$ & Baseline & N/A & no / no \\
ERM\_deep$([Z\mid X])$ & Baseline & N/A & \textbf{yes} / \textbf{yes} \\
\bottomrule
\end{tabular}
\caption{Comparison methods included in the feature unlearning experiment.}
\label{tab:methods-sp}
\end{table}

\paragraph{Policies and metrics}
Each method yields either a score $s_i\in[0,1]$ or a per-sample probability mass function $\pi_i{=}(p_{i,0},p_{i,1})$. To minimize the influence an additional step of generating hard label prediction, we directly evaluate via randomized policy which treats $\pi_i$ as a randomized classifier and compute:
\begin{equation*}
    \mathrm{acc}_\mathrm{rand} \;=\; \frac{1}{n}\sum_{i=1}^n p_{i,Y_i}, 
    \qquad
    \mathrm{dp\_gap} \;=\; \left|\,\mathbb{E}[p_{i,1}\!\mid Z{=}1]-\mathbb{E}[p_{i,1}\!\mid Z{=}0]\,\right|,
\end{equation*}
where $p_{i,1}$ is the probability of predicting class 1. AUROC is computed from the underlying scalar score $s_i$ or $p_{i,1}$ (e.g., softmax logit for ERM/MI/LFR, OT-mapped score for Barycenter).

\begin{figure}[t]
  \centering
  \begin{subfigure}{0.98\linewidth}
    \centering
    \includegraphics[width=0.45\linewidth]{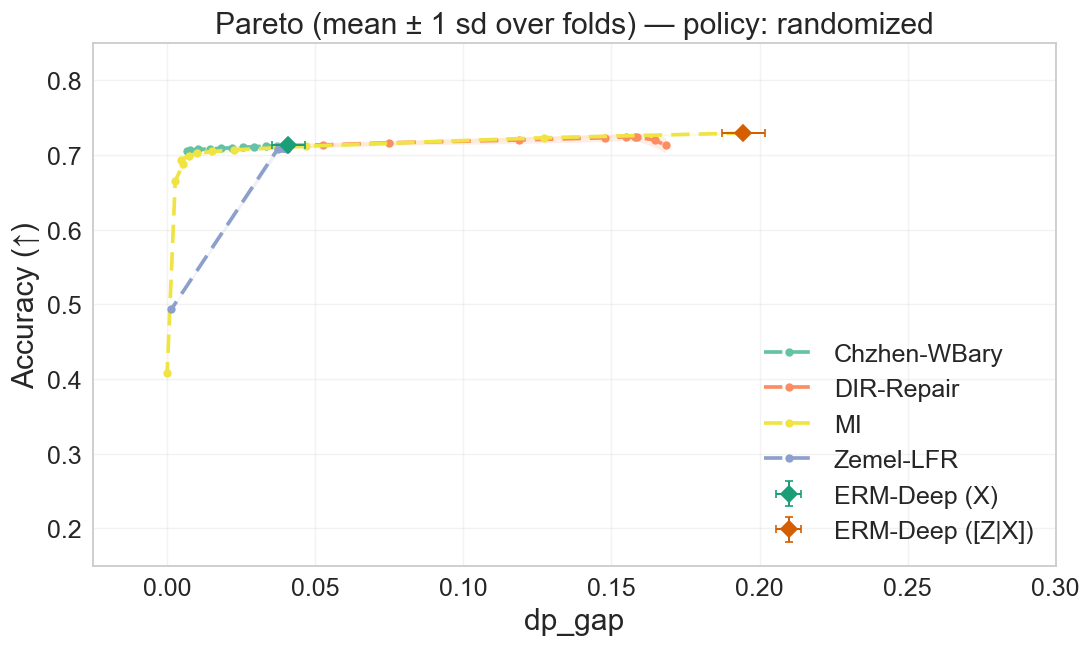}
    \includegraphics[width=0.45\linewidth]{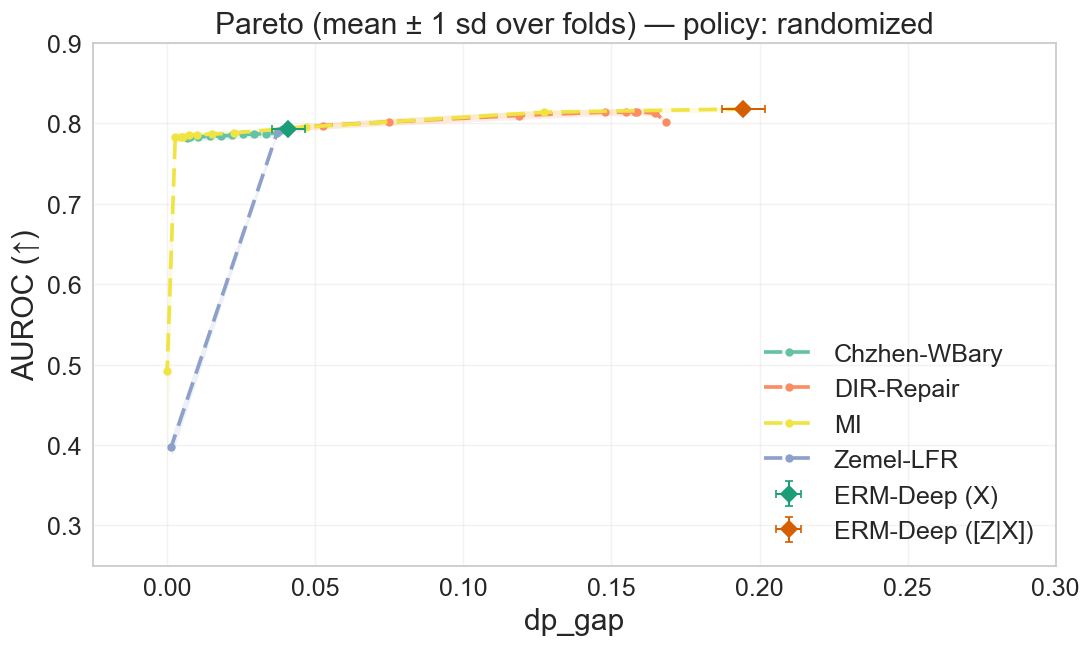}
    \caption{Results on UCI Adult}
    \label{fig:adult}
  \end{subfigure}
  \vspace{3pt}
  \begin{subfigure}{0.98\linewidth}
    \centering
    \includegraphics[width=0.45\linewidth]{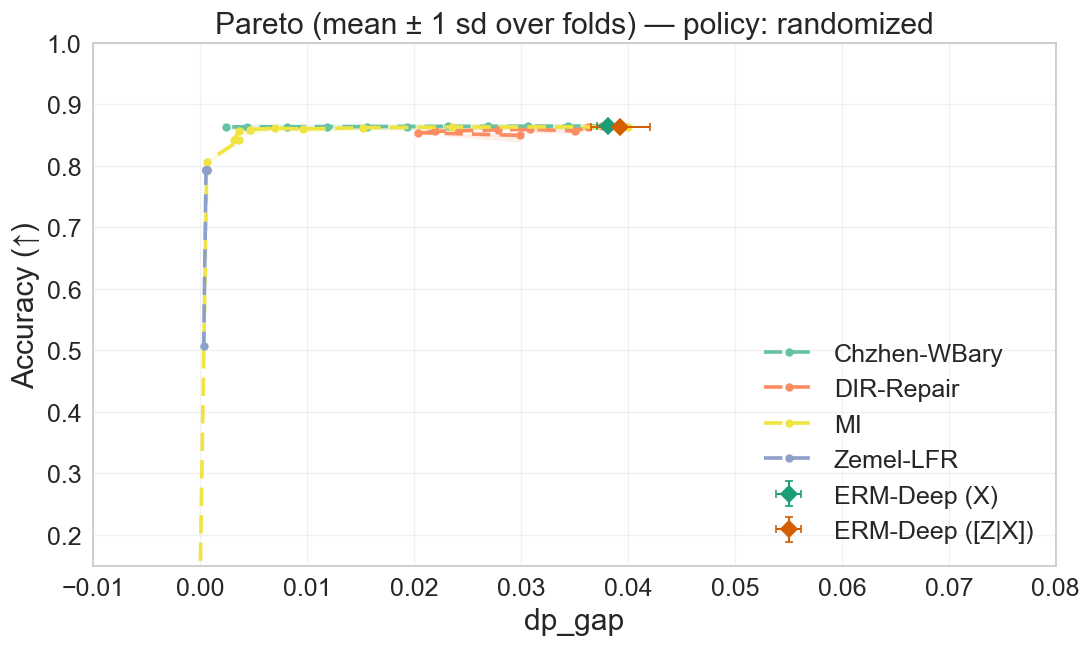}
    \includegraphics[width=0.45\linewidth]{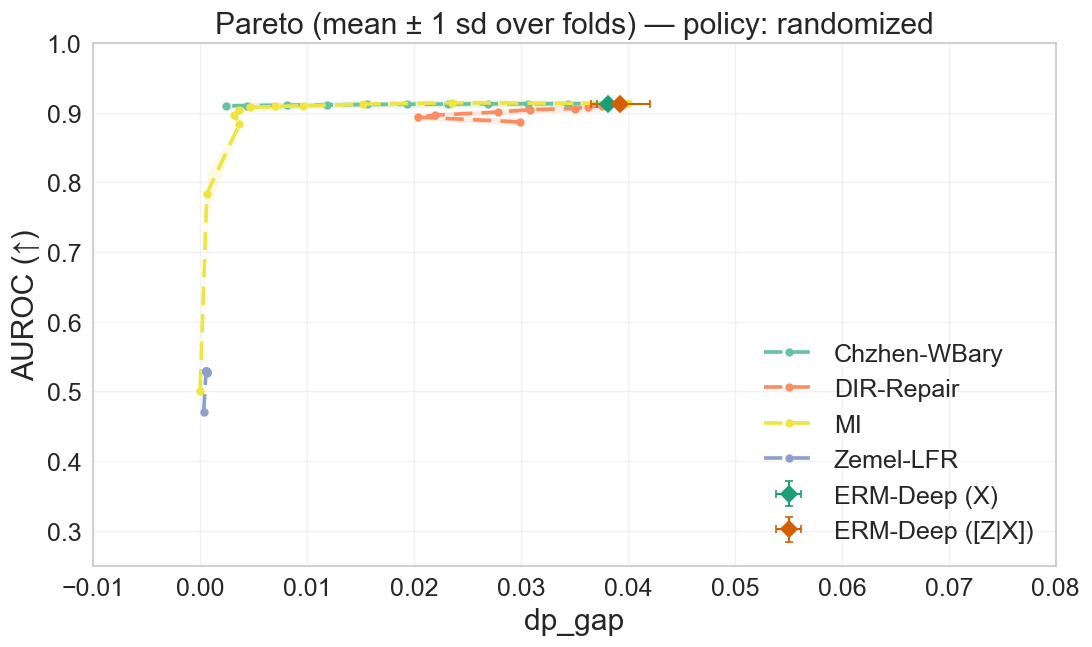}
    \caption{Comparison result on UCI Bank}
    \label{fig:adult}
  \end{subfigure}
  \begin{subfigure}{0.98\linewidth}
    \centering
    \includegraphics[width=0.45\linewidth]{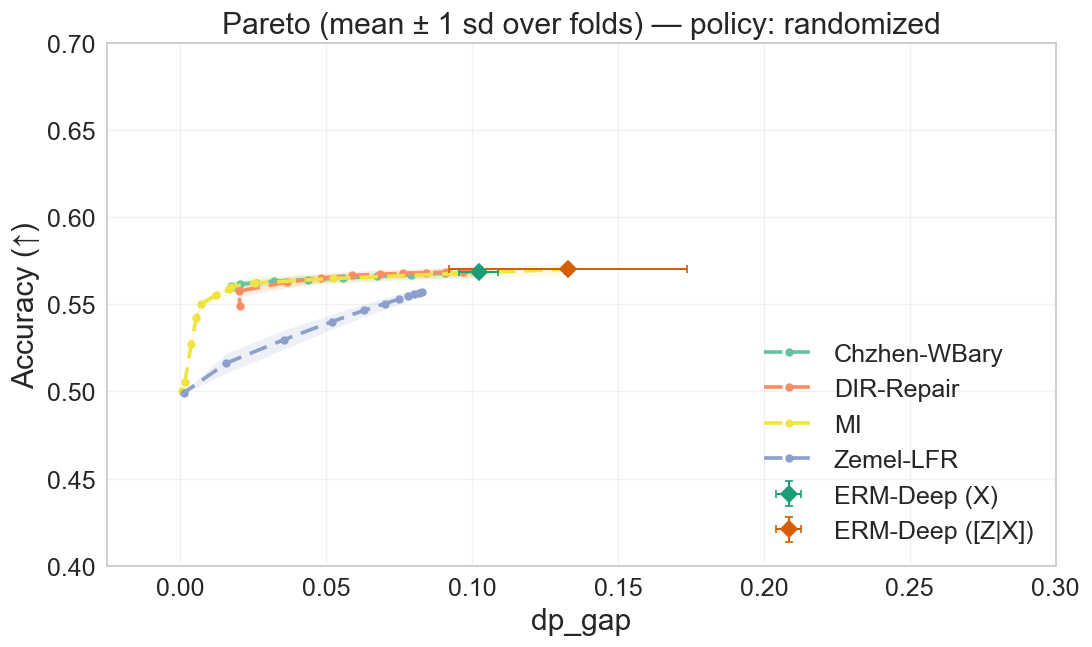}
    \includegraphics[width=0.45\linewidth]{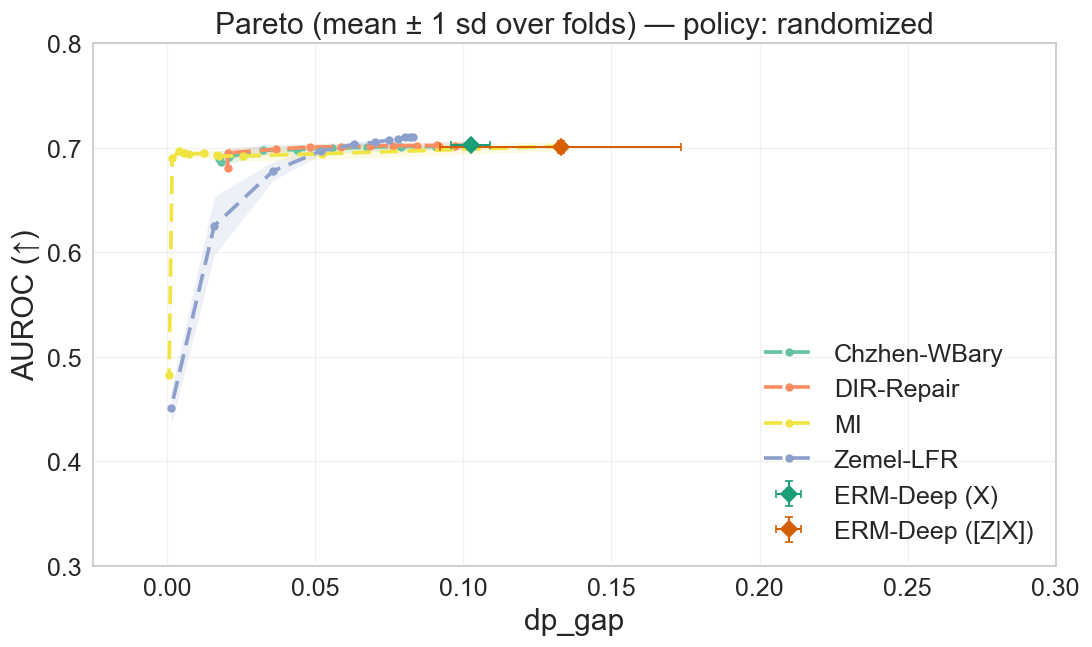}
    \caption{Comparison result on COMPAS}
    \label{fig:adult}
  \end{subfigure}
  \begin{subfigure}{0.98\linewidth}
    \centering
    \includegraphics[width=0.45\linewidth]{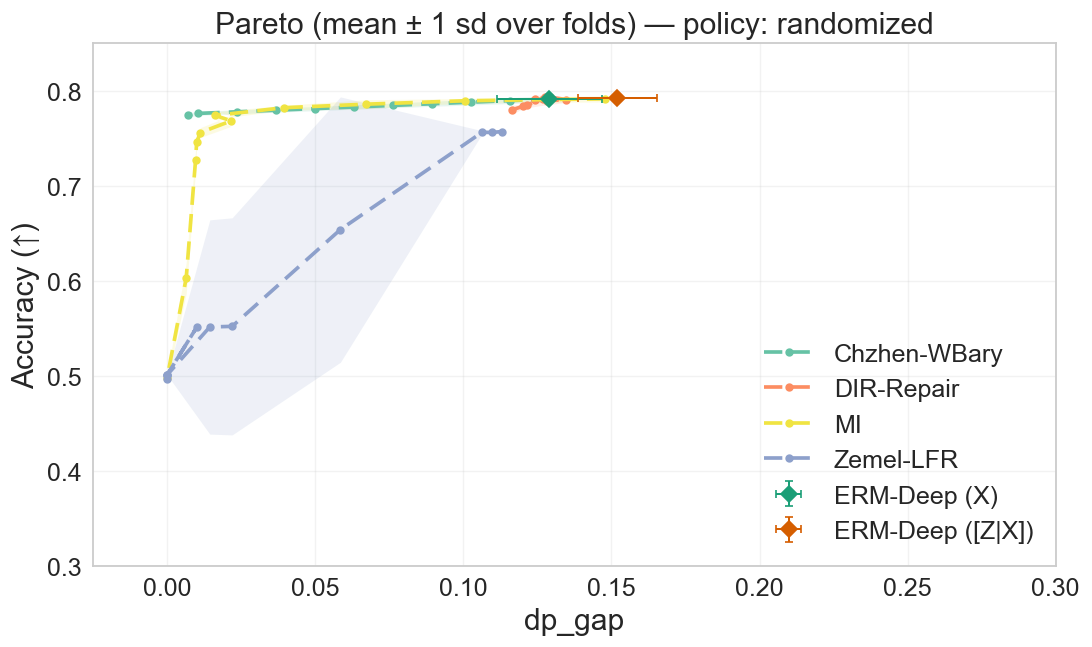}
    \includegraphics[width=0.45\linewidth]{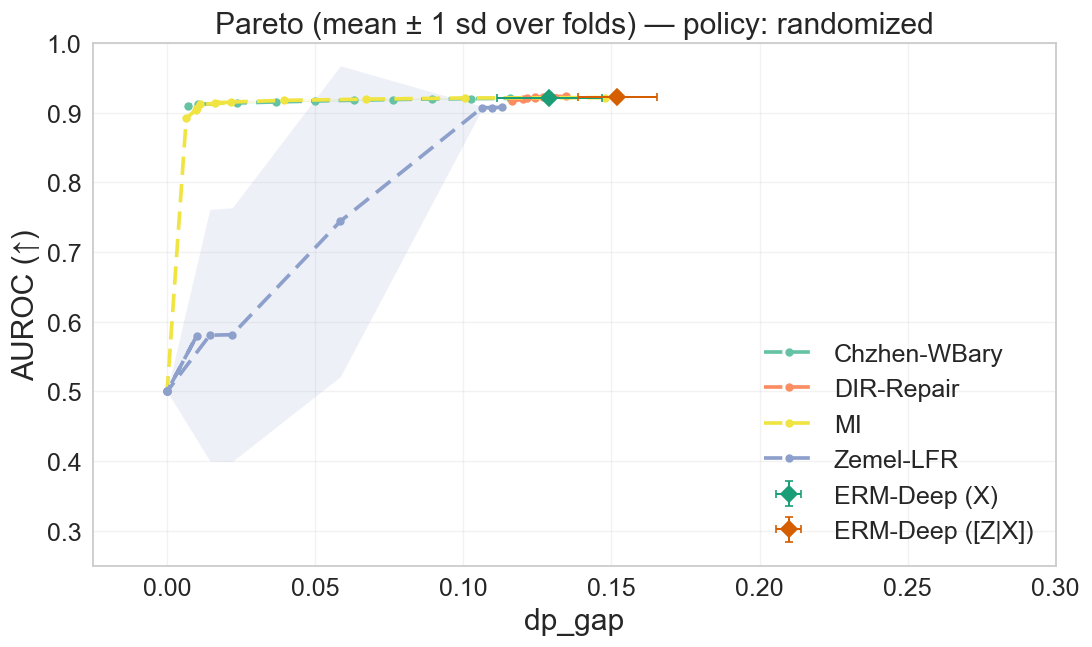}
    \caption{Comparison result on CelebA}
    \label{fig:adult}
  \end{subfigure}

  \caption{Feature–unlearning frontiers: Each row reports utility versus feature influence for a dataset: the left panel shows accuracy (↑) and the right panel shows AUROC (↑) against the demographic-parity gap (DP-gap, ↓), defined as $|\,\mathbb{P}[\hat Y{=}1\mid Z{=}1]-\mathbb{P}[\hat Y{=}1\mid Z{=}0]\,|$. Curves trace each method’s trade-off as its trade-off parameter varies. Points denote the mean and bands denote $\pm$1 s.d. over $5$ folds. Lower DP-gap at comparable or higher utility indicates a better frontier.}
  \label{fig:feature_unlearning_compare}
\end{figure}

\paragraph{Results.}
Figure~\ref{fig:feature_unlearning_compare} shows that the proposed mutual information regularization method (MI) is the best method in reducing the influence of $Z$ on learning outcome (quantified by dp-gap) while preserving utility accuracy and the area under curve on Adult and Compas datasets, and only slightly behind Chzhen's Wasserstein barycenter method (which only works for binary variable with significantly worse computational cost) on Bank and CelebA datasets. Furthermore, it offers the best smooth parameterization of the Pareto frontier, equivalently the optimal trade-off, between utility (quantified by accuracy and AUROC) and feature influence (quantified by dp-gap).

\subsection{Marginal Data-Point Unlearning via Regularization: Algorithm~\ref{alg:point_reg}}
\label{s6:data_unlearning_experiment}

\subsubsection{Forget Gaussian Experiment}
\label{ss6:data_gaussian}

This synthetic experiment illustrates the mechanism of marginal data-point
unlearning in a one-dimensional setting where the retain and unlearn
distributions have visibly different structure. The retain distribution is
nearly flat on an interval, while the unlearn distribution is concentrated
near the origin. Thus, the information to be forgotten is the Gaussian
concentration around \(0\), while the information to be preserved is the
approximately uniform retain density on \([-L,L]\).

\paragraph{Data and notation.}
We consider two distributions supported on the same interval \([-L,L]\), with
\(L=3\):
\[
    X_r\sim \mathrm{Unif}([-L,L]),
    \qquad
    X_u\sim \mathcal N(\mu,\sigma^2)
    \quad\text{truncated to }[-L,L].
\]
We draw \(N_R\) i.i.d.\ retain samples
\(\{x_i^{(r)}\}_{i=1}^{N_R}\) from \(X_r\), and \(N_U\) i.i.d.\ unlearn
samples \(\{x_j^{(u)}\}_{j=1}^{N_U}\) from \(X_u\). Let
\[
    \rho:=\frac{N_R}{N_R+N_U}
\]
denote the empirical retain fraction. The retain-plus-unlearn mixture is
therefore $p^d=\rho p^r+(1-\rho)p^u$. Equivalently, if the theoretical notation in Section~\ref{s3:problem_setting} uses \(p^d=(1-\alpha)p^r+\alpha p^u\), then here \(\rho=1-\alpha\).

We use a residual network \(f_\theta:\mathbb R\to\mathbb R\), initialized near
the identity, and first pretrain it so that the retain output density mimics
the retain-plus-unlearn mixture:
\[
    \widehat p_{f_{\theta_0}(X_r)}
    \approx
    \rho\,\widehat p_{X_r}
    +
    (1-\rho)\,\widehat p_{X_u}.
\]
This initialization models the situation in which the original model has
absorbed the unlearn-set signal into its output behavior.

All densities are estimated on a uniform grid
\(\mathcal G=\{x_k\}_{k=1}^{N_X}\) with spacing \(\Delta x\), using Gaussian
KDE with bandwidths \(h_x\) for inputs and \(h_y\) for outputs:
\[
\widehat p_{S,h}(x_k)
:=
\frac{1}{m}
\sum_{\ell=1}^m
\frac{
\exp\!\big(-\frac{(x_k-s_\ell)^2}{2h^2}\big)
}{
\sum_{j=1}^{N_X}
\exp\!\big(-\frac{(x_j-s_\ell)^2}{2h^2}\big)\Delta x
}.
\]
For grid densities \(P,Q\), we approximate
\[
    H(P\Vert Q):=-\sum_k P(x_k)\log Q(x_k)\,\Delta x,
\]
and
\[
    \KL(P\Vert Q)
    :=
    \sum_k P(x_k)\log\frac{P(x_k)}{Q(x_k)}\,\Delta x.
\]
For a binary auditing label \(Z\in\{0,1\}\) with priors \(p_0,p_1\) and
conditional output densities \(P_0,P_1\), the mutual information estimator is
\[
    I(S_{\mathrm{margin}};Z)
    =
    p_0\,\KL(P_0\Vert P)+p_1\,\KL(P_1\Vert P),
    \qquad
    P:=p_0P_0+p_1P_1.
\]

\paragraph{Compared objectives.}
We compare three objectives. Unless otherwise stated, the auditing prior is
balanced, \(p_0=p_1=\frac12\).

\begin{itemize}[leftmargin=*]
\item \textbf{Marginal-MI (ours).}
The marginal objective compares the retain output law against the
retain-plus-unlearn mixture output law. Define
\[
    P_1:=\widehat p_{f_\theta(X_r)},
    \qquad
    P_0:=
    \rho\,\widehat p_{f_\theta(X_r)}
    +(1-\rho)\,\widehat p_{f_\theta(X_u)} .
\]
The loss is
\[
\mathcal L_{\textsc{marginal}}(\theta)
=
(1-\lambda)\,
H\!\big(
\widehat p_{X_r}\Vert \widehat p_{f_\theta(X_r)}
\big)
+
\lambda\, I(S_{\mathrm{margin}};Z).
\]
Minimizing the MI term drives \(P_0=P_1\). Since $P_0=\rho P_1+(1-\rho)\widehat p_{f_\theta(X_u)}$, this forces
\[
    \widehat p_{f_\theta(X_u)}
    =
    \widehat p_{f_\theta(X_r)}
\]
whenever \(1-\rho>0\). The utility term simultaneously anchors
\(\widehat p_{f_\theta(X_r)}\) to the retain density \(\widehat p_{X_r}\).

\item \textbf{Grad-Diff.}
This baseline replaces the MI term by a negative cross-entropy term that pushes
the unlearn output away from the original unlearn density:
\[
\mathcal L_{\textsc{grad\_diff}}(\theta)
=
(1-\lambda)\,
H\!\big(
\widehat p_{X_r}\Vert \widehat p_{f_\theta(X_r)}
\big)
-
\lambda\,
H\!\big(
\widehat p_{X_u}\Vert \widehat p_{f_\theta(X_u)}
\big).
\]
This discourages resemblance to the unlearn density, but does not prescribe
where the unlearn mass should move.

\item \textbf{KL+utility.}
This baseline anchors the retain output to the pretrained model while applying
the same negative unlearn-density term:
\[
\mathcal L_{\textsc{KL}}(\theta)
=
(1-\lambda)\,
\KL\!\big(
\widehat p_{f_{\theta_0}(X_r)}
\Vert
\widehat p_{f_\theta(X_r)}
\big)
-
\lambda\,
H\!\big(
\widehat p_{X_u}\Vert \widehat p_{f_\theta(X_u)}
\big).
\]
The first term keeps \(f_\theta(X_r)\) close to the pretrained retain output,
whereas the second term discourages resemblance to \(X_u\).
\end{itemize}

\begin{figure*}[t]
  \centering
  \begin{minipage}{0.4\textwidth}
    \includegraphics[width=\linewidth]{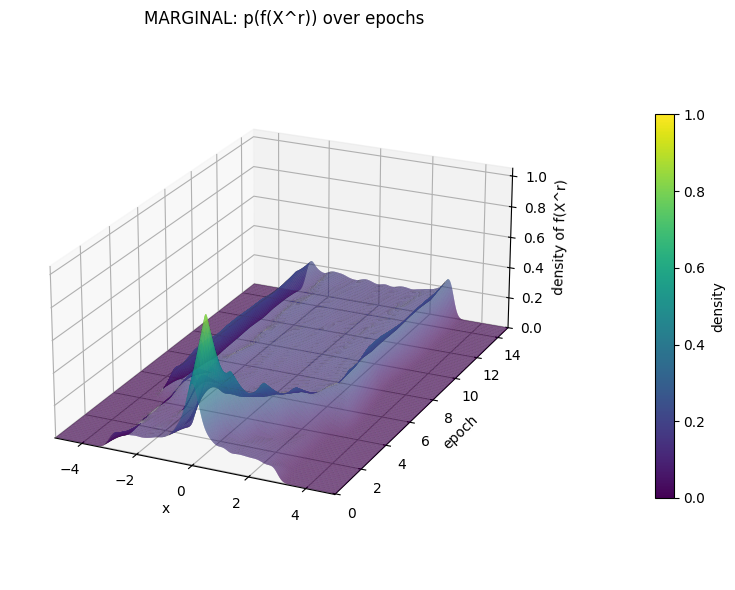}
  \end{minipage}
  \begin{minipage}{0.4\textwidth}
    \includegraphics[width=\linewidth]{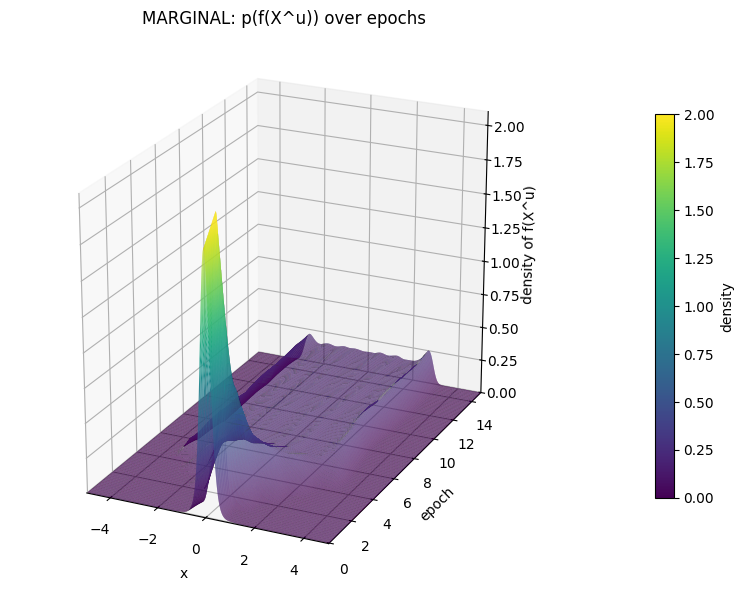}
  \end{minipage}\\
  \begin{minipage}{0.4\textwidth}
    \includegraphics[width=\linewidth]{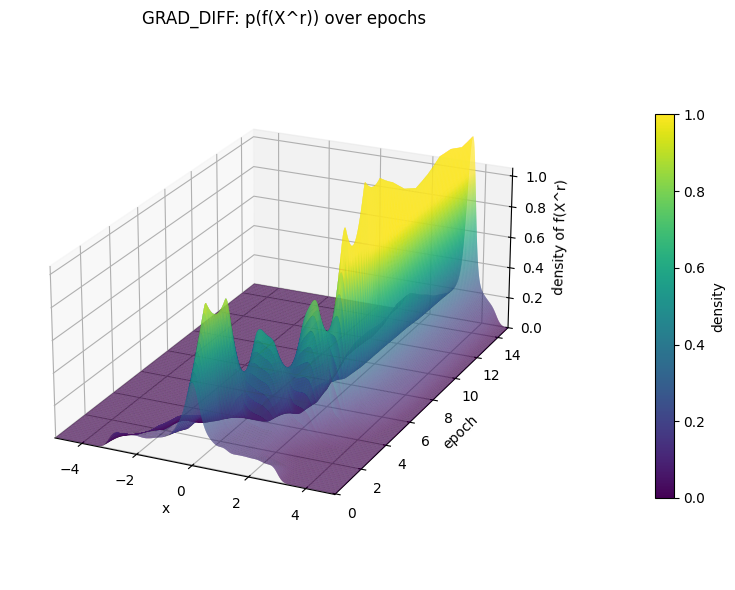}
  \end{minipage}
  \begin{minipage}{0.4\textwidth}
    \includegraphics[width=\linewidth]{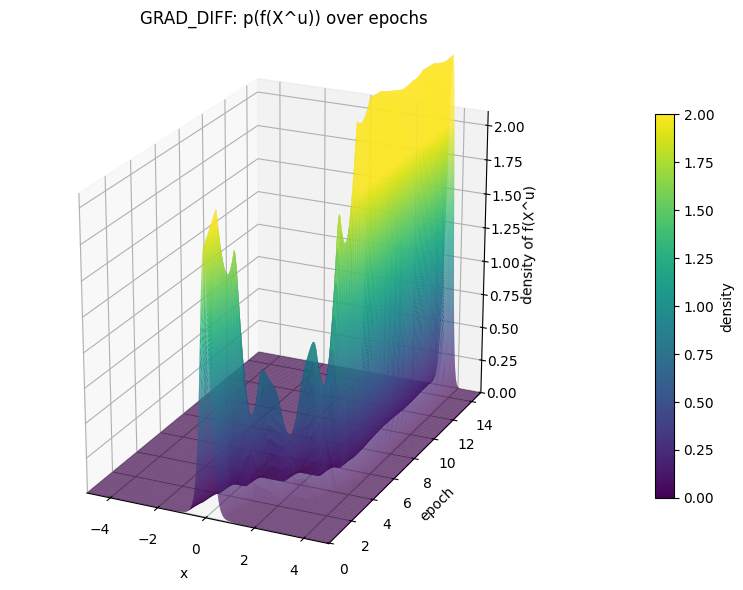}
  \end{minipage}\\
  \begin{minipage}{0.4\textwidth}
    \includegraphics[width=\linewidth]{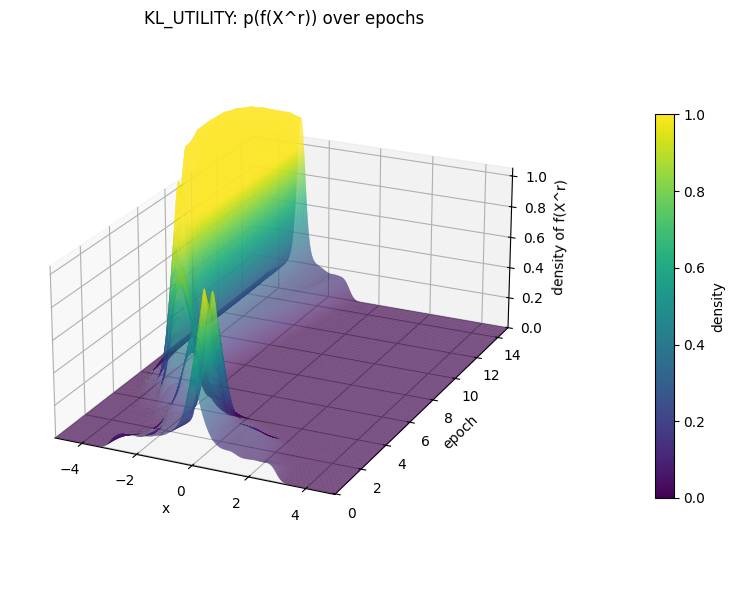}
  \end{minipage}
  \begin{minipage}{0.4\textwidth}
    \includegraphics[width=\linewidth]{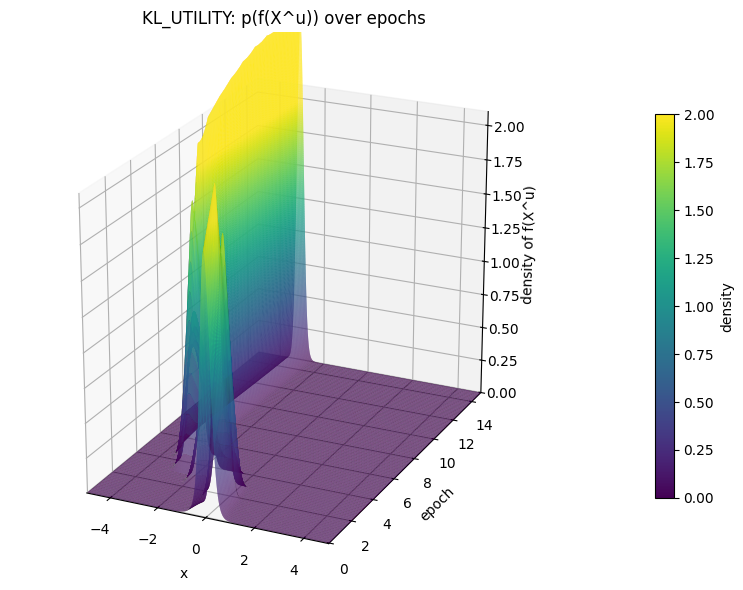}
  \end{minipage}
  \caption{Evolution of the output densities \(p(f_\theta(X_r))\) and
  \(p(f_\theta(X_u))\) over epochs for the three objectives. Left column:
  retain outputs. Right column: unlearn outputs. Marginal-MI suppresses the
  unlearn-specific concentration near zero while keeping the retain output
  close to the uniform retain density. In contrast, the gradient-ascent-based
  baselines tend to relocate the unlearn mass toward other regions, such as
  the boundaries, rather than matching it to the retain distribution.}
  \label{fig:forget-gaussian}
\end{figure*}

\paragraph{Outcomes.}
Figure~\ref{fig:forget-gaussian} shows how the retain and unlearn output
densities evolve from the pretrained model. Under \textsc{marginal}, the MI
penalty equalizes the retain-only and retain-plus-unlearn output conditionals.
Algebraically, this drives
\[
    p(f_\theta(X_u))\approx p(f_\theta(X_r)),
\]
while the utility term keeps
\[
    p(f_\theta(X_r))\approx p(X_r).
\]
Thus, the unlearn-specific Gaussian concentration is removed by matching it to
the retain output distribution, rather than by pushing it to an arbitrary
location.

In contrast, \textsc{grad\_diff} lacks a target distribution for the unlearn
mass. Its negative cross-entropy term can move the unlearn density toward the
boundaries or other regions, and shared parameters may also distort the retain
output. The \textsc{KL} baseline stabilizes the retain trajectory by anchoring
to the pretrained output, but its unlearning term still does not specify where
the unlearn mass should go. Consequently, it can also push the forget mass
toward the edges.

\paragraph{Summary.}
The Gaussian experiment illustrates the qualitative difference between
marginal unlearning and gradient-ascent-style forgetting. Marginal-MI removes
the unlearn-specific signal by matching the unlearn output law to the retain
output law, while retaining the uniform structure through the utility term.
The baselines can reduce resemblance to the unlearn density, but they do so
without a principled replacement distribution for the removed mass.

\subsubsection{Forget MNIST}\label{ss6:data_MNIST}

\paragraph{Dataset and retain/unlearn split.}
We use \textsc{MNIST} with the standard \(60{,}000/10{,}000\) train/test split
and normalize images using mean \(0.1307\) and standard deviation \(0.3081\).
Let \(U_{\mathrm{all}}\) denote the set of training images whose label is digit
\(3\). We randomly select \(99.5\%\) of \(U_{\mathrm{all}}\) as the unlearn set
\(U\), and define the retain set as
\[
    R:=\mathrm{train}\setminus U .
\]
Thus, the retain set contains almost no digit-\(3\) examples, while the
FineTune-All model is trained on the full original training set.

For evaluation and trajectory aggregation, we form five random folds
independently on \(R\) and on \(U\). For each fold \(i\), this gives
\[
    (R_i^{\mathrm{tr}},R_i^{\mathrm{val}})
    \qquad\text{and}\qquad
    (U_i^{\mathrm{tr}},U_i^{\mathrm{val}}).
\]
These folds are used to train and validate the unlearning trajectories. Test
accuracy is computed on the original MNIST test set. Unless otherwise stated,
the nominal retain and unlearn batch sizes are \(128\), and the test batch size
is \(512\); on MPS devices, the implementation reduces the retain and unlearn
batch sizes to \(96\) for stability.

\paragraph{Shared backbone and optimization for comparability.}
We use a lightweight CNN for MNIST ($28{\times}28$ grayscale) to ensure strict cross-method comparability. All convolutions use $3{\times}3$ kernels, stride $1$, and padding $1$ (“same”), so spatial resolution is preserved between pooling layers. The architecture is:
\begin{align*}
& \underbrace{\mathrm{Conv}(32)\!\to\!\mathrm{ReLU}\!\to\!\mathrm{Conv}(64)\!\to\!\mathrm{ReLU}\!\to\!\mathrm{MaxPool}(2)\!\to\!\mathrm{Dropout}(0.25)}_{\textbf{Block 1}}\\
\;\to\;& 
\underbrace{\mathrm{Conv}(128)\!\to\!\mathrm{ReLU}\!\to\!\mathrm{MaxPool}(2)\!\to\!\mathrm{Dropout}(0.25)}_{\textbf{Block 2}}\\
\;\to\;&
\underbrace{\mathrm{Flatten}(128{\times}7{\times}7)\!\to\!\mathrm{FC}(256)\!\to\!\mathrm{ReLU}\!\to\!\mathrm{Dropout}(0.5)\!\to\!\mathrm{FC}(10)}_{\textbf{Head}}.
\end{align*}
Max-pooling layers are $2{\times}2$ (stride $2$), yielding spatial sizes $28{\times}28 \!\to\! 14{\times}14 \!\to\! 7{\times}7$. ReLU follows every convolution and the first fully connected layer; dropout is applied after each pooling block and before the classifier head.

We use Adam with learning rate \(10^{-3}\) and weight decay \(10^{-4}\), and
fix the random seed to \(1337\). FineTune-All and Retrain-on-Retain each run
for \(10\) epochs. Unlearning runs for at most \(30\) epochs and may stop early
according to the rules below.

\paragraph{Baselines and compared methods.}
We compare five models.

\begin{itemize}[leftmargin=*]
    \item \textbf{FineTune-All (FT).} Train the backbone on the full MNIST
    training set for \(10\) epochs. This is the initial model that has seen the
    unlearn data.

    \item \textbf{Retrain-on-Retain (RT).} Train the same backbone from scratch
    on \(R\) only for \(10\) epochs. This is the anchor model that never trains
    on the selected unlearn set \(U\).

    \item \textbf{Marginal-MI (ours).} Starting from the FT weights, minimize
    \[
    \mathcal L_{\mathrm{MI}}
    =
    (1-\gamma)\,
    \mathrm{CE}\big(f_\theta(x),y;\ x\in R_i^{\mathrm{tr}}\big)
    +
    \gamma\,\widehat{\mathrm{MI}}_{\mathrm{soft}}.
    \]
    The estimator \(\widehat{\mathrm{MI}}_{\mathrm{soft}}\) is a
    softmax-marginal plug-in proxy for \(I(S_{\mathrm{margin}};Z)\). Let
    \[
        P_r
        :=
        \mathbb E_{x\in R_i^{\mathrm{tr}}}
        [\hat p_\theta(\cdot\mid x)],
        \qquad
        P_u
        :=
        \mathbb E_{x\in U_i^{\mathrm{tr}}}
        [\hat p_\theta(\cdot\mid x)].
    \]
    With $\rho
        :=
        \frac{|R_i^{\mathrm{tr}}|}
        {|R_i^{\mathrm{tr}}|+|U_i^{\mathrm{tr}}|}$, the retain-plus-unlearn mixture output is $P_d:=\rho P_r+(1-\rho)P_u$. Using balanced auditing priors, the estimator is
    \[
        \widehat{\mathrm{MI}}_{\mathrm{soft}}
        =
        \frac12\KL(P_r\Vert P)
        +
        \frac12\KL(P_d\Vert P),
        \qquad
        P:=\frac12P_r+\frac12P_d.
    \]
    This is not the full empirical output-law mutual information. Instead, it
    is a stable low-dimensional proxy based on average class-probability
    vectors.

    \item \textbf{Grad-Diff.} Starting from the FT weights, optimize
    \[
    \mathcal L_{\mathrm{GD}}
    =
    (1-\gamma)\,
    \mathrm{CE}\big(f_\theta(x),y;\ x\in R_i^{\mathrm{tr}}\big)
    -
    \gamma\,
    \mathrm{CE}\big(f_\theta(x),y;\ x\in U_i^{\mathrm{tr}}\big).
    \]

    \item \textbf{KL+CE.} With the FT model frozen as teacher
    \(f_{\theta^\star}\), optimize from the FT weights
    \[
    \mathcal L_{\mathrm{KL}}
    =
    (1-\gamma)
    \mathbb E_{x\in R_i^{\mathrm{tr}}}
    \!\left[
    \KL\big(
    \hat p_{\theta^\star}(\cdot\mid x)
    \Vert
    \hat p_\theta(\cdot\mid x)
    \big)
    \right]
    -
    \gamma\,
    \mathrm{CE}\big(f_\theta(x),y;\ x\in U_i^{\mathrm{tr}}\big).
    \]
\end{itemize}

\paragraph{Trade-off grids and initialization.}
Every unlearning method is initialized from the same FT checkpoint. We sweep
method-specific \(\gamma\)-grids:
\[
\gamma_{\mathrm{MI}}\in\{0.0020,\,0.0055,\,0.0100\},
\]
\[
\gamma_{\mathrm{KL}}\in\{0.0003,\,0.0006,\,0.0012\},
\qquad
\gamma_{\mathrm{GD}}\in\{0.0002,\,0.00035,\,0.0007\}.
\]
The grids are chosen so that each method ranges from mild unlearning to a
substantial decrease in unlearn-set accuracy.

\paragraph{Early stopping.}
We use early stopping only for Marginal-MI and Grad-Diff. For KL+CE, the
student is initialized from the same FT checkpoint as the frozen teacher, so
the initial teacher--student retain-validation KL is numerically zero. A
relative KL stopping rule based on this initial value is therefore inactive in
this experiment, and KL+CE is run for the full unlearning budget of \(30\)
epochs.

\begin{itemize}[leftmargin=*]
\item \textbf{Marginal-MI:} compute
\(\mathrm{MI}^{\mathrm{val}}_0\) before unlearning on
\((R_i^{\mathrm{val}},U_i^{\mathrm{val}})\), and stop when
\[
    \mathrm{MI}^{\mathrm{val}}
    \le
    0.85\,\mathrm{MI}^{\mathrm{val}}_0
\]
with minimum epoch \(1\) and patience \(1\).

\item \textbf{Grad-Diff:} infer the number of classes \(C\) from the logits
(\(C=10\) for MNIST), and stop when the unlearn-validation accuracy satisfies
\[
    \mathrm{Acc}_{U}^{\mathrm{val}}
    \le
    \frac1C+0.02
\]
for two consecutive checks, with minimum epoch \(1\) and patience \(2\).
\end{itemize}

\paragraph{Results.}

\begin{figure}[t]
  \centering
  \begin{subfigure}{0.98\linewidth}
    \centering
    \includegraphics[width=0.85\linewidth]{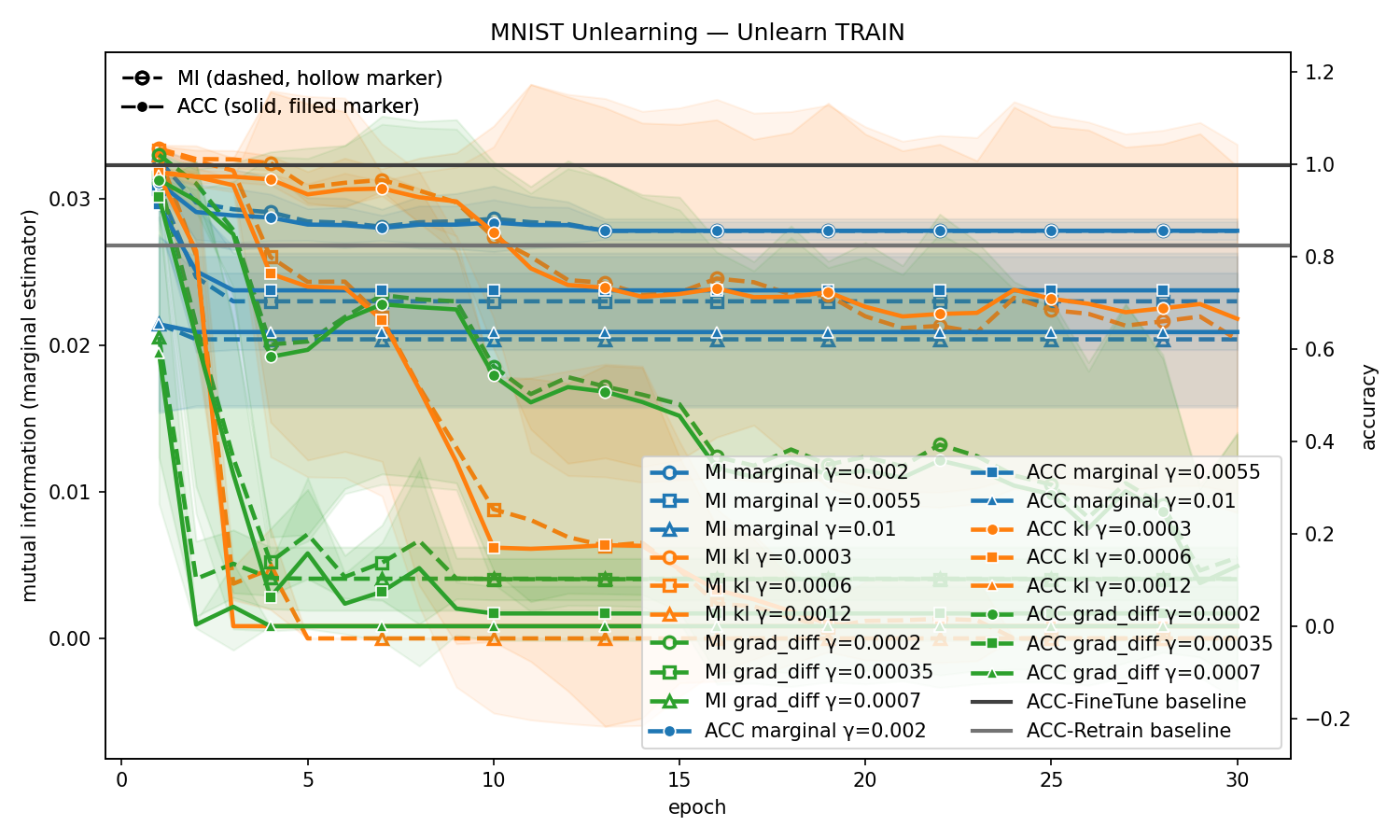}
    \caption{Unlearn set performance trajectories}
    \label{fig:MNIST_unlearn}
  \end{subfigure}
  \vspace{3pt}
  \begin{subfigure}{0.98\linewidth}
    \centering
    \includegraphics[width=0.85\linewidth]{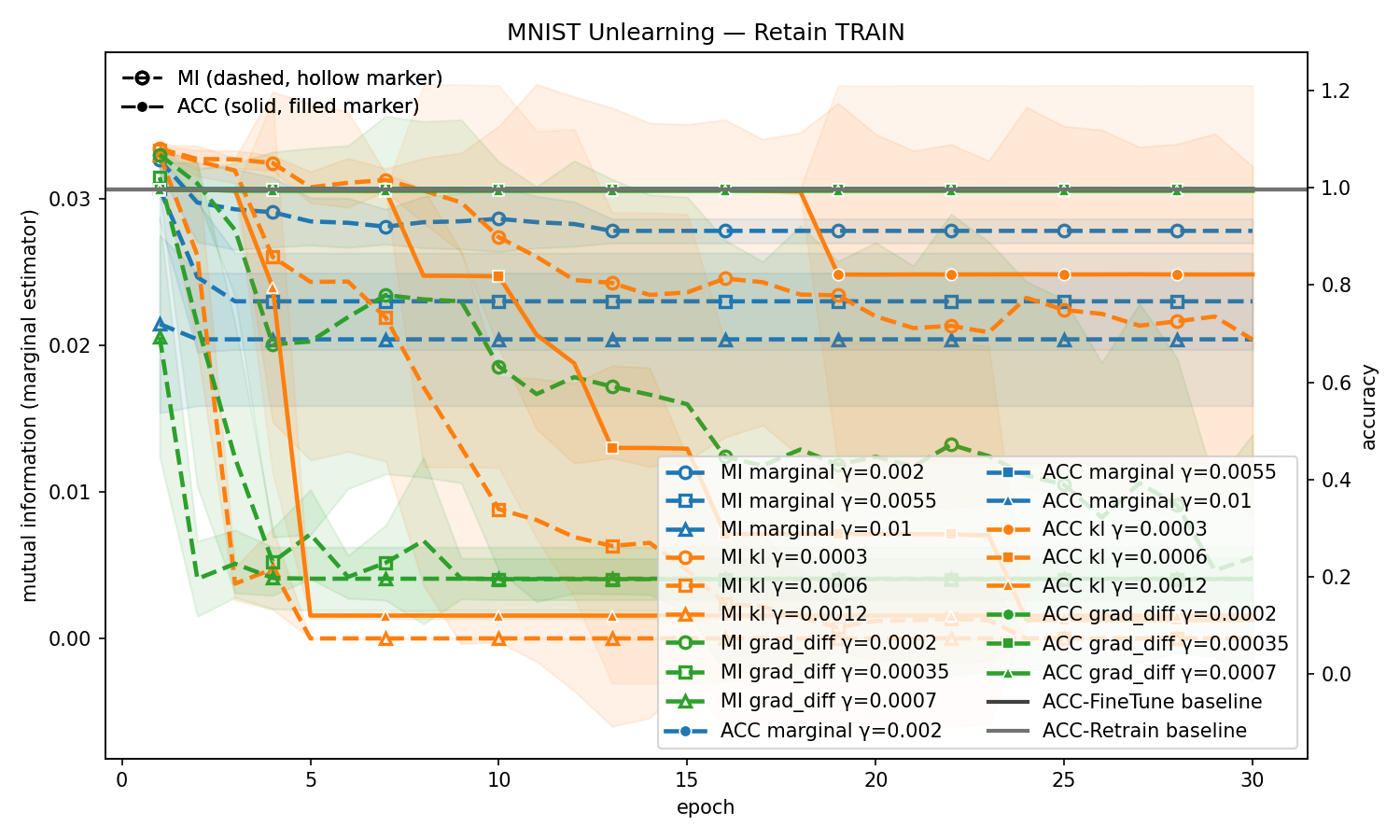}
    \caption{Retain set performance trajectories}
    \label{fig:MNIST_train}
  \end{subfigure}
  \caption{MNIST marginal data-unlearning trajectories on the training folds. The top panel reports unlearn-train accuracy together with the softmax-marginal MI proxy, and the bottom panel reports retain-train accuracy together with the same MI proxy. Solid filled markers show accuracy, while dashed hollow markers show MI. Lines denote means and shaded regions denote \(\pm1\) standard deviation over five folds. Horizontal bands show the FineTune-All and Retrain-on-Retain accuracy references.}
  \label{fig:MNIST_unlearning_compare}
\end{figure}

Figure~\ref{fig:MNIST_unlearning_compare} records the training-fold accuracy
on \(R_i^{\mathrm{tr}}\) and \(U_i^{\mathrm{tr}}\), together with the
softmax-marginal MI proxy, over the unlearning trajectory. Curves are
aggregated over five folds using mean \(\pm\) standard deviation. The
horizontal bands indicate the FineTune-All and Retrain-on-Retain accuracy
references.

The experiment shows two main trends:
\begin{itemize}[leftmargin=*]
    \item \textit{The softmax-marginal MI proxy tracks the unlearning
    trajectory.}
    For Marginal-MI, decreasing the MI proxy moves the unlearn-set behavior
    away from the FineTune-All baseline and toward the Retrain-on-Retain
    reference. This is consistent with the theory: reducing
    \(I(S_{\mathrm{margin}};Z)\) reduces the distinguishability between the
    retain-only output law and the retain-plus-unlearn output law. Since digit
    \(3\) is almost entirely removed from the retain set, the retrain-on-retain
    model provides the natural reference behavior for the unlearn subset.

    \item \textit{Marginal-MI most stably approaches the retrain-on-retain
reference in this experiment.}
Compared with the gradient-ascent-based baselines, Marginal-MI stays closest
to the Retrain-on-Retain reference across the tested trade-off parameters and
training trajectories. This stability can be seen from four perspectives:
(1) robustness to the trade-off parameter, (2) robustness across folds,
(3) stability of the training trajectory, and (4) simultaneous retain- and
unlearn-set behavior.

From the \emph{parameter-robustness} perspective, different choices of
\(\gamma\) for Marginal-MI lead to only moderate changes in the unlearn-set
trajectory, and the resulting curves remain near the Retrain-on-Retain
reference. In contrast, the Grad-Diff and KL+CE baselines are more sensitive to
\(\gamma\): depending on the value of the trade-off parameter, they may either
remain too close to the FineTune-All model or move too aggressively toward
chance-level behavior.

From the \emph{cross-validation robustness} perspective, Marginal-MI exhibits
smaller fold-to-fold variation in the plotted trajectories. The shaded
regions in Figure~\ref{fig:MNIST_unlearning_compare} are visibly narrower for
Marginal-MI than for the baselines, indicating more stable behavior across the
five folds. If the corresponding table is included, this observation can be
quantified by reporting the standard deviations of retain and unlearn
accuracies at the stopping epoch.

From the \emph{training-trajectory} perspective, Marginal-MI produces a smooth
and stable movement of the unlearn-set behavior toward the Retrain-on-Retain
reference, while retaining high performance on the retain set. By contrast,
Grad-Diff and KL+CE display larger fluctuations and sharper drops for some
trade-off choices, reflecting the instability caused by directly maximizing an
unlearn-set loss.

From the \emph{retain/unlearn accuracy} perspective, Marginal-MI better
balances the two desiderata: it moves the unlearn-set behavior toward the
Retrain-on-Retain reference while preserving retain-set accuracy. The
gradient-ascent-based baselines are more vulnerable to under- or
over-unlearning: they can either fail to move far enough from the FineTune-All
behavior, or push the unlearn-set accuracy too far toward chance while also
causing larger changes in retain-set performance.

These four empirical observations are consistent with the objective-level
interpretation of marginal unlearning. Marginal-MI directly penalizes the
source-label distinguishability \(I(S_{\mathrm{margin}};Z)\), while its
cross-entropy term preserves retain-set utility. Thus, the two terms are
aligned with the marginal-unlearning goal: preserve the retain behavior while
making the retain-only and retain-plus-unlearn output laws difficult to
distinguish. In contrast, Grad-Diff and KL+CE use a gradient-ascent term on
the unlearn set. This term indirectly seeks forgetting by pushing against the
unlearn labels, but it does not specify the target behavior that should replace
the removed information. As a result, the utility-preservation term and the
unlearn-set gradient-ascent term can act like competing forces, producing
greater sensitivity to \(\gamma\), fold variation, and training dynamics.
\end{itemize}

\subsection{Optimal Feature Unlearning via Analytic Solution: Algorithm \ref{alg:bary}}\label{s6:analytic_feature_experiment}

\paragraph{Dataset.}
We use the CelebFaces Attributes dataset (CelebA)~\citep{liu2015faceattributes}, a large-scale collection of 200K celebrity face images annotated with $40$ binary attributes (e.g., \texttt{Smiling}, \texttt{Female}). We focus on two binary features, \emph{Smile} and \emph{Gender}, and consider the standard image resolution pre-processing used in face-generation benchmarks.

\paragraph{Problem setup.}
Let $Z\in\{0,1\}$ denote the attribute to unlearn (e.g., $Z{=}1$ for ``smiling'', $Z{=}0$ for ``non-smiling''). Denote by $\mu_0$ and $\mu_1$ the empirical image distributions conditioned on $Z{=}0$ and $Z{=}1$, respectively. Our goal is to transform each image $(x,z)$ to an \emph{attribute-neutral} counterpart $\bar{x}$ whose distribution $\bar{\mu}$ is (i) minimally distorted from the originals and (ii) uninformative about $Z$.

For a complete feature-unlearning map, one applies slice-wise maps from each
conditional law to the common barycenter:
\[
R_0(x)=\frac12 x+\frac12 T_{0\to1}(x),\qquad
R_1(y)=\frac12 y+\frac12 T_{1\to0}(y),
\]
so that ideally \(R_{0\#}\mu_0=R_{1\#}\mu_1=\bar\mu\). In the visualization
below, we display one learned transport direction and its midpoint
interpolation.

\paragraph{Barycentric characterization (theory).}
Theorem~\ref{th:optimal_solution} shows that the $2$-Wasserstein barycenter of $\mu_0$ and $\mu_1$ with equal weights characterizes the optimal $Z$-neutral distribution:
\[
\bar{\mu}
\;\in\;
\argmin_{\mu}\;\tfrac{1}{2}W_2^2(\mu,\mu_0)+\tfrac{1}{2}W_2^2(\mu,\mu_1).
\]
For two measures with equal weights, the unique minimizer equals the midpoint of the $W_2$ geodesic connecting $\mu_0$ and $\mu_1$~\citep{mccann1997convexity}. Let $T_{0\to1}$ be the Brenier map pushing $\mu_0$ to $\mu_1$ (i.e., $T_{0\to1}{=}\nabla\psi$ for a convex potential $\psi$). The McCann \emph{displacement interpolation} induces a geodesic
\[
\mu_t \;=\; \big((1{-}t)\,{\rm Id} + t\,T_{0\to1}\big)_{\#}\mu_0,\qquad t\in[0,1],
\]
and the equal-weight barycenter is precisely the midpoint $\bar{\mu}=\mu_{1/2}$. Intuitively, projecting each sample to the geodesic midpoint makes the two attribute-conditional populations equidistant in $W_2$, attenuating attribute information while minimizing transport cost.

\paragraph{Neural OT implementation (practice).}
Direct computation of $T_{0\to1}$ in image space is intractable. We therefore use a neural OT approach in the spirit of~\citet{korotin2022neural}: a \emph{critic} network approximates the Kantorovich potential (dual), and a \emph{generator} parameterizes the transport map. Concretely, we learn a forward map $T_{0\to1}$ using adversarial objectives that enforce the dual constraints and pushforward consistency. Given a source image $x$ with attribute $z$, we produce its barycentric counterpart by a midpoint displacement step along the learned map for its group:
\[
\bar{x} \;= \frac{1}{2}\,x + \frac{1}{2}\,T_{0\to1}(x)
\]
which realizes the McCann interpolation at $t{=}0.5$.

\paragraph{Results.}
Figure~\ref{plot:unlearn_smile} illustrates the unlearning outcomes for two
target attributes: smile and gender. In each panel, the first row \(X\) shows
source images from one attribute-conditioned group. The last row \(T(X)\)
shows the corresponding push-forward images under the learned OT map. The
middle row,
\[
    \mathrm{Bary}(X):=\frac12 X+\frac12 T(X),
\]
shows the midpoint displacement interpolation, which corresponds to the
two-marginal barycenter in the ideal population setting.

In theory, \(\mathrm{Bary}(X)\) approximates the optimal feature-unlearned
representation. The target attribute is visually attenuated, while other image
features, such as lighting, pose, and idiosyncratic facial traits, are largely
preserved.

\begin{figure}[H]
    \centering
    \vspace{-3mm}  
    \includegraphics[width=\columnwidth]{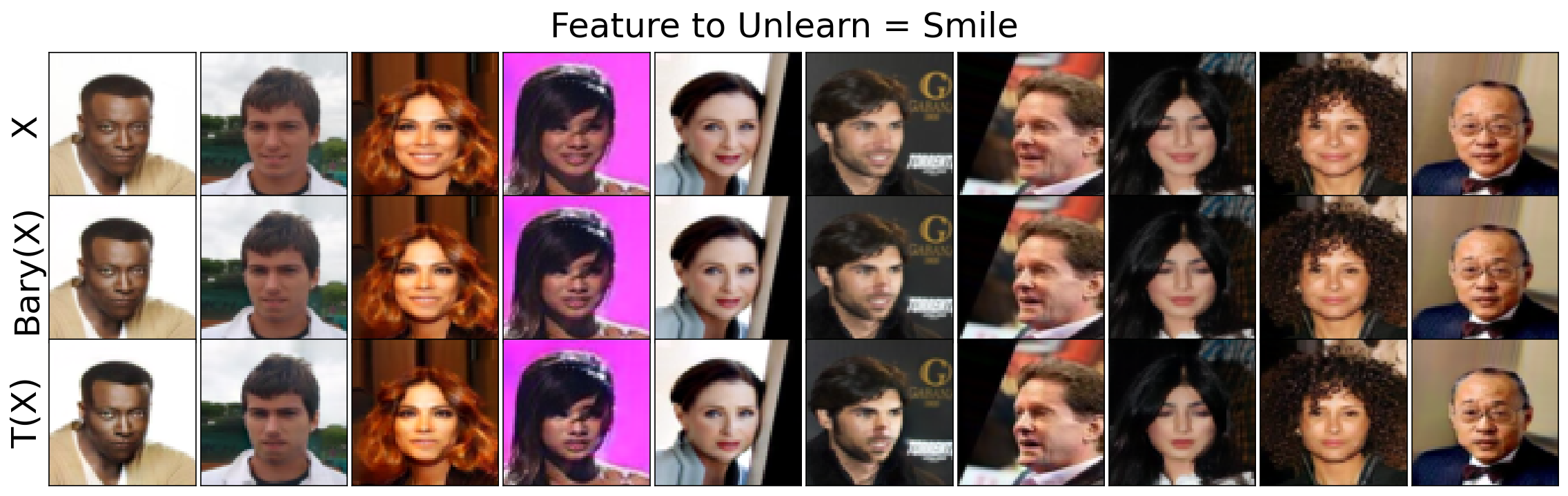}
    \includegraphics[width=\columnwidth]{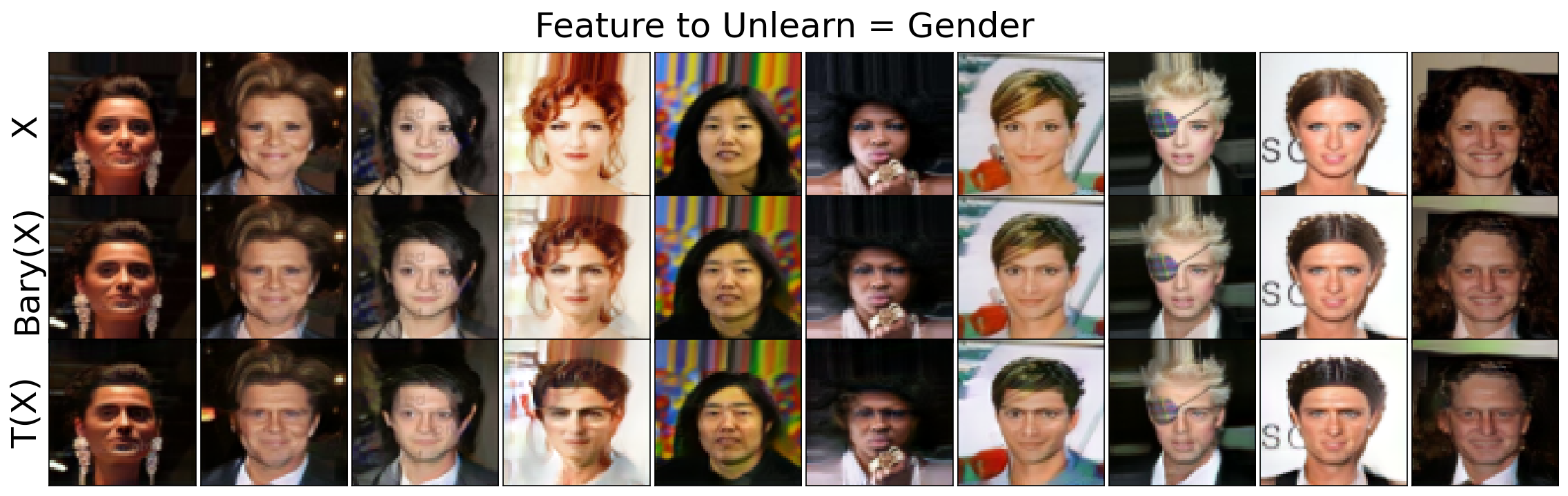}
    \vspace{-5mm}  
    \caption{illustrates the unlearning outcomes for both tasks: The above penal shows the unlearn results for smile feature and the bottom shows the results for gender feature. In each of the panel, the first row, denoted by $X$, represents samples from the original data set with the chosen feature value (smile and female, respectively). The last row, denoted by $T(X)$ represents the push-forward image of the corresponding sample in $X$ by the generated optimal transport map $T$. The middle row, denoted by $\text{Bary}(X) := [0.5Id + 0.5T](x)$, represents the corresponding sample generated by the McCann interpolation at $t = 0.5$, which coincides with the barycenter in the our two-marginal cases.}
    \label{plot:unlearn_smile}
    \vspace{-3mm}  
\end{figure}

\paragraph{Remarks and limitations.}
The $W_2$ barycenter in pixel space gives a principled, utility-aware neutralization target, but the pixel $\ell_2$ ground cost is in general not perfectly aligned with the perceptual face manifold. In practice, neural OT ameliorates this by learning structured maps, yet alternative ground costs or feature-space OT (e.g., in a perceptual embedding) may further improve realism. Extending to multi-class or continuous attributes is straightforward via multi-marginal barycenters with prior weights.

\section*{Conclusion}
This paper advances \emph{machine unlearning} by shifting the foundation from an \emph{anchor-based} retrain-on-retain paradigm, which is difficult to verify in practice, to an \emph{auditable, inference-based} paradigm grounded in the \textbf{Marginal Unlearning Principle}. Inspired by ``blur + reinforce'' mechanisms from cognitive neuroscience, our principle targets the \emph{marginal information} contributed by the unlearn set beyond the retain set and provides provable guarantees about what an observer can infer from model outputs. We further implement the principle through a unified information-theoretic framework (a rate–distortion formulation) that trades off utility against an information regularizer $I(S';Z)$. We also devise practical algorithms for both feature and data-point unlearning, and provide guarantees: (i) \emph{auditability} from directly from model inputs-outputs, (ii) \emph{sufficiency} for the prevailing (approximate) anchored guarantees under utility control, and (iii) a complementary \emph{necessity} statement for robust retrain-on-retain models. In the hard-independence limit ($S'\!\perp\! Z$), we identify a single analytic solution, the $\mathcal W_2$ barycenter, that solves feature unlearning under standard learning utility objectives and yields maximal utility among admissible outcomes. Experiments across tabular and image modalities corroborate the capability of the framework and illustrate its practicality. Collectively, these results pave a principled and testable road for replacing anchor-based requirements with neuroscience-inspired, information-theoretic \emph{marginal unlearning}.

\paragraph{Open problems.}
The results here open several directions of research for theory, algorithms, and practice:
\begin{enumerate}[leftmargin=*, itemsep=2pt, topsep=2pt]
\item \textbf{Sharper quantification of marginal information.}
Beyond mutual information, identify and analyze alternative leakage measures (e.g., general $f$-divergence, R\'enyi–MI, $\chi^2$-divergence, or other integral probability metrics (IPMs) over task-relevant test functions) that (a) admit tighter utility-unlearning tradeoffs, (b) are easier to estimate from finite samples, and (c) align with specific downstream utilities.

\item \textbf{From MI to auditable stopping rules.}
Empirically, unlearn accuracy tracks $I(S';Z)$ closely. Developing provable theoretical results on the connection between marginal information quantification (not necessarily the proposed mutual information) and unlearn accuracy may yield principled schedules for tuning $\lambda$ and \emph{stopping criteria} that certify target leakage levels.

\item \textbf{Information–transport synthesis in high dimensions.}
Our results demonstrate that information-theoretical regularization methods potentially can outperform brute force matching methods in estimating optimal transport maps, plans, and Wasserstein barycenter. Can this be extended beyond $\mathcal W_2$ to task-aligned costs and multi-marginal barycenters for complex, high-dimensional unlearning?

\item \textbf{Broader modalities and multi-modal systems.}
Instantiate marginal unlearning for LLMs, speech, graphs, time-series, and multi-modal models. This includes modality-specific $(S',Z)$ designs, direct output-level audit protocols for generative systems, and integration with continual learning/fine-tuning so that updates learn only the \emph{unique} signal of new data while preserving retained knowledge.

\end{enumerate}

\section*{Acknowledgement}

The authors acknowledge support from NSF DMS-2208356,  NIH R01HL16351,  
P41EB032840, and DE-SC0023490.

\newpage

\clearpage
\bibliography{References_Unlearning}

\newpage
\appendix
\onecolumn

\section{Appendix: Supplementary material for Section 1}\label{A:Categorty}
\subsection{Feature Unlearning}\label{a:feature_unlearning_intro}


\begin{table}[t] 
\centering
\footnotesize
\setlength{\tabcolsep}{4pt}
\renewcommand{\arraystretch}{1.12}
\begin{tabularx}{\linewidth}{@{}%
>{\raggedright\arraybackslash}p{0.22\linewidth}%
>{\raggedright\arraybackslash}p{0.16\linewidth}%
>{\raggedright\arraybackslash}p{0.30\linewidth}%
>{\raggedright\arraybackslash}X@{}}
\toprule
\textbf{Definitions} & \textbf{Target} & \textbf{Enforcement mechanism} & \textbf{Notes/guarantees}\\
\midrule
\multicolumn{4}{@{}l}{\textbf{Fairness: outcome statistical parity}}\\
\qquad Statistical Parity \citep{dwork2012fairness, zemel2013learning} &
$\hat Y \perp Z$ &
Constraint at training or post-hoc calibration of selection rates &
Distributional parity; potential utility drop if base rates differ.\\
\midrule
\multicolumn{4}{@{}l}{\textbf{Fairness: representation statistical parity}}\\
\qquad Adversarial debiasing \citep{edwards2016censoring,madras2018laftr,zhang2018adversarial} &
$h(X)\perp Z$ &
Predictor vs.\ adversary predicting $Z$ from $h(X)$ or $\hat Y$ &
Empirical proxy to reduce $I(\cdot;Z)$; architecture-agnostic.\\
\qquad VFAE \citep{louizos2016vfae} &
$h(X)\perp Z$ &
VAE with MMD penalty to match $p(h\mid Z{=}z)$ across $z$ &
Produces reusable ``purged'' latents.\\
\qquad HSIC / dCov regs.\ \citep{li2019hsic,liu2022discov} &
$\operatorname{dCov}\!\big(h(X),Z\big)=0$ &
Add HSIC or distance-covariance penalties during training &
Nonparametric dependence control; task-agnostic.\\
\midrule
\multicolumn{4}{@{}l}{\textbf{Concept erasure: representation independence}}\\
\qquad INLP \citep{ravfogel2020inlp} &
$\Sigma_{hZ}=\mathbf{0}$ &
Iterate: train linear $Z$-probe, project $h(X)$ to probe nullspace &
No linear predictor of $Z$ from $h(X)$; nonlinear leakage may remain.\\
\qquad LACE / R--LACE \citep{ravfogel2022lace} &
$\Sigma_{hZ}=\mathbf{0}$ &
Linear minimax/convex program to erase concept subspace &
Improves optimality/control vs.\ heuristic projections.\\
\qquad LEACE \citep{belrose2023leace} &
$\Sigma_{hZ}=\mathbf{0}$ &
Closed-form least-squares projection &
Blocks all \emph{linear} $Z$-probes; nonlinear leakage may remain.\\
\midrule
\multicolumn{4}{@{}l}{\textbf{Concept erasure: generative independence}}\\
\qquad ESD \citep{gandikota2023erasing} &
$f_{\theta}(X) \perp C$ &
Fine-tune to a negative-guidance teacher (subtract conditional--unconditional score) &
Local removal of concept $c$ with utility preservation.\\
\qquad All-but-One \citep{hong2024allbutone} &
$f_{\theta}(X) \perp C$ &
Surgical update of guidance term with preservation constraints &
Improves fidelity vs.\ naive fine-tuning.\\
\qquad Receler \citep{huang2024receler} &
$f_{\theta}(X) \perp C$ &
Lightweight eraser adapters + regularization &
Robust/local erasure; preserves base weights.\\
\bottomrule
\end{tabularx}
\caption{Representative methods grouped by \emph{unlearning/removal definition} (mathematical form shown) and \emph{enforcement}. Here $h(X)$ denotes a learned representation. Linear erasure is expressed as cross-covariance independence $\Sigma_{hZ}=\mathbf{0}$ (with a centered encoding of $Z$); generative concept removal aims at independence in the model distribution $p_{\theta}(\cdot)$.}
\label{tab:unlearning-map}
\end{table}

\paragraph{Relation to other notions.}
Equalized Odds post-processing \citep{hardt2016equality} requires conditional independence $\hat Y \perp Z \mid Y$ (parity of errors). This may still permit $I(\hat Y;Z)>0$, so it is insufficient for unlearning at the point of exposure. Causal notions (e.g., counterfactual or path-specific fairness) formalize invariance under interventions on $Z$ within a structural causal model and are not equivalent to observational independence; they are typically enforced via functional restrictions or constraints on path-specific effects \citep{kusner2017counterfactual,nabi2018fair,chiappa2019paths}. For auditing, \emph{amnesic probing} assesses whether removing targeted information from representations changes behavior \citep{elazar2021amnesic}; recent diffusion benchmarks stress-test leakage and collateral damage after erasure \citep{amara2025erasebench}.

\subsection{Data Unlearning}\label{a:data_unlearning_intro}

\paragraph{Why anchor-based unlearning is unauditable, and why a verifiable definition is needed.}
Anchor-based frameworks define unlearning via equivalence to a theoretical retrain-on-retain \emph{anchor}. However, because this anchor is typically unobserved, strict auditing based solely on black-box behavior is theoretically impossible \citep{thudi2022auditable}. In practice, existing verification schemes have proven \emph{fragile}: \citet{zhang2024fragile} demonstrate that adversarial training procedures can satisfy both backdoor-based and reproducing verification metrics while retaining target information. While early frameworks establish feasibility, they rely heavily on \emph{auxiliary instrumentation}, such as seeded randomness, gradient traces, provenance metadata, or planted canaries, rather than observable properties of the released model \citep{sommer2022athena}. Similarly, cryptographic and TEE-based proofs target verifiability but necessitate trusted hardware and impose significant operational overhead \citep{weng2022poul}. Recent surveys underscore that robust verification remains a central open challenge \citep{xue2025verificationSurvey,xu2024machine}. Moreover, naive workflows (e.g., comparing pre- and post-unlearning models) can inadvertently leak membership information \citep{chen2021unlearning_leaks}. These systemic limitations motivate black-box, output-level definitions, such as our proposed marginal unlearning principle, that are directly auditable from observable model behavior.

\newpage

\section{Appendix: Supplementary material for Section 2}\label{A:Section2}
\subsection{Proof of Lemma \ref{lem:mi-tv}}\label{a:proof_lemma_mi_tv}

\begin{proof}
Since $\mathcal{L}(\widehat Y_{\mathrm{margin}})=M$, the chain rule for KL gives
\[
I(\widehat Y_{\mathrm{margin}};Z)
=\KL\!\big(\mathcal{L}(\widehat Y_{\mathrm{margin}},Z)\ \Vert\ M\otimes\mathcal{L}(Z)\big)
=\mathbb{E}_Z\Big[\KL\!\big(\mathcal{L}(\widehat Y_{\mathrm{margin}}\!\mid Z)\ \Vert\ M\big)\Big],
\]
which equals $(1-\pi)\KL(P_0\Vert M)+\pi\KL(P_1\Vert M)=\JS_\pi(P_0,P_1)$, proving the identity.

For TV, use triangle inequality and Pinsker on a general measurable space:
\[
\TV(P_0,P_1)\ \le\ \TV(P_0,M)+\TV(P_1,M)
\ \le\ \sqrt{\tfrac12\,\KL(P_0\Vert M)}+\sqrt{\tfrac12\,\KL(P_1\Vert M)}.
\]
Let $a=(1-\pi)\KL(P_0\Vert M)$ and $b=\pi\KL(P_1\Vert M)$ so that $a+b=I(\widehat Y_{\mathrm{margin}};Z)$. It then follows from Cauchy–Schwarz that
\begin{align*}
    \sqrt{\KL(P_0\Vert M)}+\sqrt{\KL(P_1\Vert M)}
    & =\frac{\sqrt{a}}{\sqrt{1-\pi}}+\frac{\sqrt{b}}{\sqrt{\pi}}\\
    & \le\ \sqrt{\Big(\frac{1}{1-\pi}+\frac{1}{\pi}\Big)\,(a+b)}\\
    & =\ \sqrt{\frac{I(\widehat Y_{\mathrm{margin}};Z)}{\pi(1-\pi)}}.
\end{align*}
Therefore, we obtain
\begin{equation*}
    \TV(P_0,P_1)\le \sqrt{\frac{I(\widehat Y_{\mathrm{margin}};Z)}{2\pi(1-\pi)}}.
\end{equation*}
\end{proof}

\subsection{Proof of Theorem \ref{thm:mu-to-anchor}}\label{a:proof_theorem_mu-to-anchor}

\begin{proof}
Write $\mu_{\Theta}^{\,p} := \mathcal{L}(\Theta(X))$ for $X \sim p$ independent of $\Theta$.
Let $Q_h(\cdot\mid x)$ denote the predictive distribution of model $h$ at input $x$, and let $\eta_x := \mathcal{L}(Y \mid X=x)$ be the true label distribution under the retain set.

\medskip
\textit{Step 1: Triangle decomposition at $p^d$.}
By the triangle inequality for total variation distance:
\[
\TV\!\big(\mu_{\Theta_u}^{\,p^d},\mu_{\Theta_r}^{\,p^d}\big)
\ \le\
\underbrace{\TV\!\big(\mu_{\Theta_u}^{\,p^d},\mu_{\Theta_u}^{\,p^r}\big)}_{(\mathrm{I})}
\;+\;
\underbrace{\TV\!\big(\mu_{\Theta_u}^{\,p^r},\mu_{\Theta_r}^{\,p^r}\big)}_{(\mathrm{II})}
\;+\;
\underbrace{\TV\!\big(\mu_{\Theta_r}^{\,p^r},\mu_{\Theta_r}^{\,p^d}\big)}_{(\mathrm{III})}.
\]

\medskip
\textit{Step 2: Utility alignment (Term II).}
We bound the distance between the unlearned model and the anchor on the retain distribution $p^r$. By the joint convexity of $\TV$ and the triangle inequality relative to the ground truth $\eta_x$:
\begin{align*}
\TV\!\big(\mu_{\Theta_u}^{\,p^r},\mu_{\Theta_r}^{\,p^r}\big)
&\le \mathbb{E}_{\Theta_u,\Theta_r} \int \TV\!\big(Q_{\Theta_u}(\cdot\mid x),\,Q_{\Theta_r}(\cdot\mid x)\big)\,p^r(dx)\\
&\le \mathbb{E}_{\Theta_u,\Theta_r} \int \Big[\TV\!\big(Q_{\Theta_u}(\cdot\mid x),\eta_x\big)
+\TV\!\big(Q_{\Theta_r}(\cdot\mid x),\eta_x\big)\Big]\,p^r(dx).
\end{align*}
Applying Pinsker’s inequality pointwise, $\TV(Q, \eta) \le \sqrt{\frac{1}{2}\KL(\eta \| Q)}$. Using Jensen's inequality (concavity of the square root) to move the expectation inside:
\begin{align*}
    \mathbb{E}_{\Theta_u}\int \TV\!\big(Q_{\Theta_u},\eta_x\big)\,p^r(dx) & \le\ \sqrt{\tfrac12\,\mathbb{E}_{\Theta_u}\int \KL\!\big(\eta_x\Vert Q_{\Theta_u}(\cdot\mid x)\big)\,p^r(dx)}\\
    & =\ \sqrt{\tfrac12\,\overline{\mathrm{reg}}_{\log}(\Theta_u)}
\ \le\ \sqrt{\tfrac{\delta}{2}}.
\end{align*}
A symmetric bound applies to $\Theta_r$ with $\delta_g$. Summing these yields:
\[
(\mathrm{II})\ \le\ \sqrt{\tfrac12}\,\big(\sqrt{\delta}+\sqrt{\delta_g}\big).
\]

\medskip
\textit{Step 3: Membership independence (Term I).}
Since
\[
    \mathcal L(X_{\mathrm{margin}}\mid Z=0)=p^d,
    \qquad
    \mathcal L(X_{\mathrm{margin}}\mid Z=1)=p^r,
\]
and $\widehat Y_{\mathrm{margin}}=f_{\Theta_u}(X_{\mathrm{margin}})$, we have
\[
    \mathcal L(\widehat Y_{\mathrm{margin}}\mid Z=0)
    =
    \mu_{\Theta_u}^{\,p^d},
    \qquad
    \mathcal L(\widehat Y_{\mathrm{margin}}\mid Z=1)
    =
    \mu_{\Theta_u}^{\,p^r}.
\]
These are mixture laws obtained after averaging over the internal randomness $\Theta_u$.
Applying Lemma~\ref{lem:mi-tv} directly to $\widehat Y_{\mathrm{margin}}$ gives
\[
(\mathrm{I})
=
\TV\!\big(\mu_{\Theta_u}^{\,p^d},\mu_{\Theta_u}^{\,p^r}\big)
\le
\sqrt{\frac{I(\widehat Y_{\mathrm{margin}};Z)}{2\pi(1-\pi)}}
\le
\sqrt{\frac{\varepsilon_u}{2\pi(1-\pi)}}.
\]

\medskip
\textit{Step 4: Anchor generalization (Term III).}
The term $(\mathrm{III})=\TV\!\big(\mu_{\Theta_r}^{\,p^r},\mu_{\Theta_r}^{\,p^d}\big)$ is the inherent distribution shift of the anchor model and is left explicit.
\medskip

Finally, combining Steps 2, 3, and 4 into Step 1 completes the proof.
\end{proof}

\subsection{Proof of Lemma \ref{l:DU_as_condition_for_good_model}}\label{a:proof_DU_as_good}

\begin{proof}
Assume that the first alternative fails. Then there exists
\(y^*\in\widehat{\mathcal Y}\) such that
\[
    \left|\log\frac{p_1(y^*)}{p_0(y^*)}\right|>\epsilon.
\]
In particular, the two densities differ at \(y^*\), so
\[
    |p_1(y^*)-p_0(y^*)|>0.
\]
Define
\[
    \gamma(y^*):=\frac12 |p_1(y^*)-p_0(y^*)|>0.
\]
Since \(\widehat{\mathcal Y}\) is open and \(y^*\in\widehat{\mathcal Y}\),
there exists \(r_0>0\) such that
\[
    B_{r_0}(y^*)\subset \widehat{\mathcal Y}.
\]
By continuity of \(p_0\) and \(p_1\), equivalently of \(p_1-p_0\), there exists
\(\rho_*(y^*)\in(0,r_0]\) such that the sign of \(p_1-p_0\) is constant on
\(B_{\rho_*(y^*)}(y^*)\), and
\[
    |p_1(y)-p_0(y)|
    \ge
    \gamma(y^*),
    \qquad y\in B_{\rho_*(y^*)}(y^*).
\]
For notational simplicity, write \(\rho_*:=\rho_*(y^*)\). Now define the \(1\)-Lipschitz test function
\[
    \varphi(y):=(\rho_*-\|y-y^*\|)_+ .
\]
By Kantorovich-Rubinstein duality,
\begin{align*}
    \mathcal{W}_{d_{\widehat{\mathcal Y}}}(f(X_0),f(X_1))
    &\ge
    \left|
    \int_{\widehat{\mathcal Y}}
    \varphi(y)(p_1(y)-p_0(y))\,dy
    \right| \\
    &\ge
    \gamma(y^*)
    \int_{B_{\rho_*}(y^*)}(\rho_*-\|y-y^*\|)\,dy \\
    &=
    \gamma(y^*)
    \frac{\omega_m\rho_*^{m+1}}{m+1}.
\end{align*}
On the other hand, since \(f\) is \(L\)-Lipschitz,
\[
    \mathcal{W}_{d_{\widehat{\mathcal Y}}}(f(X_0),f(X_1))
    \le
    L\,\mathcal{W}_{d_{\mathcal X}}(X_0,X_1).
\]
Combining the two estimates and using
\(\mathcal W_{d_{\mathcal X}}(X_1,X_0)>0\), we obtain
\[
    L
    \ge
    \frac{
        \gamma(y^*)\,\omega_m\,\rho_*^{m+1}
    }{
        (m+1)\,\mathcal{W}_{d_{\mathcal X}}(X_1,X_0)
    }
    =
    L^*(\epsilon,y^*).
\]
Therefore, if the likelihood-ratio bound fails, then the Lipschitz constant
must satisfy \(L\ge L^*(\epsilon,y^*)\).
\end{proof}

\section{Appendix: Supplementary material for Section 3}\label{A:Section3}
\subsection{Utility motivation}\label{a:utility_motivation}

Mutual information is a widely used quantification of the common information shared by two random variables, with useful coding interpretation in the discrete finite-entropy setting. In particular, given a data set $X$ with a goal to compress $X$ by an encoding $\hat{X}$, the volume of code needed to encode $X$ is $2^{H(X)}$ where $H(X)$ is the entropy of $X$. Furthermore, after observing \(\hat X\), the average volume of $X$ mapped to individual realizations of $\hat{X}$ is equal to $2^{H(X|\hat{X})}$. Here,
\[
    H(X\mid \hat X)
    :=
    -\sum_{\hat x}p(\hat x)\sum_x p(x\mid \hat x)\log p(x\mid \hat x).
\]
is the conditional entropy of $X$ on $\hat{X}$. Intuitively, a higher conditional entropy means more volume of $X$ is expected to be mapped to individual realizations of $\hat{X}$, which implies more randomness of $X$ remains given the observation of $\hat{X}$. In other words, less $X$ is explained by $\hat{X}$.

Since the volume of code for $X$ is $2^{H(X)}$ and the average volume of code mapped to each $\hat{X}$ is $2^{H(X|\hat{X})}$, the average cardinality of the partition generated by the values of $\hat{X}$ on the values of $X$ is the ratio:
\begin{equation}
    \frac{2^{H(X)}}{2^{H(X|\hat{X})}} = 2^{I(X;\hat{X})}.
\end{equation}
Here, $I(X;\hat{X}) = H(X) - H(X|\hat{X})$ is the mutual information between $X$ and $\hat{X}$. On the one hand, higher mutual information implies that  $\hat{X}$ generates a partition with higher cardinality (or usually finer partition) on $X$, which further implies more common information is shared between $X$ and $\hat{X}$. On the other hand, from a data compression perspective, lower mutual information means $\hat{X}$ generates a partition on $X$ with lower cardinality, which further implies a better data compression rate, because $\hat{X}$ can compress $X$ into a partition of smaller cardinality.

Based on the motivation provided by coding interpretation from the discrete finite-entropy setting, as discussed in Section~\ref{S:Definition}, we adopt mutual information to quantify the common information and compression rate between random variables. For  unlearning quality purposes, we hope to maintain as much information of $X$ as possible in generating $\hat{X}$. Therefore, to maximize utility, we should maximize mutual information $I(X;\hat{X})$ or, equivalently, minimize the information loss from from $X$ to $\hat{X}$.

\subsection{Admissibility}\label{a:admissibility}

Since we are unlearning the information of $Z$ by compressing dataset or variable pair $(X,Z)$, it is natural to require the resulting compressed data $\hat{X}$ to be measurable with respect to $(X,Z)$. Intuitively, the compression output $\hat{X}$ should have its ``root'' from $(X,Z)$ without introducing additional randomness by the compression map $f$ itself. Technically speaking, the ``root'' here means that for every event or observation $A$ of the compressed $\hat{X}$, the information or pre-image represented by the observation $\hat{X}^{-1}(A) := \{\omega: \hat{X}(\omega) \in A\}$ comes from the knowledge of $f^{-1}(A) := \{(x,z): f(x,z) \in A\}$ based on $(X,Z)$:
\begin{align}
\{\omega: \hat{X}(\omega) \in A\} 
&= \hat{X}^{-1}(A) \nonumber \\
&= (X,Z)^{-1}(f^{-1}(A)) \nonumber \\
&= \{\omega: (X(\omega),Z(\omega)) \in f^{-1}(A)\}.
\label{eq:long_equation}
\end{align}
Here, $\omega \in \Omega$ is the smallest unit of information we can have from the measure space $(\Omega, \mathcal{F},\mathbb{P})$. From a probability-theoretical perspective, since $\hat{X}$ is a deterministic compression of $(X,Z)$, it generates a coarser partition (or, more technically, sigma-algebra) than the original information $(X,Z)$ and we say $\hat{X}$ is measurable with respect to $(X,Z)$, denoted by
\begin{equation}
    \sigma(\hat{X}) \subset \sigma(X,Z).
\end{equation}
This is equivalent to the existence of a $\mathcal{B}_\mathcal{X} \otimes \mathcal{B}_\mathcal{Z}/\mathcal{B}_\mathcal{X}$-measurable map, denoted by $f$, such that $\hat{X} = f(X,Z)$. That is, our admissibility is equivalent to the assumption that the data compression process does not create information or randomness by itself. Therefore, we define the admissible unlearning outcome in our framework as follows:
\begin{equation}
\label{eq:admissible}
\mathcal{A}(X,Z) \;:=\; \Bigl\{\hat{X} = f(X,Z):
  f \text{ is }
 \mathcal{B}_\mathcal{X} \otimes \mathcal{B}_\mathcal{Z}/\mathcal{B}_\mathcal{X} 
  \text{-measurable} \Bigr\},
\end{equation}
and we use $\hat{X} = f(X,Z)$ and $\hat{X} \in \mathcal{A}(X,Z)$ interchangeably. This is the deterministic admissibility class. If randomized mechanisms are
allowed, the deterministic class can be enlarged to mechanisms of the form \(\hat X=f(X,Z,\Theta)\), where the internal seed \(\Theta\perp (X,Z)\) is not released, and all information quantities are computed after averaging over \(\Theta\). In this current work, we focus on the deterministic admissible compressions to better demonstrate the theoretical intuition behind the framework, and defer a detailed study of the randomized compression to future work.

\subsection{Details on considered utility quantifications}\label{a:utility_appendix}
In practical machine unlearning, the utility of unlearning may need to be evaluated with respect to a different target variable $Y$ rather than the original dataset $X$. Moreover, mutual information, while often a natural choice, is not the only metric for quantifying the relationship between the unlearning outcome $\hat{X}$ and the target variable $Y$. To accommodate diverse objectives, we extend our framework to the general formulation:
\begin{equation}\label{eq:general_formulation}
\inf_{\hat{X}=f(X,Z)}
\bigl\{
\mathcal C(Y;\hat X,Z):\hat X\perp Z
\bigr\}.
\end{equation}
where $\mathcal{C}(Y;\hat X,Z)$ represents a utility cost or risk quantification, and the constraint $\hat{X} \perp Z$ ensures that the unwanted information $Z$ is fully removed from $\hat{X}$. Below, we introduce several commonly used cost objectives and their corresponding constrained optimization problems, for which we provide a unified analytic feature unlearning solution in Section \ref{s4:feature_analytic_solution}.

\medskip
\noindent
{\bf Mutual information maximization:} The utility is defined as $\mathcal C(Y;\hat X,Z)=-I(Y;\hat X,Z)$, measuring the shared information between the target variable $Y$ and the unlearning outcome $\hat{X}$. This objective is widely applied in classification methods such as decision trees \cite{criminisi2012decision} and in deep learning techniques, including Deep InfoMax \cite{hjelm2018deepinfomax} and information bottleneck methods \cite{alemi2017vib}. The corresponding optimization problem, \textit{Mutual-Information-Maximized Feature Unlearning}, is given by $$\sup_{\hat X=f(X,Z)} \bigl\{ I(Y;\hat X,Z):\hat X\perp Z \bigr\}.$$ Since $I(Y;\hat X,Z)=H(Y)-H(Y\mid \hat X,Z)$, this problem is equivalent to minimizing the conditional entropy $H(Y|\hat{X},Z)$.

\medskip
\noindent
{\bf Posterior KL-divergence cost:} The cost is
\[
\mathcal{C}(Y;\hat{X},Z)
=
-\mathbb E\!\left[
D_{\mathrm{KL}}\!\big(
\mathbb{P}(Y\mid \hat{X},Z)\,\Vert\,\mathbb{P}(Y)
\big)
\right],
\]
where \(D_{\mathrm{KL}}\) measures the divergence between the posterior and
prior distributions of \(Y\). KL-divergence objectives are widely used in
information-theoretic learning and generative modeling, including variational
methods such as VAEs \cite{higgins2017betaVAE}. The corresponding optimization
problem, \textit{Posterior-KL-Minimized Feature Unlearning}, is formulated as
\[
\inf_{\hat{X}=f(X,Z)}
\left\{
-\mathbb E\!\left[
D_{\mathrm{KL}}\!\big(
\mathbb{P}(Y\mid \hat{X},Z)\,\Vert\,\mathbb{P}(Y)
\big)
\right]
:
\hat{X}\perp Z
\right\}.
\]
This problem seeks to make $\mathbb{P}(Y|\hat{X},Z)$ as deterministic as possible relative to the prior $\mathbb{P}(Y)$, thereby enhancing the predictive power of $(\hat{X},Z)$ for $Y$. Provided the quantities are well-defined, the expected posterior KL equals \(I(Y;\hat X,Z)\). Thus, minimizing this cost is equivalent to maximizing the informativeness of the representation \((\hat X,Z)\) about \(Y\).

\medskip
\noindent
{\bf Conditional probability energy maximization:} The utility is defined as the $L^2$-norm of the conditional probability $\mathbb{P}(Y \in A | \hat{X},Z)$ for classification or the negative mean squared error (MSE) for regression:
\begin{equation*}
    \mathcal{C}(Y;\hat{X},Z) =
    \begin{cases}
         -\big\|\mathbb P(Y\in A\mid \hat X,Z)\big\|_{L^2}^2 & \text{for classification-type events \(A\in\mathcal B_{\mathcal Y}\),}\\
        -\big\|\mathbb E(Y\mid \hat X,Z)\big\|_{L^2}^2 & \text{regression-type targets \(Y\in L^2(\Omega;\mathcal Y)\)}.
    \end{cases}
\end{equation*}
The corresponding optimization problem, \textit{Energy-Maximized Feature Unlearning}, is formulated as $\sup_{\hat{X} = f(X,Z)} \{||\mathbb P(Y\in A\mid \hat X,Z)||_{L^2}^2 : \hat{X} \perp Z\}$. A higher $L^2$-norm indicates a more precise prediction of the event $\{Y \in A\}$ based on $(\hat{X},Z)$, leading to reduced Bayes error and improved decision boundaries. Similarly, in the regression setting: $\sup_{\hat{X} = f(X,Z)} \{||\mathbb E(Y\mid \hat X,Z)||_{L^2}^2 : \hat{X} \perp Z\}$, a smaller conditional-mean-energy cost corresponds to larger explained variance and smaller mean-squared prediction error.

As we show in Section~\ref{s4:feature_analytic_solution}, when the above-listed objectives are applied, there exists a universal optimal feature unlearning solution to the general formulation: equation \eqref{eq:general_formulation} for arbitrary target variables $Y$.

\subsection{Formulation of constrained optimization problems}

Our goal here is to provide theoretical solutions to the following constrained optimization problems under mild assumptions, thereby developing a unified mathematical framework for machine unlearning of features and labels under various utility objectives:

\begin{prob}[Conditional-entropy-minimized feature unlearning]\label{prob:Conditional_Entropy_Minimized}
    \begin{equation*}
        \inf_{\hat X\in\mathcal A(X,Z)} \bigl\{H(Y\mid \hat X,Z):\hat X\perp Z\bigr\}.
    \end{equation*}
    \end{prob}
    In many cases, an unlearning output $\hat{X}$ may be used to generate inferences or predictions for some random variable $Y$. Thus, it is desirable to solve Problem \ref{prob:Conditional_Entropy_Minimized} for such a target variable $Y$. Notice that, due to $I(Y;\hat{X},Z) = H(Y) - H(Y|\hat{X},Z)$, the above problem shares the same solution as the maximization of mutual information between $Y$ and $(\hat{X},Z)$:

\begin{prob}[Mutual-information-maximized feature unlearning]\label{prob:Mutual_Information_Maximized}
    \begin{equation*}
        \sup_{\hat X\in\mathcal A(X,Z)} \bigl\{I(Y;\hat X,Z):\hat X\perp Z\bigr\}.
    \end{equation*}
    \end{prob}
    Notably, the optimal solution to Problems \ref{prob:Conditional_Entropy_Minimized} and \ref{prob:Mutual_Information_Maximized} does not depend on the specific choice of $Y$ due to the monotonicity of the functional $H(Y|\cdot)$ with respect to the sigma-algebra generated by $(\hat{X},Z)$. Thus, despite the explicit presence of $Y$ in Problem \ref{prob:Conditional_Entropy_Minimized}, it provides a generalized solution for any choice of $Y$.

\begin{prob}[Posterior-divergence cost for feature unlearning]\label{prob:KL-Divergence-Maximized}
    \begin{equation*}
    \inf_{\hat X\in\mathcal A(X,Z)}
    \left\{
    -\mathbb E\!\left[
    D_{\mathrm{KL}}\!\big(
    \mathbb P(Y\mid \hat X,Z)\Vert \mathbb P(Y)
    \big)
    \right]
    :
    \hat X\perp Z
    \right\}.
    \end{equation*}
    \end{prob}
    Given a variable of interest denoted by $Y$, a general downstream machine learning or AI task may aim to estimate the conditional probability using the unlearning outcome $(\hat{X},Z)$. Therefore, it is desirable to make $\mathbb{P}(Y|\hat{X},Z)$ as deterministic as possible relative to the original distribution of $Y$. To quantify this determinism, we use the KL-divergence of $\mathbb{P}(Y|\hat{X},Z)$ relative to $\mathbb{P}(Y)$, leading to the optimization problem above. Intuitively, a more accurate prediction of $\mathbb{P}(Y|\hat{X},Z)$ implies less randomness relative to $\mathbb{P}(Y)$, which increases the KL-divergence of $\mathbb{P}(Y|\hat{X},Z)$ relative to $\mathbb{P}(Y)$.

\begin{prob}[Energy-maximized feature unlearning]
\label{prob:Energy-Maximized}
For a measurable target \(Y:\Omega\to\mathcal Y\) and event
\(A\in\mathcal B_{\mathcal Y}\), consider
\[
    \sup_{\hat X\in\mathcal A(X,Z)}
    \left\{
    \big\|\mathbb P(Y\in A\mid \hat X,Z)\big\|_{L^2}^2
    :\hat X\perp Z
    \right\}.
\]
For regression-type targets \(Y\in L^2(\Omega;\mathcal Y)\), with
\(\mathcal Y\) a Hilbert space, one may instead consider
\[
    \sup_{\hat X\in\mathcal A(X,Z)}
    \left\{
    \big\|\mathbb E(Y\mid \hat X,Z)\big\|_{L^2}^2
    :\hat X\perp Z
    \right\}.
\]
\end{prob}
Finally, from the perspective of conditional probability estimation, for a given $Y$ and event $A\in\mathcal B_{\mathcal Y}$, it is natural to maximize the energy (or equivalently, the $L^2$ norm) of the conditional probability $\mathbb{P}(\{Y \in A\}|\hat{X},Z)$. Here, a larger $L^2$ norm of the conditional probability indicates a more precise prediction of the event $\{Y \in A\}$ based on the information provided by $\hat{X}$.


\newpage

\section{Appendix: Supplementary material for Section 4}\label{A:Section4}
\subsection{Wasserstein distance and barycenter}\label{a:OT_intro}

Given $\mu, \nu \in \mathcal{P}(\mathbb{R}^d)$ where $\mathcal{P}(\mathbb{R}^d)$ denotes the set of all the probability measures on $\mathbb{R}^d$, $$\mathcal{W}_2(\mu,\nu) := \left(\inf_{ \lambda \in \Pi(\mu,\nu) } \Big\{\int_{\mathbb{R}^d \times \mathbb{R}^d} ||x_1 - x_2||^2 d \lambda(x_1,x_2)\Big\}\right)^{\frac{1}{2}}.$$ Here, $\Pi(\mu,\nu) := \{\pi \in \mathcal{P}((\mathbb{R}^d)^2): \int_{\mathbb{R}^d} d\pi(\cdot,v) = \mu, \int_{\mathbb{R}^d} d\pi(u,\cdot) = \nu \}$.
$(\mathcal{P}_2(\mathbb{R}^d),\mathcal{W}_2)$ is called the Wasserstein space, where $\mathcal{P}_2(\mathbb{R}^d):= \Big\{\mu \in \mathcal{P}(\mathbb{R}^d): \int_{\mathbb{R}^d} ||x||^2 d\mu < \infty\Big\}$. Also, we use $\mathcal{P}_{2,ac}(\mathbb{R}^d)$ to denote the set of probability measures with finite second moments and are absolute continuous w.r.t. the Lebesgue measure. To simplify notation, we often denote $$\mathcal{W}_2(X_1,X_2) := \mathcal{W}_2(\mathcal{L}(X_1),\mathcal{L}(X_2)),$$ where $\mathcal{L}(X) := \mathbb{P} \circ X^{-1} \in \mathcal{P}(\mathbb{R}^d)$ is the law or distribution of $X$, $X: \Omega \rightarrow \mathcal{X} := \mathbb{R}^d$ is a random variable (or vector) with an underlying probability space $(\Omega, \mathcal{F}, \mathbb{P})$. Intuitively, one can consider the Wasserstein distance as $L^2$ distance after optimally coupling two random variables whose distributions are $\mu$ and $\nu$. That is, if the pair $(X_1,X_2)$ is an optimal coupling \cite{villani2008optimal}, then
$$\mathcal{W}_2(X_1,X_2) = ||X_1 - X_2||_{L^2} = \left(\int_{\Omega} \|X_1(\omega)-X_2(\omega)\|^2\,d\mathbb P(\omega) \right)^{1/2}.$$
Given \(\{\mu_z\}_{z\in\mathcal Z}\subset\mathcal P_2(\mathbb R^d)\) and
weights \(\lambda\in\mathcal P(\mathcal Z)\), their Wasserstein barycenter
\cite{agueh2011barycenters} is any
\[
\bar\mu\in
\argmin_{\mu\in\mathcal P_2(\mathbb R^d)}
\left\{
\int_{\mathcal Z}\mathcal W_2^2(\mu_z,\mu)\,d\lambda(z)
\right\}.
\]

If there is no danger of confusion, we will refer to the Wasserstein barycenter simply as barycenter. In the feature-unlearning setting, let
\[
    \mu_z:=\mathcal L(X\mid Z=z),
\]
and let \(\bar\mu\) be a barycenter of \(\{\mu_z\}_{z\in\mathcal Z}\). When
\(\mu_z\in\mathcal P_{2,ac}(\mathbb R^d)\), Brenier's theorem gives an optimal
transport map $T_z:\mu_z\to\bar\mu$. We define the barycentric representation by
\[
    \bar X:=T_Z(X).
\]
Then, conditionally on \(Z=z\),
\[
    \mathcal L(\bar X\mid Z=z)
    =
    (T_z)_\#\mu_z
    =
    \bar\mu.
\]
Hence \(\bar X\perp Z\). Moreover, under the absolute-continuity assumptions
used below, the Brenier maps are a.e. invertible on their supports, with inverse
maps given by the corresponding Brenier maps from \(\bar\mu\) back to \(\mu_z\).

\subsection{Proof of Lemma \ref{l:epsilon_DU_bound_mutual_info}}\label{a:proof_DU_bound_mutual_info}

\begin{proof}
    To start, let $\hat{X} := \hat{X}_{\margin}$ to simplify notation, and notice that it follows from the construction of dataset or variable pair $(X_{\margin},Z)$ that $\{Z = 0\} = \{X_{\margin} = X_0\}$ and $\{Z = 1\} = \{X_{\margin} = X_1\}$. Also, we have 
    \begin{align}
        |\mathbb{P}(Z = 0|\hat{X}) - \mathbb{P}(Z = 1|\hat{X})| & = |2 \mathbb{P}(Z = 0|\hat{X}) - 1|\\
    &  = 2 |\mathbb{P}(Z = 0|\hat{X}) - \frac{1}{2}|\\
    &  = 2 ||\mathbb{P}(Z|\hat{X}) - \mathbb{P}(Z)||_{TV},
    \end{align}
    where the third line follows from the definition of total variation distance and the prior information $\mathbb{P}(Z = 0) = \frac{1}{2}$. By taking the expectation over $\hat{X}$, we have
    \begin{align*}
        \mathbb{E}_{\hat{X}} ( |\mathbb{P}(Z = 0|\hat{X}) - \mathbb{P}(Z = 1|\hat{X})| ) & = 2\mathbb{E}_{\hat{X}} ( ||\mathbb{P}(Z|\hat{X}) - \mathbb{P}(Z)||_{TV} )\\
    & \leq 2\mathbb{E}_{\hat{X}} \left( \sqrt{\frac{1}{2}D_{\mathrm{KL}}(\mathbb{P}(Z|\hat{X})||\mathbb{P}(Z))} \right)\\
    & \leq 2 \sqrt{\frac{1}{2}\mathbb{E}_{\hat{X}} \left( D_{\mathrm{KL}}(\mathbb{P}(Z|\hat{X})||\mathbb{P}(Z)) \right)}\\
    & = 2\sqrt{\frac{1}{2}I(Z;\hat{X})}.
    \end{align*}
    Here, the second line follows from Pinsker's inequality, the third from Jensen's inequality, and the fourth from the definition of mutual information. Now, for any fixed $\epsilon \in (0,1)$, it follows from Markov's inequality that
    \begin{equation*}
        \mathbb{P}\left( \{ |\mathbb{P}(Z = 0|\hat{X}) - \mathbb{P}(Z = 1|\hat{X})| \leq \epsilon \} \right) \geq 1-\frac{2}{\epsilon}\left( \sqrt{\frac{1}{2} I(Z;\hat{X})} \right).
    \end{equation*}
    Finally, it follows from 
    $$|\mathbb{P}(Z = 0|\hat{X}) - \mathbb{P}(Z = 1|\hat{X})| \leq \epsilon \Longrightarrow \log\left(\frac{\mathbb{P}(Z = 0 \mid \hat{X})}{\mathbb{P}(Z = 1 \mid \hat{X})}\right) \leq \log\Big(\frac{1+\epsilon}{1-\epsilon}\Big)$$
    that 
    $$\big\{|\mathbb{P}(Z = 0|\hat{X}) - \mathbb{P}(Z = 1|\hat{X})| \leq \epsilon\big\} \subset \left\{ |\log\left(\frac{\mathbb{P}(Z = 0 \mid \hat{X})}{\mathbb{P}(Z = 1 \mid \hat{X})}\right)| \leq \log\Big(\frac{1+\epsilon}{1-\epsilon}\Big) \right\},$$ and
    \begin{align*}
        \mathbb{P}\left( \{ |\log\left(\frac{\mathbb{P}(Z = 0 \mid \hat{X})}{\mathbb{P}(Z = 1 \mid \hat{X})}\right)| \leq \log(\frac{1+\epsilon}{1-\epsilon}) \} \right) & \geq \mathbb{P}\left( \{ |\mathbb{P}(Z = 0|\hat{X}) - \mathbb{P}(Z = 1|\hat{X})|\leq \epsilon \} \right)\\
    & \geq 1-\frac{2}{\epsilon}\left( \sqrt{\frac{1}{2} I(Z;\hat{X})} \right).
    \end{align*}
    That completes the proof.
\end{proof}

\subsection{Proof of Lemma \ref{l:Finest_Sigma_Algebra}}\label{a:fine_algebra}

\begin{proof}
    We provide the argument for the lifted statement. By the absolute-continuity assumptions, the barycenter \(\bar P\) is absolutely continuous, and Brenier's theorem yields optimal maps \(T_z:P_z\to\bar P\). Moreover, the corresponding Brenier maps are a.e. invertible on their supports. Denote the a.e. inverse of \(T_z\) by $S_z:\bar P\to P_z$. Thus,
    \begin{equation*}
        S_z(T_z(x))=x \qquad P_z\text{-a.e.}
    \end{equation*}
    Since \(\bar X=T_Z(X)\), the pair \((\bar X,Z)\) is a measurable function of \((X,Z)\). Hence,
    \begin{equation*}
        \sigma(\bar X,Z)\subset \sigma(X,Z).
    \end{equation*}
    Conversely, by the a.e. inverse relation, we also have
    \begin{equation*}
        X=S_Z(\bar X), \qquad \mathbb P\text{-a.s.}.
    \end{equation*}
    Therefore \((X,Z)\) is a measurable function of \((\bar X,Z)\), up to null sets, and hence 
    \begin{equation*}
        \sigma(X,Z)\subset \sigma(\bar X,Z).
    \end{equation*}
    Combining the two inclusions gives
    \begin{equation*}
        \sigma(X,Z)=\sigma(\bar X,Z), \qquad \mathbb P\text{-a.s.}.
    \end{equation*}
    Finally, let \(\hat X=f(X,Z)\) be any admissible deterministic unlearning outcome. Then \((\hat X,Z)\) is a measurable function of \((X,Z)\), so that
    \begin{equation*}
        \sigma(\hat X,Z)\subset \sigma(X,Z).
    \end{equation*}
    Using the equality above, we obtain
    \begin{equation*}
        \sigma(\hat X,Z) \subset \sigma(X,Z) = \sigma(\bar X,Z).
    \end{equation*}
    That completes the proof.
\end{proof}

\subsection{Sigma-algebra and information}\label{a3:Sigma-Algebra and Information}

In probability theory, a probability space is often represented as a triple $(\Omega, \Sigma, \mathbb{P})$, where $\Omega$ is the sample space, $\Sigma$ is the sigma-algebra (a collection of subsets of \(\Omega\) closed under complements and countable unions), and $\mathbb{P}: \Sigma \rightarrow [0,1]$ is a probability measure that assigns probabilities to each event in $\Sigma$.

The same sample space can be associated with different sigma-algebras, resulting in different measurable spaces, and, after specifying a probability measure, different probability spaces. We say that a sigma-algebra $\Sigma_1$ is finer than $\Sigma_2$, denoted $\Sigma_2 \subset \Sigma_1$, if $\Sigma_1$ contains all events in $\Sigma_2$. Conversely, we say $\Sigma_1$ is coarser than $\Sigma_2$ if $\Sigma_1$ contains fewer events than $\Sigma_2$.

A random variable or random vector $X$ is a measurable function from the probability space to $\mathbb{R}^d$ (or $\mathbb{C}^d$), $X: \Omega \rightarrow \mathbb{R}^d$. The sigma-algebra generated by $X$, denoted by $\sigma(X)$, comprises all possible events in $\Omega$ that could be defined based on the image of $X$ in $\mathbb{R}^d$ (or $\mathbb{C}^d$). Thus, if $X$ generates a finer sigma-algebra than another variable $X’$, denoted $\sigma(X’) \subset \sigma(X)$, then $X$ contains more events and, therefore, more information than $X’$.

In modern probability theory, sigma-algebras facilitate the construction of probability measures, especially in countably or uncountably infinite spaces (as the concept is trivial in finite spaces). They satisfy certain axioms, including countable additivity, that link the set algebra of events in the space to the algebra of their probabilities, particularly through continuity properties.

\subsection{Proof of Lemma \ref{l:monotonicity}}\label{a:proof_monotonicity_sigma}

\begin{proof}
Assume that
\[
    \sigma(X_1)\subset \sigma(X_2).
\]
Then \(X_1\) is a measurable post-processing of \(X_2\). Equivalently, under
the standard Borel assumptions, there exists a measurable map \(g\) such that
\[
    X_1=g(X_2)
    \qquad \mathbb P\text{-a.s.}
\]

For mutual information, this implies the Markov chain $Y \rightarrow X_2 \rightarrow X_1$. Hence, by the data-processing inequality,
\[
    I(Y;X_1)\le I(Y;X_2),
\]
for arbitrary measurable \(Y\), whenever the mutual informations are
well-defined, possibly taking the value \(+\infty\).

For the conditional entropy statement, the discrete finite-entropy case follows
from
\[
    I(Y;X_s)=H(Y)-H(Y\mid X_s),
    \qquad s=1,2,
\]
together with the mutual-information monotonicity above. In the continuous case, it follows from assumption that the conditional laws
\(\mathcal L(Y\mid X_s)\) admit densities \(p_{Y\mid X_s}\), and the
conditional differential entropies are finite. Since
\(\sigma(X_1)\subset\sigma(X_2)\), we have
\[
    I(Y;X_2\mid X_1)\ge 0.
\]
By the differential-entropy chain rule, we have
\[
    I(Y;X_2\mid X_1)
    =
    h(Y\mid X_1)-h(Y\mid X_2).
\]
Therefore,
\[
    h(Y\mid X_2)\le h(Y\mid X_1).
\]

Next, fix a measurable target \(Y:\Omega\to\mathcal Y\) and an event
\(A\in\mathcal B_{\mathcal Y}\). Let $U:=\mathbbm 1_{\{Y\in A\}}$. Since \(\sigma(X_1)\subset\sigma(X_2)\), the tower property gives
\[
    \mathbb E(U\mid X_1)
    =
    \mathbb E\!\left(
        \mathbb E(U\mid X_2)
        \,\middle|\, X_1
    \right).
\]
Using the \(L^2\)-contraction property of conditional expectation, we obtain
\[
    \big\|\mathbb E(U\mid X_1)\big\|_{L^2}^2
    \le
    \big\|\mathbb E(U\mid X_2)\big\|_{L^2}^2.
\]
Since
\[
    \mathbb E(U\mid X_s)
    =
    \mathbb P(Y\in A\mid X_s),
    \qquad s=1,2,
\]
it follows that
\[
    \big\|\mathbb P(Y\in A\mid X_1)\big\|_{L^2}^2
    \le
    \big\|\mathbb P(Y\in A\mid X_2)\big\|_{L^2}^2.
\]

Finally, let \(Y\in L^2(\Omega;\mathbb{R}^d)\). Again, by the tower property,
\[
    \mathbb E(Y\mid X_1)
    =
    \mathbb E\!\left(
        \mathbb E(Y\mid X_2)
        \,\middle|\, X_1
    \right).
\]
Applying the \(L^2\)-contraction property of conditional expectation gives
\[
    \big\|\mathbb E(Y\mid X_1)\big\|_{L^2}^2
    \le
    \big\|\mathbb E(Y\mid X_2)\big\|_{L^2}^2.
\]
That completes the proof.
\end{proof}

\subsection{Intuitive insights into Theorem \ref{th:optimal_solution}}

In feature unlearning, the released representation \(\hat X\) is required to
satisfy \(\hat X\perp Z\), so that \(\hat X\) itself does not expose the
sensitive or unlearned feature \(Z\). At the same time, the predictive
relationship between the downstream target \(Y\) and the representation may be
different across \(Z\)-slices. Therefore, measuring utility only through
\(I(Y;\hat X)\) can underestimate the task information preserved within each
slice. We instead evaluate utility through the lifted quantity
\(I(Y;\hat X,Z)\). Since
\[
    I(Y;\hat X,Z)=I(Y;Z)+I(Y;\hat X\mid Z),
\]
and \(I(Y;Z)\) is fixed by the data distribution, maximizing
\(I(Y;\hat X,Z)\) is equivalent to maximizing the within-slice task information
\(I(Y;\hat X\mid Z)\). Thus, the lifted formulation separates the two roles of
\(Z\): the released representation \(\hat X\) must not reveal \(Z\), while
utility is evaluated slice-wise through \((\hat X,Z)\), as discussed in Section~\ref{s2:feature_unlearning}.

Under the assumptions of Lemma~\ref{l:Finest_Sigma_Algebra}, the barycentric
representation \(\bar X=T_Z(X)\) satisfies
\[
    \sigma(\hat X,Z)\subset \sigma(\bar X,Z)
\]
for every admissible representation \(\hat X=f(X,Z)\). Thus, among all
admissible feature-unlearning representations, \((\bar X,Z)\) preserves the
finest lifted information available from \((X,Z)\), while \(\bar X\) itself is independent of \(Z\). This lifted maximality has several consequences.

\begin{itemize}[leftmargin=*]
    \item \emph{Utility is evaluated with the task context retained.}
    The constraint \(\hat X\perp Z\) ensures that the released representation
    does not reveal \(Z\). At the same time, the lifted pair \((\hat X,Z)\)
    retains the task context needed to evaluate utility. The theorem therefore
    identifies \(\bar X\) as a representation whose release removes marginal
    information about \(Z\), while whose lifted version \((\bar X,Z)\) preserves
    maximal task-relevant information.
    \item \emph{Conditional uncertainty about the target is minimized.}
    For finite-valued targets \(Y\), the sigma-algebra inclusion implies
    \[
        H(Y\mid \bar X,Z)\le H(Y\mid \hat X,Z)
    \]
    for every admissible \(\hat X=f(X,Z)\). Thus, after the slice variable \(Z\)
    is included for utility evaluation, \(\bar X\) leaves the least residual
    uncertainty about \(Y\).
    \item \emph{Lifted mutual information is maximized.}
    Since \((\bar X,Z)\) generates the finest lifted sigma-algebra among
    admissible representations, data processing gives
    \[
        I(Y;\hat X,Z)\le I(Y;\bar X,Z).
    \]
    Hence, \(\bar X\) preserves the largest amount of information about any
    downstream target \(Y\) when utility is evaluated in the lifted sense.
    \item \emph{Posterior concentration is maximized.}
    The expected posterior divergence
    \[
        \mathbb E\!\left[
        D_{\mathrm{KL}}\!\left(
        \mathbb P(Y\mid \hat X,Z)\,\Vert\,\mathbb P(Y)
        \right)
        \right]
    \]
    equals \(I(Y;\hat X,Z)\) whenever the quantities are finite. Therefore,
    the same lifted maximality means that the posterior law of \(Y\) given
    \((\bar X,Z)\) is, on average, the most concentrated relative to the prior
    law of \(Y\).
    \item \emph{Conditional probability and conditional mean energies are
    maximized.}
    For any event \(A\in\mathcal B_{\mathcal Y}\),
    \[
        \big\|\mathbb P(Y\in A\mid \hat X,Z)\big\|_{L^2}^2
        \le
        \big\|\mathbb P(Y\in A\mid \bar X,Z)\big\|_{L^2}^2.
    \]
    Thus events generated by \(Y\) are predicted at least as sharply from
    \((\bar X,Z)\) as from any other admissible lifted representation.
    Similarly, for \(Y\in L^2\),
    \[
        \big\|\mathbb E(Y\mid \hat X,Z)\big\|_{L^2}^2
        \le
        \big\|\mathbb E(Y\mid \bar X,Z)\big\|_{L^2}^2.
    \]
    Hence, \((\bar X,Z)\) also maximizes explained conditional-mean energy.
\end{itemize}

In summary, the theorem separates the two roles of \(Z\): the released
representation \(\hat X\) must remove marginal information about \(Z\), while
the lifted pair \((\hat X,Z)\) is used to measure how much task-relevant
information is preserved on each of the $Z$-slices. The barycentric representation \(\bar X\) is optimal
because it satisfies both requirements: \(\bar X\perp Z\), and
\((\bar X,Z)\) is maximal in the lifted sigma-algebra order.

\end{document}